\documentclass{article}
\usepackage{iclr-2027-style-files/iclr2027/iclr2027_conference,times}
\usepackage[colorlinks=true,urlcolor=blue,linkcolor=red,citecolor=teal,backref=page]{hyperref}
\usepackage{array}
\usepackage{makecell}
\usepackage{float,xcolor,tcolorbox,longtable, tabularx}
\usepackage[textsize=tiny]{todonotes}

\usepackage{mathtools}
\usepackage{amsmath, amssymb, natbib, graphicx, url}
\usepackage{algorithm}
\usepackage{algpseudocode}
\usepackage{wrapfig}
\usepackage{booktabs}

\usepackage{tcolorbox}
\usepackage[shortlabels]{enumitem}
\usepackage{dsfont, bm}
\usepackage{footnote}
\usepackage{caption}
\usepackage{subcaption}
\usepackage{mwe}
\usepackage{cancel}
\usepackage{multirow}

\makesavenoteenv{tabular}
\makesavenoteenv{table}

\newcommand{\E}{\mathbb{E}}

\newcommand{\algo}{\textsc{V-share}\xspace}

\usepackage{amsthm}
\newtheorem{theorem}{Theorem}[section]
\newtheorem{corollary}[theorem]{Corollary}
\newtheorem{lemma}[theorem]{Lemma}
\newtheorem{proposition}[theorem]{Proposition}

\newtheorem{remark}{Remark}

\newtheorem{assumption}{Assumption}

\makeatletter
\newtheorem*{rep@theorem}{\rep@title}
\newcommand{\newreptheorem}[2]{%
	\newenvironment{rep#1}[1]{%
		\def\rep@title{\textbf{#2} \ref{##1}}%
		\begin{rep@theorem}}%
		{\end{rep@theorem}}}
\makeatother
\newreptheorem{theorem}{Theorem}
\newreptheorem{lemma}{Lemma}
\newreptheorem{proposition}{Proposition}
\newreptheorem{assumption}{Assumption}
\newreptheorem{corollary}{Corollary}

\allowdisplaybreaks%
\newif\ifsinglecol
\singlecolfalse

\usepackage[nohints]{minitoc}

\usepackage{mathrsfs}

\usepackage{tikz}
\usetikzlibrary{arrows.meta}
\usetikzlibrary{automata}
\usetikzlibrary{bayesnet, arrows, calc, shapes, backgrounds,arrows,decorations.pathmorphing,fit,positioning}
\tikzset{
   container/.style = {rectangle, rounded corners, draw=yellow, dashed,
fit=#1, inner sep=6mm, node contents={}},
circle-label/.style = {circle, draw}
        }
\tikzset{box/.style={draw, diamond, thick, text centered, minimum height=0.5cm, minimum width=1cm, text width=0.9cm}}
\tikzset{line/.style={draw, thick, -latex'}}
\allowdisplaybreaks
\usepackage{pgfplots}

\usepackage{layouts}

\def\bbR{\mathbb{R}}

\def\cA{\mathcal{A}}

\def\cO{\mathcal{O}}

\def\cS{\mathcal{S}}

\def\E{\mathbb{E}}

\def\det{{\sf det}}

\usepackage{amsmath,amsfonts,bm}

\def\eqref#1{equation~\ref{#1}}

\def\1{\bm{1}}

\DeclareMathAlphabet{\mathsfit}{\encodingdefault}{\sfdefault}{m}{sl}
\SetMathAlphabet{\mathsfit}{bold}{\encodingdefault}{\sfdefault}{bx}{n}

\newcommand{\R}{\mathbb{R}}

\usepackage{xcolor}

\title{\algo: Provable Convergence Guarantees for Advantage-based Algorithms}

\author{
Kintan Saha$^{\dagger}$,
Hamed Masoudi$^{*}$,
Xinyi Sheng$^{*}$,
Dominik Baumann$^{*}$,\\
\textbf{Joni Pajarinen}$^{*}$\textbf{,}
\textbf{Arsenii Mustafin}$^{*}$\\
\\
$^{\dagger}$Indian Institute of Science\\
\texttt{kintansaha@iisc.ac.in}\\
$^{*}$Aalto University\\
\texttt{\{hamed.masoudi,xinyi.sheng,dominik.baumann,}\\
\texttt{joni.pajarinen,arsenii.mustafin\}@aalto.fi}
}

\iclrfinalcopy
\begin{document}

\maketitle

\setcounter{parttocdepth}{3}
\doparttoc 
\faketableofcontents 
\begin{abstract} 

In reinforcement learning, advantage-based value-learning methods, unlike standard Q-learning, share information across actions by decomposing the action-value function into a state-dependent value component and action-specific advantages. While this mechanism has proven effective in both tabular and deep reinforcement learning, its theoretical understanding remains incomplete. We introduce \textbf{\algo}, an advantage-based value-learning algorithm that combines cross-action value sharing with an importance-weighted Bellman-residual correction. To our knowledge, we provide the first complete theoretical characterization of an advantage-based algorithm, including both asymptotic convergence and non-asymptotic finite-sample guarantees. We identify a regime in which \algo mitigates the rare-action bottleneck of asynchronous Q-learning and uncover a contraction--variance tradeoff between stronger deterministic contraction and increased importance-weighting noise. Motivated by this tradeoff, we develop decreasing residual-preconditioning schedules that combine faster early contraction with lower asymptotic noise. We further develop a two-timescale analysis that removes the single-timescale residual-weight threshold while preserving convergence guarantees. Experiments on discrete Markov decision processes (MDPs) and deep reinforcement-learning benchmarks corroborate our results.
\end{abstract}

\section{Introduction}


Reinforcement learning (RL) \citep{sutton1998reinforcement} algorithms learn through trial and error: they try actions, receive a reward in return indicating the quality of that action and, over time, try to learn an optimal mapping from the current state to the best action to take, which we call the policy.
Q-learning~\citep{watkins1992q} is one of the canonical RL algorithms that learns a Q-function as an intermediate object.
The Q-function estimates, for every state-action pair, the long-term return, i.e., the sum of received rewards over a trajectory, obtained by taking that action in that state and behaving optimally thereafter.
If the state and action spaces are finite, we can explicitly store estimated action-values for each state-action pair.
In the standard asynchronous setting, the learner observes one transition at a time and updates only the estimate for the sampled state-action pair. 
This simple coordinate-wise structure, together with the contraction of the Bellman optimality operator, underlies the classical convergence theory of Q-learning \citep{tsitsiklis1994asynchronous, jaakkola1993convergence}. 
The same basic principle also proved remarkably successful with function approximation through Deep Q-Networks (DQN)~\citep{mnih2015human}, showing that Q-learning can scale to high-dimensional control problems.

With function approximation, however, Q-learning becomes more susceptible to instability, and estimating the full action-value of every action separately can be inefficient when much of the information is shared. These observations
motivated the dueling architecture of \citet{wang2016dueling}, which represents the Q-function as the sum of a state-value function that assigns a value to every state defined by the long-term return obtained by behaving optimally after visiting said state, and an action-dependent advantage function learned separately. Intuitively,
rather than repeatedly estimating the full value of every action, the learner can first estimate
a potentially large value component common to all actions and then learn the comparatively
smaller action-dependent advantage components. This value-advantage perspective has subsequently proved useful both
empirically and theoretically. Advantage-based architectures improve learning
performance in deep reinforcement learning \citep{wang2016dueling, nagarajan2026accelerating}, while
recent tabular advantage-based algorithms exhibit accelerated learning relative to
standard Q-learning
\citep{tang2023va,daley2025analysis,lee2026spectral}. 

The algorithmic idea itself dates back to Advantage Updating
\citep{baird1993advantage}, and the central mechanism of all these approaches is \emph{value sharing}: updating the state-value component
simultaneously changes the represented values of all actions at the state being updated, allowing information
obtained from one action to propagate immediately to the others.

Value sharing, however, comes with a theoretical cost. In Q-learning, each
state-action value is updated through its own temporal-difference error, which leads
to a simple contraction-based analysis. Advantage-based methods couple the action-values
within a state through a shared value component, and this coupling breaks that standard
argument. Existing analyses therefore rely on stronger structural or sampling assumptions,
such as synchronous updates over all state-action pairs, to establish
convergence in the tabular setting
\citep{lee2026spectral,tang2023va}. This raises a natural question:

\emph{Can we retain the acceleration benefits of cross-action value sharing while recovering
a tractable convergence theory and finite-sample guarantees under stochastic sampling?}



\textbf{Contributions.}
We answer this question through \algo{}, an advantage-based value-learning
algorithm that augments cross-action value sharing with a tunable,
importance-weighted Bellman-residual correction parametrized by $\beta$. This leads to a residual
preconditioning view of value sharing, showing when cross-action coupling accelerates learning, when it breaks the usual contraction argument, and how to recover convergence guarantees.


\begin{enumerate}[leftmargin=*,nosep]
    \item We show that the iterates generated by \algo are stable and converge almost surely to the optimal action-value function $Q^*$ under Markovian sampling and suitable parameter choices.

    \item
    We derive finite-time guarantees using constant- and diminishing-stepsize schedules.
    The resulting rates quantify a contraction-variance tradeoff and yield the first finite-time guarantees of an advantage-based algorithm under Markovian sampling. We show how decreasing-$\beta$ and decay-to-zero stepsize schedules retain strong early
    contraction while reducing terminal stochastic error.

    \item
     We also analyze a two-timescale
    formulation that removes the single-timescale large-$\beta$ requirement and show convergence guarantees for all $\beta>0$. The analysis requires extending
    single-timescale Markovian stochastic approximation to vanishing drift
    perturbations, yielding results that may be of independent interest.

    \item
    We evaluate \algo  and its variants on tabular MDPs, where the experiments expose
    the predicted effects of value sharing. We
    additionally extend \algo  to deep reinforcement learning and compare it with recent
    advantage-based methods on MinAtar \citep{young2019minatar} and Classic Control tasks.
\end{enumerate}

\textbf{Related work.}
Since the number of Q-learning variants and analyses is vast, we focus here only on the key theoretical results, major practical milestones, and methods that explicitly learn advantages. The term \emph{advantage-based} is used in several different senses in
reinforcement learning. Policy-gradient and actor--critic methods such as PPO
\citep{schulman2017proximal} use advantage estimates to construct policy
updates, often through generalized advantage estimation
\citep{schulman2015high}. However, the advantage is used primarily as an estimator of
relative action quality to train the policy. Our focus is instead on
\emph{advantage-based value learning}: algorithms that explicitly represent
action values through a shared state-value component and action-specific
advantages, so that updating the value component propagates information across
actions. This viewpoint dates back to Advantage Updating \citep{baird1993advantage}
and underlies the dueling architecture of \citet{wang2016dueling}.

Recent work has revisited this value--advantage decomposition in both tabular
and deep RL \citep{tang2023va,daley2025analysis, nagarajan2026accelerating}. These methods  exploit the shared value component to
accelerate learning, while \cite{lee2026spectral} studies the dynamics of tabular
dueling Q-learning through spectral arguments. \cite{pmlr-v258-mustafin25a} introduces the RB-S algorithm, which uses advantage-preserving transformations to recover an optimal policy. Our approach is complementary:
we modify the coupled value-sharing dynamics through residual preconditioning
and analyze the resulting algorithm under Markovian
sampling, obtaining both asymptotic and finite-time guarantees. Technically,
our analysis builds on the ODE method for stochastic approximation
\citep{doi:10.1137/S0363012997331639,liu2025ode} and the generalized Moreau-envelope framework
for contractive stochastic approximation
\citep{NEURIPS2020_5d44ee6f,chen2024lyapunov}.

For brevity, we henceforth refer to advantage-based value-learning algorithms
as \emph{advantage-based algorithms}.


\section{Problem Setup}
\label{sec:background}

We first formalize the MDP and sampling model, and then express Q-learning and value-sharing updates that motivate our analysis.

\textbf{Discounted MDP and sampling model.}
A \textbf{finite discounted Markov decision process (MDP)} is defined by 
$(\mathcal S,\mathcal A,P,r,\gamma)$, where $\mathcal S$ and $\mathcal A$
are finite state and action spaces, respectively, $P(s'|s,a)$ is the transition kernel, $r(s,a) \in [0,1]$ is the bounded
reward, and $\gamma\in[0,1)$ is the discount factor. 
The goal is to find a policy $\pi:\,\mathcal S\to\mathcal A$ that maximizes the expected cumulative discounted reward $\E[\sum_{t=0}^\infty\gamma^tr_t]$.
For a given MDP, we define the Q-function as $Q(s,a) = \mathbb{E}_{\pi} [\sum_{k=0}^\infty\gamma^kr_{t+k}\mid s_t=s, a_t=a]$.
The
\textbf{Bellman optimality operator} $\mathcal T$ is
$(\mathcal TQ)(s,a)
=
r(s,a)+\gamma\sum_{s'}P(s'|s,a)\max_{a'}Q(s',a')$.
Its unique fixed point is the \textbf{optimal action-value function}
$Q^\star$.

Synchronous algorithms update all state-action pairs at every iteration,
which is computationally expensive and unrealistic in large MDPs. In the
asynchronous setting, updates are driven by individual transitions sampled
along a trajectory. Our algorithms operate in this regime, learning from a
single trajectory generated by a fixed behavior policy $\pi_b$. We make a standard assumption about this policy:

\begin{assumption}[Ergodic behavior policy]
\label{ass:behavior}
The behavior policy $\pi_b : \cS \rightarrow \Delta(\cA)$ has full support, $\pi_b(a|s)>0$ for every $(s,a)$, and the
state Markov chain induced by $\pi_b$ is irreducible and aperiodic. Hence it admits a unique
stationary distribution $d\in \R^{|\cS|}$ satisfying $d(s)>0$ for every $s$, and  stationary
state-action distribution $\mu(s,a)=d(s)\pi_b(a|s)$. Further, let
$Y_t:=(s_{t-1},a_{t-1},s_t)$ denote the Markov chain of observed transitions,
with stationary distribution $d_Y(s,a,s')=d(s)\pi_b(a|s)P(s'|s,a)$. For
$\delta>0$, define its mixing time by
$t_\delta:=\min\{m\ge 0:\max_y||\mathbb P(Y_m\in\cdot\mid Y_0=y)-d_Y||_{\mathrm{TV}}\le\delta\}$, where $||\cdot||_{\rm TV}$ is the total variation distance.
Since ${Y_t}$ is a finite-state ergodic Markov chain, it is geometrically
mixing, and hence $t_\delta=O(\log(1/\delta))$.
\end{assumption}
At iteration $t$, the sampled transition is
$(s_t,a_t,r(s_t, a_t),s_{t+1})$, where $a_t\sim \pi_b(\cdot|s_t)$ and
$s_{t+1}\sim P(\cdot|s_t,a_t)$. The Markovian sample is defined as $Y_{t+1}:=(s_{t},a_{t},s_{t+1})$. For the current iterate $Q_t$,
define the \textbf{temporal-difference (TD) error}
$
\delta_t
=
r(s_t, a_t)
+\gamma\max_{a'}Q_t(s_{t+1},a')
-Q_t(s_t,a_t).
$
For a stochastic-approximation (SA) recursion
$Q_{t+1}=Q_t+\alpha_t\widehat h_t(Q_t, Y_{t+1})$, we call
$h(Q):=\mathbb E_{Y \sim d_Y}[\widehat h_t(Q, Y)]$ its \textbf{stationary mean operator} and
$\dot Q=h(Q)$ its associated \textbf{limiting ODE}
\citep{borkar2022stochastic}. These dynamics let us compare Q-learning with cross-action value sharing.

\textbf{Q-learning and value sharing.}
Asynchronous \textbf{Q-learning} with stepsize schedule $\{\alpha_t\}_{t=0}^{\infty}$ updates only the sampled
state-action coordinate,
\begin{align*}
Q_{t+1}(s,a)
=
Q_t(s,a)
+
\alpha_t\mathbf 1\{s_t=s,a_t=a\}\delta_t, 
\end{align*}
where $\mathbf 1$ denotes the indicator function.
Let
$D_\mu:=\operatorname{diag}\{\mu(s,a)\}_{s,a}
=\operatorname{diag}\{d(s)\pi_b(a|s)\}_{s,a}$.
Its limiting ODE is
$
\dot Q=D_\mu(\mathcal TQ-Q).
$
Thus asynchronous Q-learning applies a diagonal sampling preconditioner to the
\textbf{Bellman residual} $\mathcal TQ-Q$, preserving the coordinate-wise
structure underlying its standard contraction analysis.

We are interested instead in algorithms that share information across actions.
To formalize this, consider the \textbf{value-advantage decomposition}
$Q_t(s,a)=V_t(s)+A_t(s,a)$ and the shared-value update of hard Regularized Dueling Q-learning (RDQ)
\citep{daley2025analysis},
\begin{align}
\label{eq:hard_rdq}
V_{t+1}(s_t)
=V_t(s_t)+\alpha_t\delta_t,\quad 
A_{t+1}(s_t,a)
=
A_t(s_t,a)
+
\alpha_t\mathbf 1\{a_t=a\}\delta_t .
\end{align}
Although $(V_t,A_t)$ is not uniquely identifiable, by defining $(LV)(s,a):=V(s)$, the iterate
$Q_t=LV_t+A_t$ is well defined.

To understand the effect of value sharing on the $Q$-dynamics, let
$b_s:=(\pi_b(a|s))_{a\in\mathcal A}$, \textbf{e} be the all-ones vector in $\R^{|\cA|}$, 
$B_s:=\textbf{e} b_s^\top$, and
$B:=\operatorname{blkdiag}(B_s:s\in\mathcal S)$. Also define the lifted
state-visitation matrix
$D:=\operatorname{diag}\{d(s):(s,a)\in\mathcal S\times\mathcal A\}$.
The  $Q$-iterate then satisfies
$$
Q_{t+1}(s,a)-Q_t(s,a)
=
\alpha_t\mathbf 1\{s_t=s\}
\bigl(1+\mathbf 1\{a_t=a\}\bigr)\delta_t,
$$
with limiting ODE
$
\dot Q=(DB+D_\mu)(\mathcal TQ-Q).
$

The distinction from asynchronous Q-learning is the following: The term
$D_\mu(\mathcal TQ-Q)$ performs the usual action-specific update, whereas
$DB(\mathcal TQ-Q)$ propagates information across all actions at the visited
state. This cross-action coupling is the value-sharing mechanism
we study in this paper. Appendix~\ref{app:background-derivations} derives the stationary mean operators of asynchronous Q-learning and hard RDQ and compares the
structure of their respective preconditioners and limiting ODEs.

\begin{algorithm}[H]
\caption{\algo{}}
\label{alg:rpq_single}
\begin{algorithmic}[1]
\State Initialize $V_0,A_0$, state $S_0$, stepsizes $\{\alpha_t\}_{t=0}^\infty$, and choose $\beta>0$.
\For{$t=0,1,\ldots$}
    \State Sample $a_t\sim \pi_b(\cdot\mid s_t)$.
    \State Observe $R_{t+1}=r(s_t,a_t)$ and
    $s_{t+1}\sim P(\cdot\mid s_t,a_t)$.
    \State Compute
    $$
    \delta_t
    =
    R_{t+1}
    +
    \gamma\max_{a'}Q_t(s_{t+1},a')
    -
    Q_t(s_t,a_t).
    $$
    \State Update, for all $s\in\cS$ and $a\in\cA$,
    $$
    V_{t+1}(s)
    =
    V_t(s)
    +
    \mathbf 1\{s_t=s\}\alpha_t\delta_t, \quad 
    A_{t+1}(s,a)
    =
    A_t(s,a)
    +
    \alpha_t
    \frac{\beta\,\mathbf 1\{s_t=s,a_t=a\}}
    {\pi_b(a\mid s_t)}
    \delta_t.
    $$
\EndFor
\end{algorithmic}
\end{algorithm}
\section{Main Results}
The value-sharing update \eqref{eq:hard_rdq} propagates a
sampled TD error across all actions at the visited state, but introduces
cross-action coupling that prevents the standard Q-learning contraction
argument. We therefore introduce \textbf{\algo} in section~\ref{sec:single_rpq} (Algorithm~\ref{alg:rpq_single}), which augments the shared value update with an
importance-weighted action-specific Bellman-residual correction. Our analysis casts \algo as a stochastic approximation scheme for a fixed-point iteration. In Section~\ref{sec:asymptotic_guarantees}, we prove almost-sure convergence of \algo
and provide finite-time guarantees in Section~\ref{sec:finite_time}.
Finally, we investigate convergence properties when we decrease the parameter $\beta$ of \algo over time in Section~\ref{sec:beta-schedules} and under a two-timescale formulation in Section~\ref{sec:two_timescale}.

\subsection{\algo  Algorithm}
\label{sec:single_rpq}

Let $\beta>0$ denote the \textbf{residual-preconditioning parameter}.
Given a sample $(s_t,a_t,r(s_t, a_t),s_{t+1})$ generated by the behaviour policy $\pi_b$, \algo  performs
\begin{align}
V_{t+1}(s_t)
&=V_t(s_t)+\alpha_t\delta_t,
&
A_{t+1}(s_t,a)
&=
A_t(s_t,a)
+
\alpha_t
\frac{\beta\,\mathbf 1\{a_t=a\}}{\pi_b(a|s_t)}
\delta_t,
\label{eq:rpq_va_update}
\end{align}
with $Q_t(s,a)=V_t(s)+A_t(s,a)$ and all unvisited-state coordinates
unchanged.

The induced action-value recursion admits the following stochastic
approximation (SA) form and limiting ODE,
\begin{equation}
Q_{t+1}
=
Q_t+\alpha_t
\bigl(h_\beta(Q_t)+M_{t+1}\bigr),
\quad
h_\beta(Q)
=
D(B+\beta I)(\mathcal TQ-Q), \quad \dot Q=h_\beta(Q),
\label{eq:rpq_sa}
\end{equation}
where $M_{t+1}$ denotes the Markovian noise induced by the trajectory
$\{Y_t\}$. The term $DB$ implements cross-action value sharing, whereas
$\beta D$ provides the diagonal residual correction. We show that the preconditioner matrix $D(B+\beta I)$ is invertible, thus ensuring that $Q^*$ is the fixed point of the limiting ODE in Appendix~\ref{app:rpq_contraction}.
\subsection{Asymptotic Guarantees}
\label{sec:asymptotic_guarantees}
\textbf{A contractive fixed-point representation.}
Our convergence analysis will use the Lyapunov framework for
\emph{contractive stochastic approximation} \citep{chen2024lyapunov}. To place
\eqref{eq:rpq_sa} in this form, we introduce an auxiliary analysis parameter
$\eta>0$ and define the \textbf{\algo  fixed-point map}
\begin{equation}
F_{\eta,\beta}(Q)
:=
Q+\eta h_\beta(Q).
\label{eq:rpq_fixed_point_map}
\end{equation}
Using Equation~\ref{eq:rpq_fixed_point_map}, Equation~\ref{eq:rpq_sa} can be rewritten as
\begin{equation}
\label{eq:rpq_contraction}
Q_{t+1}
=
Q_t+
\frac{\alpha_t}{\eta}
\left[F_{\eta,\beta}(Q_t)-Q_t+\eta M_{t+1}
\right].
\end{equation}
We show that $F_{\eta,\beta}$ is a contraction in the sup-norm
(Lemma~\ref{lem:app_rpq_contraction} in Appendix~\ref{app:rpq_contraction})
whenever
$\kappa_\beta
:=
\min_{s,a}
d(s)\big[(1-\gamma)\beta-(1+\gamma)+2\pi_b(a|s)\big]
>0$ and $\eta \le L_\beta^{-1}, L_\beta
:=
\max_{s,a}d(s)\bigl(\beta+\pi_b(a|s)\bigr)$. This contraction is used to establish the asymptotic convergence of the algorithm as follows.

\begin{theorem}[Almost-sure convergence of \algo under Markovian sampling]
\label{thm:rpq_asymptotic}
Under Assumption \ref{ass:behavior}, let $\kappa_\beta>0$, and
$\alpha_t=\alpha_0/(t+h)^p$ for some $p\in(1/2,1]$ and $\alpha_0, h>0$.
Then, the \algo iterates (\ref{eq:rpq_va_update}) are almost surely bounded and satisfy
$$
Q_t\to Q^\star
\qquad\text{a.s.}
$$
\end{theorem}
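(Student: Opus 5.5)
The plan is to cast the $Q$-recursion \eqref{eq:rpq_contraction} as a contractive stochastic approximation with Markovian noise and invoke the almost-sure convergence machinery of \citet{chen2024lyapunov} (equivalently, the ODE/Borkar--Meyn route of \citet{doi:10.1137/S0363012997331639,liu2025ode}).

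Write the update as $Q_{t+1}=Q_t+\alpha_t \widehat h(Q_t,Y_{t+1})$, with
$\widehat h(Q,y)(s,a)=\mathbf 1\{s_t=s\}\bigl(1+\beta\mathbf 1\{a_t=a\}/\pi_b(a|s)\bigr)\bigl(r(s,a_t)+\gamma\max_{a'}Q(s_{t+1},a')-Q(s,a_t)\bigr)$. The first step is to check three facts. First, the stationary mean $\E_{d_Y}\widehat h(Q,Y)=h_\beta(Q)$. Second, $\widehat h(\cdot,y)$ is Lipschitz uniformly in $y$: its constant is at most $(1+\beta/\min\pi_b)(1+\gamma)$. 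Third, $\|\widehat h(Q,y)\|_\infty\le c(1+\|Q\|_\infty)$. With the fixed-point map $F_{\eta,\beta}$ of \eqref{eq:rpq_fixed_point_map}, Lemma~\ref{lem:app_rpq_contraction} gives, for $\kappa_\beta>0$ and $\eta\le L_\beta^{-1}$, a sup-norm contraction with modulus $1-\eta\kappa_\beta$ (or some $\rho<1$). Its unique fixed point is $Q^\star$, because $D(B+\beta I)$ is invertible and hence $h_\beta(Q)=0$ iff $\mathcal TQ=Q$. Thus $\bar F(Q):=F_{\eta,\beta}(Q)$ is the averaged operator of the sample map $\widehat F(Q,y)=Q+\eta\widehat h(Q,y)$, and the recursion reads $Q_{t+1}=Q_t+(\alpha_t/\eta)[\widehat F(Q_t,Y_{t+1})-Q_t]$ with stepsize $\alpha_t/\eta$. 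The exponent $p\in(1/2,1]$ satisfies the Robbins--Monro conditions.

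\textbf{Boundedness.} I would follow the Lyapunov approach with the generalized Moreau envelope $M(Q)=\min_u\{\tfrac12\|u\|_\infty^2+\tfrac1{2\theta}\|Q-Q^\star-u\|_s^2\}$, taking $\|\cdot\|_s$ to be a smooth $\ell_p$ norm with $p$ large. This $M$ is smooth and equivalent to $\|\cdot-Q^\star\|_\infty^2$, and along the mean dynamics it satisfies the negative drift $\langle\nabla M(Q),\bar F(Q)-Q\rangle\le -c\,M(Q)$. The Markovian noise term $\langle\nabla M(Q_t),\widehat F(Q_t,Y_{t+1})-\bar F(Q_t)\rangle$ is handled by conditioning $t_{\delta}$ steps back, with $\delta=\alpha_t$. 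Geometric mixing (Assumption~\ref{ass:behavior}) then makes the conditional bias $O(\alpha_t t_{\alpha_t})(1+M(Q_{t-t_{\alpha_t}}))$, and the drift between $Q_{t-t_\alpha}$ and $Q_t$ is bounded using the linear growth. This yields a recursion $\E[M(Q_{t+1})]\le(1-c\alpha_t/\eta+C\alpha_t^2t_{\alpha_t})\E[M(Q_t)]+C\alpha_t^2t_{\alpha_t}$. For almost-sure conclusions I would instead use a pathwise version. I would decompose the noise into a martingale difference part via the Poisson equation solution $\nu_Q(y)$ of the finite ergodic chain (Metivier--Priouret), which is Lipschitz in $Q$, plus telescoping remainders of order $\alpha_t$. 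Robbins--Siegmund then gives $\sup_t\|Q_t\|_\infty<\infty$ a.s. and $M(Q_t)\to0$ a.s., hence $Q_t\to Q^\star$.

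As an alternative for stability, one can use the Borkar--Meyn scaled ODE $\dot Q=h_{\beta,\infty}(Q)=D(B+\beta I)(\mathcal T_0Q-Q)$, where $\mathcal T_0$ is the reward-free operator. Contraction of $F_{\eta,\beta}$ with $r=0$ makes the origin globally asymptotically stable, and Lipschitzness plus Markov-noise results \citep{liu2025ode} then give a.s. boundedness and convergence to the unique globally asymptotically stable equilibrium $Q^\star$ of $\dot Q=h_\beta(Q)$. That equilibrium is globally stable because $\|Q-Q^\star\|_\infty$ decreases along the ODE, by the contraction.

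\textbf{Main obstacle.} The main obstacle is controlling the Markovian noise almost surely without a priori boundedness. The Poisson-equation decomposition needs $\nu_Q$ to grow only linearly in $\|Q\|_\infty$, and the cross terms $\alpha_t(\nu_{Q_{t}}-\nu_{Q_{t-1}})$ to be summable. I would first verify these via the uniform Lipschitz bounds above, and fall back on the Borkar--Meyn/\citet{liu2025ode} stability theorem if the direct Lyapunov bookkeeping becomes unwieldy.
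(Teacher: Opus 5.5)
Your proposal is correct, and your ``alternative for stability'' paragraph is exactly the paper's proof. The paper writes the iteration as $Q_{t+1}=Q_t+\alpha_tH_\beta(Q_t,Y_{t+1})$ and verifies Assumptions~1--6 of \citet{liu2025ode}: the scaled limit $H_{\beta,c}-H_{\beta,\infty}=R_\beta(y)/c$, the uniform Lipschitz constant $(1+\beta/b_{\min})(1+\gamma)$, global exponential stability of $\dot Q=D(B+\beta I)(\mathcal T_0Q-Q)$, and Markov averaging from finiteness and ergodicity of $\{Y_t\}$. The stability step uses the zero-reward version of Lemma~\ref{lem:app_rpq_contraction} together with a Dini-derivative/comparison-lemma argument. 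It then combines their Theorem~7 (boundedness) with Corollary~8, which says the limit set is a bounded invariant set of $\dot Q=h_\beta(Q)$ and hence equals $\{Q^\star\}$.

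Your primary route (Moreau envelope, Poisson equation, Robbins--Siegmund) is genuinely different. It also works on the finite chain $\{Y_t\}$, provided you tighten two points.

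First, the mixing-window recursion for $\E[M(Q_t)]$, which is the kind of bound \citet{chen2024lyapunov} give, yields only mean-square convergence. So the Poisson step is essential, not optional.

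Second, the cross terms are not summable a priori, since boundedness is what you are proving. Let $\nu_Q-P_Y\nu_Q=\widehat h(Q,\cdot)-h_\beta(Q)$, with $P_Y$ the kernel of $\{Y_t\}$. Run Robbins--Siegmund on the perturbed function $\widetilde M_t:=M(Q_t)+\alpha_t\langle\nabla M(Q_t),(P_Y\nu_{Q_t})(Y_t)\rangle$, shifted by a constant so it is nonnegative for large $t$. The leftover terms are then $O\bigl((\alpha_t^2+\alpha_t-\alpha_{t+1})(1+M(Q_t))\bigr)$. They enter as a multiplicative factor $1+c_t$ with $\sum_tc_t<\infty$, which is where $p>1/2$ is used. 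You obtain convergence of $\widetilde M_t$ and $\sum_t\alpha_tM(Q_t)<\infty$, hence $M(Q_t)\to0$.

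As for the trade-off, the paper's route is modular. It needs only the deterministic contraction, with and without rewards, and outsources all Markov-noise and stability bookkeeping. Yours avoids the ODE at infinity and the invariant-set argument, and reuses the Moreau envelope of the finite-time section. The price is checking the Poisson-solution regularity and the perturbed-Lyapunov bookkeeping by hand.
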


\textbf{Proof sketch.}
The full proof is given in Appendix \ref{app:rpq_asymptotic}. Let
$W(Q):=\|Q-Q^\star\|_\infty$. The contraction property established in
Lemma~\ref{lem:app_rpq_contraction} implies that the upper-right Dini
derivative~\citep{khalil2002nonlinear} of $W$ along the limiting ODE
$\dot Q=h_\beta(Q)$
satisfies
\[
D^+W(Q;h_\beta(Q))
\le -\kappa_\beta W(Q).
\]
Hence, $Q^\star$ is the unique globally exponentially stable equilibrium
of the limiting ODE.
Now, writing the \algo recursion as
$Q_{t+1}=Q_t+\alpha_tH_\beta(Q_t,Y_{t+1})$,
where $Y_t$ is the Markovian sample, we show that $H_\beta(Q,Y)$ is globally
Lipschitz, its scaled limit exists, and the averaged limiting ODE has the
origin as a globally exponentially stable equilibrium by the contraction
argument of Lemma~\ref{lem:app_rpq_contraction}. Assumption \ref{ass:behavior} verifies the required Markov averaging conditions of
\cite{liu2025ode}. \citet[Theorem 7]{liu2025ode} therefore gives
$\sup_t\|Q_t\|_\infty<\infty$ a.s. Their ODE convergence result \citep[Corollary 8]{liu2025ode} then implies
that the iterates approach a bounded invariant set of
$\dot Q=h_\beta(Q)$. Since this ODE has the unique globally exponentially
stable equilibrium $Q^\star$, the only bounded invariant set is
$\{Q^\star\}$, and hence $Q_t\to Q^\star$ a.s.

\textit{Implications of Theorem~\ref{thm:rpq_asymptotic}.} To our knowledge, Theorem~\ref{thm:rpq_asymptotic} gives the first
almost-sure convergence guarantee for an advantage-based algorithm under Markovian sampling. We show below that the same contractive
regime also yields mean-square error bounds for the iterates, thereby
providing finite-sample guarantees.

\subsection{Finite-Time Guarantees}
\label{sec:finite_time}
We now quantify the finite-sample behavior of \algo  under Markovian sampling.
Recall the SA recursion \eqref{eq:rpq_sa} and suppose that
$\kappa_\beta>0$, so that the \algo  fixed-point map is contractive by
Lemma~\ref{lem:app_rpq_contraction}. We state the results for constant stepsizes here. The bounds for diminishing and
decaying stepsizes are given in
Theorem~\ref{thm:app_finite_rpq} of Appendix~\ref{app:finite-time-rpq}.

\begin{theorem}[Finite-time convergence of \algo ]
\label{thm:finite_rpq}
Under Assumption~\ref{ass:behavior}, suppose $\kappa_\beta>0$. Let $\alpha_k\equiv\alpha$, $\alpha$ sufficiently small. Define, $K_{\rm mix} = \min\{k\ge 0: k \ge t_{\alpha L_\beta}\}$, $$
c_{2,\beta}
:=
\left(
(2+
\frac{(1+\beta/b_{\min})(1+\gamma)}{L_\beta})\|Q^\star\|_\infty+\frac{(1+\beta/b_{\min})}{L_\beta}
\right)^2.
$$
Then, $\forall k \ge K_{\rm mix}$,
\begin{align}
\mathbb E\|Q_k-Q^\star\|_\infty^2
\le
& \cO\left(||Q_0-Q^*||_\infty^2
\left(1-\frac{\alpha\kappa_\beta}{2}\right)^{
k-t_{\alpha L_\beta}}
+
\frac{\alpha L_\beta^3\log |\cS||\cA|}
{\kappa_\beta^2}c_{2,\beta }
\,t_{\alpha L_\beta}\right).
\label{eq:finite_rpq_constant}
\end{align}

\end{theorem}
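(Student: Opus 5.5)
We will invoke the finite-sample theorem for Markovian contractive stochastic approximation of \citet{chen2024lyapunov}, with a generalized Moreau envelope of $\tfrac12\|\cdot\|_\infty^2$ as the Lyapunov function. The starting point is the rewriting \eqref{eq:rpq_contraction}:
\[
Q_{k+1}=Q_k+\tfrac{\alpha}{\eta}\bigl[\widetilde F(Q_k,Y_{k+1})-Q_k\bigr],\qquad
\widetilde F(Q,y):=Q+\eta H_\beta(Q,y),
\]
with $\eta=L_\beta^{-1}$. By Lemma~\ref{lem:app_rpq_contraction}, the stationary mean $F_{\eta,\beta}=\E_{d_Y}\widetilde F(\cdot,Y)$ is a $\|\cdot\|_\infty$-contraction with modulus $1-\eta\kappa_\beta$ and fixed point $Q^\star$. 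The effective stepsize is $\epsilon:=\alpha/\eta=\alpha L_\beta$. This explains both the mixing time $t_{\alpha L_\beta}$ and the factors of $L_\beta$ in the bound. Note also that $\epsilon\cdot\eta\kappa_\beta=\alpha\kappa_\beta$, which yields the rate $(1-\alpha\kappa_\beta/2)$.

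The first step is to verify the three hypotheses of that framework.
\begin{enumerate}[(i)]
\item \emph{Contraction of the mean map.} This is given by Lemma~\ref{lem:app_rpq_contraction}.
\item \emph{Lipschitz and affine growth of the random map.} The sample operator is
\[
H_\beta(Q,y)=e_{s}\otimes\bigl(\mathbf e+\beta e_a/\pi_b(a|s)\bigr)\,\delta(Q,y),
\]
so its entries are bounded by $(1+\beta/b_{\min})|\delta|$. Since $|\delta(Q,y)-\delta(Q',y)|\le(1+\gamma)\|Q-Q'\|_\infty$, we get
\[
\|\widetilde F(Q,y)-\widetilde F(Q',y)\|_\infty\le A_1\|Q-Q'\|_\infty,\qquad A_1=1+\tfrac{(1+\beta/b_{\min})(1+\gamma)}{L_\beta}.
\]
Together with $r\le1$, this also gives
\[
\|\widetilde F(Q^\star,y)\|_\infty\le B_1=\Bigl(1+\tfrac{(1+\beta/b_{\min})(1+\gamma)}{L_\beta}\Bigr)\|Q^\star\|_\infty+\tfrac{1+\beta/b_{\min}}{L_\beta}.
\]
Hence $(A_1+1)\|Q^\star\|_\infty+B_1$-type quantities combine into exactly $\sqrt{c_{2,\beta}}$.
\item \emph{Geometric mixing.} The chain $Y_k$ mixes geometrically by Assumption~\ref{ass:behavior}, so $\|\E[\widetilde F(Q,Y_k)\mid Y_0]-F_{\eta,\beta}(Q)\|_\infty\le\delta(A_1\|Q\|_\infty+B_1)$ once $k\ge t_\delta$.
\end{enumerate}

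The second step is to run the Lyapunov drift argument. Take the envelope $M(x)=\min_u\{\tfrac12\|u\|_\infty^2+\tfrac{1}{2\mu}\|x-u\|_p^2\}$ with $p=\log(|\cS||\cA|)$. This envelope is $L/\mu$-smooth with respect to $\|\cdot\|_p$, and its smoothness constant is $O(\log|\cS||\cA|)$; this is the source of the log factor. Expanding $M(Q_{k+1}-Q^\star)$ by smoothness produces three kinds of term:
\begin{enumerate}[(a)]
\item a negative drift $-\epsilon(1-\eta\kappa_\beta\text{-related})\,M$, using the contraction and norm equivalence;
\item a quadratic term of order $\epsilon^2 L_\beta^{\cdot}\log(|\cS||\cA|)\,c_{2,\beta}(1+\|Q_k-Q^\star\|^2)$;
\item a Markov cross term $\langle\nabla M(Q_k-Q^\star),\widetilde F(Q_k,Y_{k+1})-F_{\eta,\beta}(Q_k)\rangle$.
\end{enumerate}
The main obstacle is the cross term (c). We handle it in the standard way, by conditioning on $Q_{k-t}$ with $t=t_{\alpha L_\beta}$. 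We use the mixing bound for $\E[\cdot\mid\mathcal F_{k-t}]$ and control the drift $\|Q_k-Q_{k-t}\|_\infty\le O(\epsilon t A_1)(\|Q_{k-t}\|+B_1/A_1)$ by a discrete Gr\"onwall argument. This is valid for $\epsilon t_\epsilon$ small, which is the sense of ``$\alpha$ sufficiently small'' and of the restriction $k\ge K_{\rm mix}$. The resulting cross term is $O(\epsilon^2 t_\epsilon\,\log(|\cS||\cA|)\,c_{2,\beta})$, up to powers of $L_\beta$.

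The final step is to choose $\mu$ so that the total negative drift is at least $\epsilon\eta\kappa_\beta/2=\alpha\kappa_\beta/2$, absorbing the $\|Q_k-Q^\star\|^2$ pieces of the noise terms under the small-$\alpha$ condition. This gives the recursion
\[
\E M_{k+1}\le(1-\alpha\kappa_\beta/2)\,\E M_k+C\alpha^2 L_\beta^{\,2}\,t_\epsilon\log(|\cS||\cA|)\,c_{2,\beta}.
\]
Unrolling it from $k=t_\epsilon$ gives the geometric term plus a stationary error $C\alpha^2(\cdots)/(\alpha\kappa_\beta)$. Converting back to $\|\cdot\|_\infty^2$ through the envelope's norm equivalence costs a constant, and bounding $\E\|Q_{t_\epsilon}-Q^\star\|^2$ by $O(\|Q_0-Q^\star\|^2+c_{2,\beta})$ yields \eqref{eq:finite_rpq_constant}. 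The exact power $L_\beta^3/\kappa_\beta^2$ comes from bookkeeping of $\eta=L_\beta^{-1}$ in the contraction modulus and the noise scaling, and I would track it carefully at the end.
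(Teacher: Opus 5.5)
Your proposal follows the same route as the paper: with $\eta=L_\beta^{-1}$ the recursion becomes a contractive Markovian SA with effective stepsize $\alpha L_\beta$ and contraction factor $\rho=1-\kappa_\beta/L_\beta$. The paper then invokes the Moreau-envelope recursion of \citet[Lemma~A.6]{chen2024lyapunov}, with $\phi_1\le 3$, $\phi_2\ge\kappa_\beta/(2L_\beta)$ and $\phi_3\le 456e\,L_\beta\log(|\cS||\cA|)/\kappa_\beta$, and unrolls it for constant stepsize. The one thing to fix in the bookkeeping you deferred is your claim that the envelope's smoothness constant is $O(\log(|\cS||\cA|))$. To preserve the contraction, the smoothing parameter must be of order $1-\rho=\kappa_\beta/L_\beta$, so the smoothness constant (and hence $\phi_3$) carries an extra factor $L_\beta/\kappa_\beta$. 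The per-step noise is therefore of order $\alpha^2L_\beta^3\log(|\cS||\cA|)\,c_{2,\beta}\,t_{\alpha L_\beta}/\kappa_\beta$, not $C\alpha^2L_\beta^2(\cdots)$, and dividing by the drift $\alpha\kappa_\beta/2$ gives exactly the $L_\beta^3/\kappa_\beta^2$ radius. Similarly, take $B_1=\sup_y\|\widetilde F(0,y)\|_\infty\le(1+\beta/b_{\min})/L_\beta$ rather than a bound at $Q^\star$, so that $c_{2,\beta}=((A_1+1)\|Q^\star\|_\infty+B_1)^2$ holds exactly.
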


\textbf{Proof sketch.}
We provide the main steps here; the complete proof is given in
Appendix~\ref{app:finite-time-rpq}.
\begin{enumerate}[leftmargin=*,nosep]
    \item \textbf{Contractive-SA reduction.}
    With $\eta=L_\beta^{-1}$, Lemma~\ref{lem:app_rpq_contraction} gives a contraction factor
    $\rho=1-\kappa_\beta/L_\beta$ for the \algo fixed point map. Hence the recursion is a contractive
    Markovian SA with effective stepsize $\epsilon_k=\alpha_kL_\beta$.

    \item \textbf{Moreau-envelope drift under Markovian sampling.}
    Apply the generalized Moreau-envelope construction of \citet{chen2024lyapunov} to obtain a smooth Lyapunov function equivalent
    to $\|Q-Q^\star\|_\infty^2$. Contractivity yields a negative drift of
    order $\epsilon_k\kappa_\beta/L_\beta$, while temporal dependence is
    controlled over the mixing window, producing the
    Markovian error term proportional to 
    $\epsilon_k\epsilon_{k-t_k,k-1}$ where $\alpha_{i,j} = \sum_{k=i}^j\alpha_k$.

    \item \textbf{One-step recursion and unrolling.}
    Substituting the \algo{} sample-map growth bounds into the Markovian
    Moreau recursion and imposing the mixing-window condition allows the
    correlation term to be absorbed into the negative drift. Unrolling the
    resulting scalar recursion for constant stepsizes yields
    \eqref{eq:finite_rpq_constant}.
\end{enumerate}

\textit{Implications of Theorem~\ref{thm:finite_rpq}.} To our knowledge, Theorem~\ref{thm:finite_rpq} establishes the first finite-time convergence guarantees for an advantage-based algorithm under the challenging regime of  Markovian sampling. 

\begin{remark}[Residual-preconditioning tradeoff]
\label{rem:beta_tradeoff}
For large $\beta$,
$L_\beta=\Theta(\beta)$ and $\kappa_\beta=\Theta(\beta)$, while
$c_{2,\beta}$ remains bounded in $\beta$ and scales as
$O(b_{\min}^{-2})$. Hence the constant-stepsize stochastic term in
Theorem~\ref{thm:finite_rpq} scales as (ignoring the dimensionality dependence)
$$
\nu_\beta := \frac{\alpha L_\beta^3c_{2,\beta}}{\kappa_\beta^2}
t_{\alpha L_\beta}
=
\Theta\!\left(
\alpha\beta\,c_{2,\beta}t_{\alpha L_\beta}
\right).
$$
Thus, larger $\beta$ strengthens transient contraction but enlarges the
constant-stepsize stochastic radius. We observe this effect in Figure~\ref{fig:tabular-constant}. This motivates the decreasing-$\beta$
schedules of Section~\ref{sec:beta-schedules}.
\end{remark}

\begin{remark}[Comparison with asynchronous Q-learning]
\label{rem:finite_q_comparison}
Asynchronous Q-learning contracts at gap
$\mu_{\min}(1-\gamma)$, where
$\mu_{\min}:=\min_{s,a}d(s)\pi_b(a|s)$, whereas for large $\beta$,
$\kappa_\beta\sim d_{\min}(1-\gamma)\beta$. Thus \algo{} removes the
rare-action factor from the leading deterministic contraction gap and yields a faster contraction.
For a detailed comparison, we defer the details to Appendix~\ref{app:qlearning-comparison}.
\end{remark}

\subsection{Decreasing Residual-Preconditioning Schedules}
\label{sec:beta-schedules}

Remark~\ref{rem:beta_tradeoff} shows that stronger residual preconditioning
improves transient contraction but enlarges the constant-stepsize stochastic
radius $\nu_\beta$. This suggests using a large value of $\beta$ early in learning and
decreasing it as the iterates approach $Q^\star$.

As a motivating example, consider the two-stage schedule
$$
\beta_k=
\begin{cases}
\beta_h, & k<K,\\
\beta_\ell, & k\ge K,
\end{cases}
\qquad
\beta_h\ge\beta_\ell,
\qquad
\kappa_{\beta_\ell}>0.
$$
The first stage contracts more aggressively, while the second stage reduces
the stochastic error induced by residual preconditioning. Importantly,
errors accumulated during the high-$\beta$ phase are themselves contracted
during the subsequent low-$\beta$ phase.

In the following theorem, for notational ease we skip the subscript $\beta$ in the various constants of Theorem~\ref{thm:finite_rpq} and instead write them with the subscripts $h$ and $l$ to denote $\beta_h$ and $\beta_l$ respectively.
Let $K_{\rm mix} = \min\{k\ge0: k \ge t_{\alpha L_h}\}$ and $K> K_{\rm mix}.$
\begin{theorem}[Two-stage residual preconditioning]
\label{thm:two_stage_beta}
Suppose the constant stepsize $\alpha$ satisfies the uniform Markovian
mixing-window condition over
$\beta\in[\beta_\ell,\beta_h]$. Then, $\forall T\ge K$,
\begin{align*}
\mathbb E\|Q_T-Q^\star\|_\infty^2
=\;&
\cO\left(
||Q_0-Q^*||_\infty^2
\exp\left\{
-\frac{\alpha}{2}
\left(
\kappa_h(K-t_{\alpha L_h})+\kappa_\ell(T-K-t_{\alpha L_l})
\right)
\right\}
\right)
\\
&+
\cO\left(
\nu_{h}
\exp\left\{
-\frac{\alpha\kappa_\ell}{2}(T-K)
\right\}+\nu_{l}
\right).
\end{align*}
\end{theorem}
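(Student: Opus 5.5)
The plan is to apply the one-step Moreau-envelope recursion from the proof of Theorem~\ref{thm:finite_rpq} separately on each stage and then chain the two stages, feeding the output of the high-$\beta$ stage in as the initial condition of the low-$\beta$ stage. Let $M_\beta$ denote the generalized Moreau envelope built from the contraction of $F_{\eta,\beta}$ with $\eta=L_\beta^{-1}$ (Lemma~\ref{lem:app_rpq_contraction}). It is equivalent to $\|\cdot-Q^\star\|_\infty^2$ up to constants $\ell_{cs},u_{cs}$, and the dependence of these constants on $\beta$ is only through $\log|\cS||\cA|$-type smoothing parameters, which can be chosen uniformly over $[\beta_\ell,\beta_h]$. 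The uniform mixing-window hypothesis is what makes the one-step drift inequality valid simultaneously for both $\beta$ values at the same $\alpha$. For $k\ge t_{\alpha L_\beta}$ within a stage, this inequality reads
\[
\mathbb E[M_\beta(Q_{k+1}-Q^\star)] \le \Bigl(1-\tfrac{\alpha\kappa_\beta}{2}\Bigr)\mathbb E[M_\beta(Q_k-Q^\star)] + C\,\alpha^2 L_\beta^2 c_{2,\beta}\, t_{\alpha L_\beta},
\]
with the constants scaled so that unrolling reproduces $\nu_\beta$.

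\textbf{Stage 1 ($k<K$).} Unroll with $\beta=\beta_h$ from $K_{\rm mix}$ to $K$. The burn-in $k<t_{\alpha L_h}$ is handled as in Theorem~\ref{thm:finite_rpq}, where the iterates stay within a constant multiple of $\|Q_0-Q^\star\|_\infty$. This gives
\[
\mathbb E\|Q_K-Q^\star\|_\infty^2\lesssim \|Q_0-Q^\star\|^2_\infty\Bigl(1-\tfrac{\alpha\kappa_h}{2}\Bigr)^{K-t_{\alpha L_h}}+\nu_h.
\]

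\textbf{Stage 2 ($k\ge K$).} Restart the argument at time $K$ with $\beta=\beta_\ell$ and initial condition $Q_K$. The Markovian correlation term needs a window of $t_{\alpha L_\ell}$ steps after the switch, because the sample map changes at $K$ and the conditioning on $\mathcal F_{k-t}$ used in the proof must not straddle the switch. During those $t_{\alpha L_\ell}$ steps I bound the growth crudely by a constant factor, using the same bounded-increment estimate as in the burn-in. I then unroll from $K+t_{\alpha L_\ell}$ to $T$, giving
\[
\mathbb E\|Q_T-Q^\star\|^2_\infty\lesssim \mathbb E\|Q_K-Q^\star\|^2_\infty\Bigl(1-\tfrac{\alpha\kappa_\ell}{2}\Bigr)^{T-K-t_{\alpha L_\ell}}+\nu_\ell.
\]
Substituting the Stage 1 bound and using $1-x\le e^{-x}$ yields the first exponential term. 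The cross term is $\nu_h(1-\alpha\kappa_\ell/2)^{T-K-t_{\alpha L_\ell}}$. The extra factor $(1-\alpha\kappa_\ell/2)^{-t_{\alpha L_\ell}}$ is $O(1)$ because $\alpha\kappa_\ell t_{\alpha L_\ell}$ is small under the mixing-window condition, so this becomes $\nu_h e^{-\alpha\kappa_\ell(T-K)/2}$ inside the $\cO$.

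\textbf{Main obstacle.} The step I expect to be hardest is the switch itself. A single envelope $M_\beta$ is not shared between the stages, so the bound must pass through $\|\cdot\|_\infty^2$ at time $K$, which costs the norm-equivalence constants (absorbed into $\cO$). More delicately, the Markovian error analysis near $K$ compares $H_{\beta_\ell}$ against samples that were generated while the iterates were driven by $H_{\beta_h}$. I would handle this by conditioning on $\mathcal F_K$ and treating $Q_K$ as a fresh initial point. The behavior chain $\{Y_t\}$ does not depend on $\beta$, so its mixing from $Y_K$ is unaffected, and the restarted argument of Theorem~\ref{thm:finite_rpq} applies verbatim with $\mathbb E\|Q_K-Q^\star\|^2_\infty$ in place of $\|Q_0-Q^\star\|^2_\infty$. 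The unconditional bound then follows by the tower property.
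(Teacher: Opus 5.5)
Your proposal is correct and follows essentially the same route as the paper. You apply the fixed-$\beta_h$ bound up to $K$, spend a $t_{\alpha L_\ell}$-step burn-in after the switch controlled by the iterate-movement estimate of \citet[Lemma~A.2]{chen2024lyapunov}, unroll the $\beta_\ell$ Moreau recursion from $K+t_{\alpha L_\ell}$, and chain the two stages; your observation that $\alpha\kappa_\ell t_{\alpha L_\ell}=O(1)$ also correctly reconciles the $T-K$ exponent in the statement with the $T-K-t_{\alpha L_\ell}$ exponent the appendix actually proves.

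The only imprecision is that the burn-in growth is not purely multiplicative. The movement estimate gives $\mathbb E\|Q_{K+t_\ell}-Q^\star\|_\infty^2\le\frac92\,\mathbb E\|Q_K-Q^\star\|_\infty^2+8\epsilon_\ell^2t_\ell^2c_{2,\ell}$, and this additive piece must be absorbed into $\nu_\ell$ using $\epsilon_\ell t_\ell=O(1)$ and $\kappa_\ell\le L_\ell$. The same applies to your Stage~1 claim that the iterates stay within a constant multiple of $\|Q_0-Q^\star\|_\infty$: its additive term is absorbed into $\nu_h$ in the same way.
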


\textbf{Proof sketch.}
The full proof is given in Appendix~\ref{app:beta_schedules}.
We apply the fixed-$\beta$ Markovian bound of
Theorem~\ref{thm:finite_rpq} separately on the two stages.
During the first $K$ iterations, we use
$\eta_h=L_h^{-1}$, yielding contraction at rate $\kappa_h$ and a
high-$\beta$ stochastic radius $\nu_h$.

At the switching time $K$, we restart the analysis conditionally on
$(Q_K,Y_K)$ with $\eta_\ell=L_\ell^{-1}$. Since the mixing-time bound is
uniform over the initial Markov state, Theorem~\ref{thm:finite_rpq}
applies to the second stage with the low-$\beta$ constants
$(L_\ell,\kappa_\ell,c_{2,\ell})$. Consequently, the error inherited from
the first stage is further contracted by
$\exp\{-\alpha\kappa_\ell(T-K)/2\}$, while noise generated after the switch
contributes only the low-$\beta$ stochastic radius
$\nu_l$.
Combining the two stagewise bounds gives the result.

\textit{Implications of Theorem~\ref{thm:two_stage_beta}.}
The bound makes the benefit of scheduling explicit: large $\beta$ can be
used to accelerate the early transient, while the effect of its larger
stochastic error is exponentially forgotten during the low-$\beta$ tail.
After a sufficiently long second stage, the dominant terminal term is $\nu_l$.
Thus, the low-$\beta$ phase governs the terminal stochastic scale. 


We derive explicit bounds for smoothly decreasing schedules such as linear, polynomial, and
normalized-exponential in
Appendix~\ref{app:beta_schedules}. We compare the performance of the various schedules in Figure~\ref{fig:tabular-beta-schedules}.

\subsection{Two-Timescale \algo }
\label{sec:two_timescale}

The preceding single-timescale theory requires $\beta$ to be large enough
that the \algo fixed point map is contractive (Lemma \ref{lem:app_rpq_contraction}). Two-timescale learning provides
a complementary route that removes this restriction entirely. The key idea is
to let the advantage component evolve faster than the shared value component.
For every fixed $\beta>0$, the fast advantage dynamics then reduce, after a
simple change of variables, to standard Bellman dynamics with contraction gap
$d_{\min}(1-\gamma)$.

Two-timescale \algo (Algorithm~\ref{alg:rpq_two}) uses
\begin{align}
V_{t+1}(S_t)
&=
V_t(S_t)+\alpha_t\delta_t,
&
A_{t+1}(S_t,a)
&=
A_t(S_t,a)
+
\eta_t
\frac{\beta\,\mathbf 1\{U_t=a\}}{\pi_b(a|S_t)}
\delta_t .
\end{align}
For fixed $\beta>0$, define the \textbf{effective advantage stepsize}
$\tau_t:=\beta\eta_t$.
We show the asymptotic convergence as follows:
\begin{theorem}[Almost-sure convergence of two-timescale \algo ]
\label{thm:two_timescale_asymptotic}
Fix any $\beta>0$ and let $\tau_t:=\beta\eta_t$. Suppose
Assumption~\ref{ass:behavior} holds and
$$
\tau_t=\frac{\tau_0}{(t+h)^p},
\qquad
\alpha_t=\frac{\alpha_0}{(t+h)^q},
\qquad
\frac12<p<q\le1.
$$
Then,
$
A_t-\bigl(Q^\star-LV_t\bigr)\to0
\quad\text{a.s.} \implies
Q_t=LV_t+A_t\to Q^\star
\quad\text{a.s.}
$
\end{theorem}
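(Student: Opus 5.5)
We treat the two-timescale recursion in the Borkar framework, with the advantage iterate $A_t$ on the fast scale (stepsize $\tau_t$) and the value iterate $V_t$ on the slow scale (stepsize $\alpha_t$), so that $\alpha_t/\tau_t=\Theta(t^{p-q})\to0$.

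\textbf{Change of variables.} Write $Q_t=LV_t+A_t$. Viewed on the fast timescale, the $A$-update reads
\[
A_{t+1}=A_t+\tau_t\Bigl(\tfrac{\mathbf 1\{S_t=s,U_t=a\}}{\pi_b(a|S_t)}\delta_t\Bigr)_{s,a},
\]
and the slow $V$-update contributes a drift of size $O(\alpha_t)=o(\tau_t)$. With $V\equiv v$ frozen, the stationary mean of the fast map is $g_v(A)=D(\mathcal T(Lv+A)-(Lv+A))$. In the variable $\tilde Q=Lv+A$ this becomes
\[
\dot{\tilde Q}=D(\mathcal T\tilde Q-\tilde Q).
\]
This is a Bellman ODE with diagonal preconditioner $D$, whose entries are $d(s)\ge d_{\min}$. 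Since $\mathcal T$ is a $\gamma$-contraction in $\|\cdot\|_\infty$, the same Dini-derivative argument used for Theorem~\ref{thm:rpq_asymptotic} gives
\[
D^+\|\tilde Q-Q^\star\|_\infty\le -d_{\min}(1-\gamma)\|\tilde Q-Q^\star\|_\infty .
\]
So for each $v$ the fast ODE has the unique globally exponentially stable equilibrium $A^\star(v)=Q^\star-Lv$. This map is affine, hence globally Lipschitz, and the convergence rate is uniform in $v$.

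\textbf{Stability and tracking.} The key step is to write the fast recursion in terms of the error $Z_t:=A_t-(Q^\star-LV_t)$:
\[
Z_{t+1}=Z_t+\tau_t\bigl(\widehat g(Z_t,Y_{t+1})+\epsilon_t\bigr),
\qquad
\epsilon_t:=\tfrac{\alpha_t}{\tau_t}\,L\,\mathbf 1\{s=S_t\}\delta_t .
\]
Two facts make this tractable. First, $\widehat g$ depends on $(V_t,A_t)$ only through $Q_t=Q^\star+Z_t$, so it is a single-timescale Markovian SA in $Z$ whose mean field $D(\mathcal T(Q^\star+Z)-Q^\star-Z)$ is contractive. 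Second, $\epsilon_t$ is a vanishing drift perturbation, with $\|\epsilon_t\|\le C\frac{\alpha_t}{\tau_t}(1+\|Z_t\|_\infty)$ and $\alpha_t/\tau_t\to0$.

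We then extend \citet[Theorem 7, Corollary 8]{liu2025ode} to allow such perturbations. The scaled-limit ODE $\dot Z=D(\mathcal T_0Z-Z)$ remains globally asymptotically stable at $0$, where $\mathcal T_0$ is the reward-free operator. A perturbation of relative size $o(1)$ does not change the limiting ODE in the Borkar–Meyn scaling argument, so $\sup_t\|Z_t\|<\infty$ a.s. Then the perturbed SA is an asymptotic pseudotrajectory of $\dot Z=\bar g(Z)$. Its only bounded invariant set is $\{0\}$, which gives $Z_t\to0$, i.e.\ $A_t-(Q^\star-LV_t)\to0$ a.s.

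\textbf{Conclusion and main obstacle.} The implication in the statement is then immediate: $Q_t-Q^\star=Z_t\to0$. We never need $V_t$ to converge; it may drift along the non-identifiable direction, but $\|V_t\|$ is not needed for boundedness because everything is expressed through $Z_t$. The hardest step is the perturbed Markovian SA extension. We must verify the Poisson-equation and martingale-difference conditions of \citet{liu2025ode} for $\widehat g$ under Markov noise, and show the perturbation sum $\sum_t\tau_t\epsilon_t$ is negligible on ODE time windows. For this we would bound $\sum_{t\in[n,m(n)]}\tau_t\|\epsilon_t\|\le \sup_{t\ge n}\frac{\alpha_t}{\tau_t}\,C(1+\sup_t\|Z_t\|)\,T\to0$, after first establishing boundedness by the scaled-iterate argument, which absorbs the linear-growth perturbation.
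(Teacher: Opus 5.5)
Your proposal is correct and follows the paper's proof essentially step for step. The paper also rewrites the tracking error $\Delta_t=A_t-(Q^\star-LV_t)=Q_t-Q^\star$ as a single-timescale Markovian SA with stepsize $\tau_t$, dominant field $H_A$ (mean $D(\mathcal T(Q^\star+\Delta)-(Q^\star+\Delta))$, scaled limit $D(\mathcal T_0\Delta-\Delta)$, both globally exponentially stable by the Dini-derivative argument) and vanishing perturbation $\frac{\alpha_t}{\tau_t}H_V$, and then extends \citet[Theorem~7, Corollary~8]{liu2025ode} to such perturbations (Proposition~\ref{prop:markov_vanishing_perturbation}) by showing the perturbation vanishes both under the stability rescaling and on ordinary ODE-time windows.
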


\textbf{Proof sketch.}
The full proof is given in Appendix~\ref{app:two_timescale_asymptotic}. For a frozen value function $V$, the fast limiting ODE is
$\dot A=D[\mathcal T(LV+A)-(LV+A)]$. Under the change of variables $Q=LV+A$, this becomes $\dot Q=D(\mathcal TQ-Q)$, whose unique globally exponentially stable equilibrium is $Q^\star$, with contraction gap $\kappa_{\rm fast}=d_{\min}(1-\gamma)$. Hence the frozen fast recursion has equilibrium $\lambda(V)=Q^\star-LV$.
The key observation is that the tracking error
$A_t-\lambda(V_t)=Q_t-Q^\star$
evolves as a single-timescale Markovian stochastic approximation on the fast timescale, with the slow value update appearing only through the vanishing perturbation as $\alpha_t/\tau_t\to0$. We therefore extend the Markovian ODE result of \citet{liu2025ode} to stochastic approximation with a vanishing drift perturbation. We show that under their assumptions, stability and convergence to invariant sets of the unperturbed limiting ODE are preserved (Proposition~\ref{prop:markov_vanishing_perturbation}). This is the Markovian analogue of the vanishing-measurement-error extension in \citet[Section~2.2]{borkar2022stochastic}. Since the unperturbed limiting ODE has the unique equilibrium $Q^\star$, it follows that $A_t-\lambda(V_t)=Q_t-Q^\star\to0$ a.s.

\textit{Implications of Theorem~\ref{thm:two_timescale_asymptotic}.}
\textit{First}, Theorem~\ref{thm:two_timescale_asymptotic} shows that timescale separation removes the single-timescale
threshold on $\beta$: every fixed $\beta>0$ is admissible once
$\alpha_t/\tau_t\to0$. \textit{Second}, the proof
establishes a Markovian analogue of the classical vanishing-measurement-error
extension of stochastic approximation, showing that the stability and ODE
limit theory of \citet{liu2025ode} is preserved under a state-dependent drift
perturbation whose coefficient vanishes in the limit. 


\textbf{Finite-time tracking.}
The asymptotic result shows that timescale separation eliminates the
single-timescale threshold on $\beta$. We next quantify how quickly the fast
advantage iterate tracks its moving equilibrium. Define
$
\Delta_t:=A_t-A^\star(V_t)=Q_t-Q^\star.
$
The leading dynamics of $\Delta_t$ inherit the Bellman contraction gap
$\kappa_{\rm fast}=d_{\min}(1-\gamma)$, while motion of the value component
acts only as a slower perturbation controlled by $\alpha_t/\tau_t$. Let $K:=\min\{k: k\ge t_{\tau_k}\}.$

\begin{theorem}[Finite-time tracking of two-timescale \algo{}]
\label{thm:two_timescale_tracking}
Let Assumption~\ref{ass:behavior} hold and the timescale separation
satisfy
$
\frac{\alpha_t}{\tau_t}
\le c_{sep} = \Theta(
\frac{\kappa_{\rm fast}}
{2d_{\max}(1+\gamma)})
\quad
\forall t.
$
For
$
\tau_t=\frac{\tau_0}{(t+h)^p},
\quad
\alpha_t=\frac{\alpha_0}{(t+h)^q},
\quad
\frac12<p<q\le1,
$ with appropriately chosen $\alpha_0, \tau_0, h$ yields $\forall t>K,$
$$
\mathbb E\|Q_t-Q^\star\|_\infty^2
=
\widetilde \cO\!\left(
\frac{1}{t^p}
\right).
$$
\end{theorem}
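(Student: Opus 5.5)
The plan is to write the tracking error $\Delta_t=Q_t-Q^\star$ as a single Markovian contractive stochastic-approximation recursion on the fast timescale $\tau_t$, with an additive perturbation of relative size $\alpha_t/\tau_t$. From the update rules, the $Q$-iterate satisfies
\[
Q_{t+1}-Q_t=\tau_t\frac{\mathbf 1\{s_t=s,a_t=a\}}{\pi_b(a|s_t)}\delta_t+\alpha_t\mathbf 1\{s_t=s\}\delta_t .
\]
The first term has stationary mean $D(\mathcal TQ-Q)$, and the second has stationary mean $DB(\mathcal TQ-Q)$. This gives
\[
\Delta_{t+1}=\Delta_t+\tau_t\Bigl[D(\mathcal TQ_t-Q_t)+\tfrac{\alpha_t}{\tau_t}DB(\mathcal TQ_t-Q_t)+M_{t+1}\Bigr].
\]
Here $M_{t+1}$ collects the Markovian noise of both components.

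Take $\eta=1/d_{\max}$. Then $Q\mapsto Q+\eta D(\mathcal TQ-Q)$ is a sup-norm contraction with factor $1-d_{\min}(1-\gamma)/d_{\max}$, which is the $\beta$-free analogue of Lemma~\ref{lem:app_rpq_contraction}. I would treat the perturbation as part of the drift rather than as noise. The operator $DB$ has row sums $d(s)$, so
\[
\|DB(\mathcal TQ-Q)\|_\infty\le d_{\max}(1+\gamma)\|Q-Q^\star\|_\infty .
\]
Under $\alpha_t/\tau_t\le c_{sep}$, the perturbation therefore costs at most $\kappa_{\rm fast}/2$ of the contraction gap. Concretely, in the Moreau-envelope drift inequality of \citet{chen2024lyapunov} the cross term $\langle\nabla M(\Delta_t),\tfrac{\alpha_t}{\tau_t}DB(\cdot)\rangle$ is bounded using the norm equivalence constants. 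This yields a net negative drift of order $\tau_t\kappa_{\rm fast}/(2d_{\max})$, which is what the choice of $c_{sep}$ is meant to guarantee. The combined sample map has growth bounds $\|H(Q,Y)\|_\infty\le c(1+\|Q\|_\infty)$ with $c$ depending on $1/b_{\min}$, $\beta$ (through $\tau_t/\beta$ only via constants) and $c_{sep}$, just as in the single-timescale proof of Theorem~\ref{thm:finite_rpq}.

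Next, I would repeat the Markovian argument of Theorem~\ref{thm:finite_rpq} (Appendix~\ref{app:finite-time-rpq}) with stepsize $\tau_t$. The correlation term is controlled over the window $t_{\tau_t}=O(\log t)$ and contributes $\tau_t\,\tau_{t-t_{\tau_t},t-1}(1+\mathbb E\|\Delta_t\|^2)$. For $t>K$ and a large enough $h$, this is absorbed into half the negative drift. The result is a scalar recursion
\[
e_{t+1}\le(1-c_1\tau_t)e_t+c_2\tau_t^2 t_{\tau_t}.
\]
For $\tau_t=\tau_0/(t+h)^p$ with $p<1$, the standard unrolling gives $e_t=O(\tau_t t_{\tau_t})+O(\exp(-c\sum\tau_k))$, i.e.\ $\widetilde{\cO}(t^{-p})$. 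No condition on $\tau_0$ is needed here, since $p<1$; the case $p=1$ is excluded by $p<q\le 1$.

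The main obstacle is the perturbation term. It is not a vanishing additive error but a state-dependent drift. Its noise part, the $\alpha_t\mathbf 1\{s_t=s\}\delta_t$ fluctuation, enters the Markovian correlation and variance terms at order $\alpha_t\tau_t$ and $\alpha_t^2$, and both are dominated by $\tau_t^2$. Its mean part must be absorbed into the contraction, and that requires the generalized Moreau envelope's smoothness and norm-equivalence constants, not just $\|\cdot\|_\infty$. If absorbing it directly fails, I would fall back on a different Lyapunov split. In that approach I would show that $F=Q+\eta D(I+\tfrac{\alpha_t}{\tau_t}B)(\mathcal TQ-Q)$ is itself a contraction for each $t$ with gap at least $\kappa_{\rm fast}/2$. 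This follows from the same row-sum computation as Lemma~\ref{lem:app_rpq_contraction}, with $\beta$ replaced by $\tau_t/\alpha_t\ge 1/c_{sep}$. The time-varying fixed-point map then shares the common fixed point $Q^\star$, so the Chen et al.\ framework applies verbatim with a time-varying contraction.
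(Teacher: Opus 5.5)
Your proposal is correct and follows essentially the same route as the paper. You write the autonomous tracking error $\Delta_t=Q_t-Q^\star$ as a fast contractive Markovian SA with gain $\tau_t$, plus a state-dependent perturbation of size $\alpha_t/\tau_t$ whose mean $DB(\mathcal TQ-Q)$ vanishes at $Q^\star$ with Lipschitz constant $d_{\max}(1+\gamma)$. You then build one Moreau envelope from the fast contraction alone, absorb the perturbation's mean drift via the separation condition (as in Lemma~\ref{lem:perturbed_stationary_drift}), treat its Markovian noise as same-order mixing-window terms, and unroll the polynomial gains to $O(\log t/t^{p})$. The differences are inessential: you normalize by $\eta=1/d_{\max}$ where the paper uses the unscaled map with $\rho=1-\kappa_{\rm fast}$. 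Your fallback with a time-varying combined contraction would also need a single Moreau envelope built from the worst-case factor, as in the paper's general $\beta$-schedule argument, rather than applying \citet{chen2024lyapunov} verbatim.
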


\textbf{Proof sketch.}
The full proof is given in
Appendix~\ref{app:two_timescale_finite}. As in the asymptotic analysis, define
$\Delta_t:=A_t-(Q^\star-LV_t)=Q_t-Q^\star$. Its recursion is autonomous and
can be written as
$
\Delta_{t+1}
=
\Delta_t
+
\tau_t\!\left[
H_A(\Delta_t,Y_{t+1})
+
\frac{\alpha_t}{\tau_t}H_V(\Delta_t,Y_{t+1})
\right].
$
We view the advantage update as the dominant contractive Markovian SA and the
value update as a state-dependent perturbation of size
$\alpha_t/\tau_t$. The stationary mean operator of the fast recursion is contractive
with gap $\kappa_{\rm fast}=d_{\min}(1-\gamma)$, so we construct a fixed
generalized Moreau-envelope Lyapunov function from this fast contraction. 
We then apply our extension of the Markovian finite-time analysis of
\citet{chen2024lyapunov} (Section~\ref{app:perturbed_markovian_sa}). The perturbation contributes an additional mean-drift
term, which is absorbed into the fast negative drift when
$\alpha_t/\tau_t$ is sufficiently small, and an additional Markovian
correlation term of the same mixing-window order as the original sampling
noise. This yields the same scalar Lyapunov recursion as in the
single-timescale analysis, up to modified constants. Unrolling the recursion
gives the desired rate.

\textit{Implications of Theorem \ref{thm:two_timescale_tracking}.} Theorem~\ref{thm:two_timescale_tracking} establishes $\widetilde \cO(1/t^p)$ mean square error rates for \algo for all $\beta>0$ removing the restriction of $\kappa_\beta$ in the single-timescale setting. We evaluate the performance of two-timescale \algo and compare it with the baselines of Asynchronous Q-learning and Dueling Q-learning in Figure~\ref{fig:tabular-two-timescale}, Appendix~\ref{app:experiments-mdp}.

\section{Experiments}
\label{sec:experiments}

\subsection{Discrete MDP}

We first evaluate \textsc{V-Share} on tabular MDPs.
Inspired by~\cite{tang2023va}, we consider a randomly
generated finite MDP with 10 states and 5 actions. For each state-action pair $(s,a)$, the transition
distribution $P(\cdot \mid s,a)$ is sampled independently from a symmetric
Dirichlet distribution,
$
P(\cdot \mid s,a)
\sim
\operatorname{Dirichlet}(\alpha,\ldots,\alpha),
\alpha=0.5$.
The rewards are sampled independently and uniformly from $[0,1]$, and
the discount factor is set to $\gamma=0.95$. 
We compare \textsc{V-Share} and its variants against tabular Dueling
Q-learning~\citep{daley2025analysis} and asynchronous Q-learning.
Samples are generated by a fixed behavior policy
$
\pi_b
=
(1-\epsilon)\pi_{\mathrm{det}}
+
\epsilon u$
where $\pi_{\mathrm{det}}$ is a randomly sampled deterministic policy,
$u$ is the uniform policy over actions, and $\epsilon=0.8$.
The MDP and behavior policy are fixed across all repetitions, while the
sample trajectory is varied across $10$ independent seeds.
We observe that under constant stepsizes, increasing $\beta$ substantially accelerates the
initial decay but eventually leads to a larger error floor (Figure~\ref{fig:tabular-constant}), illustrating the
predicted contraction-variance tradeoff. Decreasing-$\beta$ schedules combine these effects, matching the rapid initial
progress of strong residual preconditioning while attaining a smaller terminal error (Figure~\ref{fig:tabular-beta-schedules}). In all the plots, across all settings,
\algo{} converges faster than asynchronous Q-learning and Dueling Q-learning.

\begin{figure*}[t]
    \centering
    \begin{subfigure}[t]{0.48\textwidth}
        \centering
        \includegraphics[width=\linewidth]{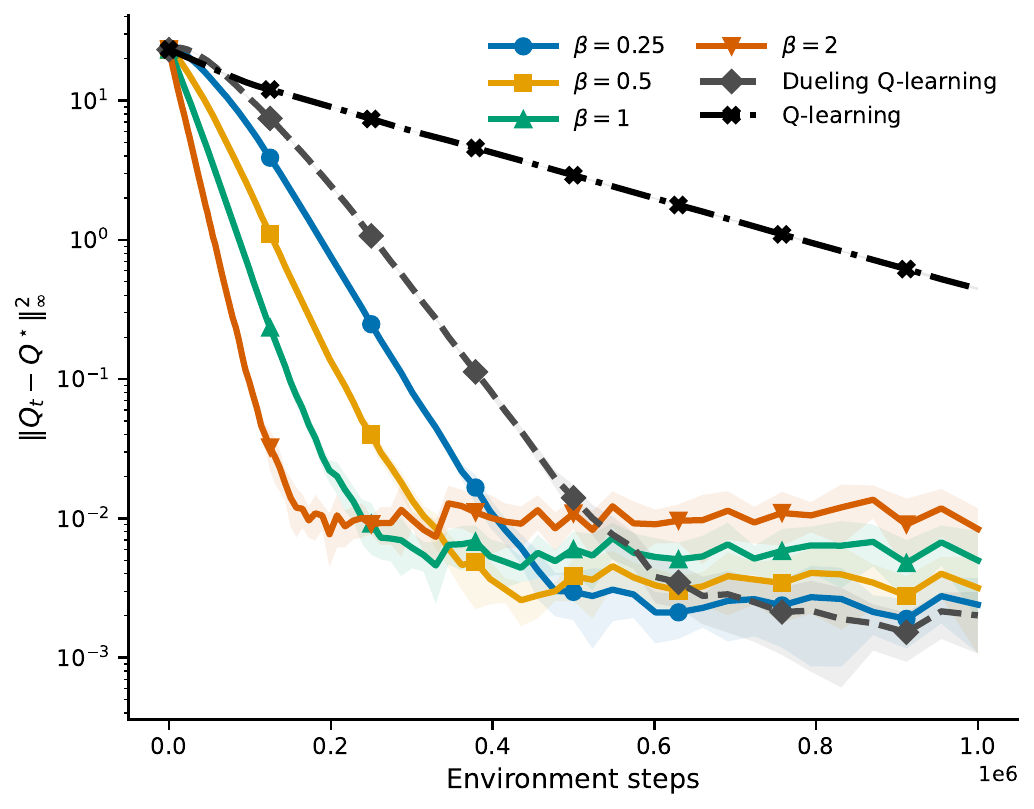}
        \caption{Constant stepsize.}
        \label{fig:tabular-constant}
    \end{subfigure}
    \hfill
    \begin{subfigure}[t]{0.48\textwidth}
        \centering
        \includegraphics[width=\linewidth]{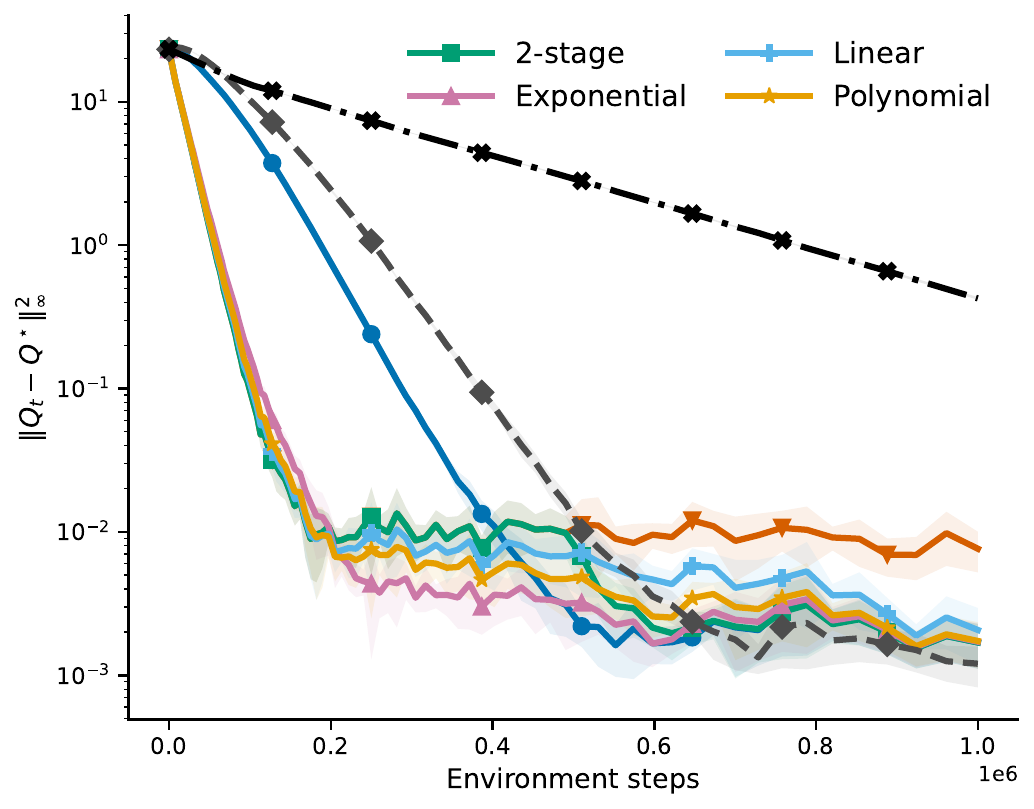}
        \caption{Decreasing residual-preconditioning schedules.}
        \label{fig:tabular-beta-schedules}
    \end{subfigure}
    \caption{
        \textbf{Tabular experiments on randomly generated MDPs.}
        (a): performance of single-timescale \algo (Algorithm \ref{alg:rpq_single}) for different residual
        weights $\beta$ under constant stepsizes, compared
        with asynchronous Q-learning and Dueling Q-learning.
        (b): comparison of decreasing residual-preconditioning
        schedules that interpolate from $\beta_{\mathrm{high}}$ to
        $\beta_{\mathrm{low}}$.
        Curves show the mean squared sup-norm error
        $\mathbb{E}\|Q_t-Q^\star\|_\infty^2$ over independent trajectories;
        shaded regions denote pointwise 95\% confidence intervals. The y-axis is in the logarithmic scale in all plots.
    }
    \label{fig:tabular-main}
\end{figure*}

\subsection{Deep RL}
\begin{table}[t]
    \centering
    \small
    \setlength{\tabcolsep}{3.5pt}
    \renewcommand{\arraystretch}{1.15}

    \resizebox{\textwidth}{!}{%
    \begin{tabular}{lccccccccc}
        \toprule
        \textbf{Environment}
        & \textbf{DQN}
        & \textbf{ME Layer}
        & \textbf{Soft RDQ}
        & \multicolumn{5}{c}{\textbf{\algo LR Multiplier}}
        & \textbf{\algo $\beta$-Scheduling} \\
        \cmidrule(lr){5-9}
        & & & &
        \textbf{$0.25\times$}
        & \textbf{$0.5\times$}
        & \textbf{$1\times$}
        & \textbf{$2\times$}
        & \textbf{$4\times$}
        & \\
        \midrule

        Acrobot-v1
        & $-443.12 \pm 101.39$
        & $-140.40 \pm 113.25$
        & $-85.44 \pm 9.30$
        & $\mathbf{-82.44 \pm 11.32}$
        & $-103.80 \pm 39.53$
        & $-87.24 \pm 4.65$
        & $-169.08 \pm 185.00$
        & $-167.24 \pm 186.44$
        & $-108.72 \pm 48.35$ \\

        CartPole-v1
        & $375.08 \pm 143.63$
        & $342.40 \pm 195.66$
        & $423.24 \pm 171.64$
        & $\mathbf{500.00 \pm 0.00}$
        & $\mathbf{500.00 \pm 0.00}$
        & $\mathbf{500.00 \pm 0.00}$
        & $464.00 \pm 77.52$
        & $\mathbf{500.00 \pm 0.00}$
        & $\mathbf{500.00 \pm 0.00}$ \\

        LunarLander-v2
        & $218.37 \pm 36.83$
        & $193.98 \pm 101.57$
        & $244.94 \pm 25.36$
        & $\mathbf{248.79 \pm 11.53}$
        & $227.80 \pm 51.22$
        & $202.93 \pm 4.76$
        & $105.49 \pm 98.36$
        & $-24.65 \pm 76.29$
        & $168.55 \pm 90.31$ \\

        MountainCar-v0
        & $-166.20 \pm 46.36$
        & $-138.04 \pm 16.81$
        & $\mathbf{-106.24 \pm 5.87}$
        & $-183.36 \pm 37.21$
        & $-136.92 \pm 9.06$
        & $-131.48 \pm 22.14$
        & $-133.60 \pm 17.56$
        & $-141.40 \pm 35.89$
        & $-129.28 \pm 4.98$ \\

        \midrule

        Asterix-v1
        & $42.33 \pm 14.28$
        & $41.90 \pm 12.77$
        & $\mathbf{55.97 \pm 9.13}$
        & $49.57 \pm 10.05$
        & $41.47 \pm 15.00$
        & $47.97 \pm 20.52$
        & $35.70 \pm 7.52$
        & $27.03 \pm 4.82$
        & $41.53 \pm 11.08$ \\

        Breakout-v1
        & $31.33 \pm 13.69$
        & $39.83 \pm 15.19$
        & $37.87 \pm 16.67$
        & $30.17 \pm 7.69$
        & $34.07 \pm 12.29$
        & $\mathbf{53.67 \pm 20.50}$
        & $40.63 \pm 14.13$
        & $33.93 \pm 12.58$
        & $40.20 \pm 15.59$ \\

        Freeway-v1
        & $61.77 \pm 1.50$
        & $61.40 \pm 1.48$
        & $\mathbf{63.10 \pm 1.22}$
        & $61.77 \pm 2.77$
        & $61.67 \pm 1.82$
        & $59.90 \pm 2.27$
        & $61.30 \pm 3.14$
        & $55.37 \pm 7.19$
        & $60.17 \pm 2.06$ \\

        Seaquest-v1
        & $74.73 \pm 22.11$
        & $119.00 \pm 32.18$
        & $122.40 \pm 46.38$
        & $\mathbf{132.07 \pm 53.50}$
        & $91.37 \pm 30.00$
        & $103.03 \pm 39.83$
        & $80.03 \pm 28.40$
        & $61.43 \pm 10.66$
        & $114.70 \pm 32.40$ \\

        SpaceInvaders-v1
        & $196.57 \pm 43.51$
        & $173.83 \pm 68.47$
        & $254.43 \pm 106.39$
        & $222.67 \pm 41.82$
        & $\mathbf{282.20 \pm 59.04}$
        & $192.70 \pm 91.94$
        & $136.67 \pm 66.44$
        & $134.57 \pm 67.59$
        & $134.33 \pm 68.78$ \\

        \bottomrule
    \end{tabular}%
    }
    \caption{\textbf{Deep RL results on Classic Control and MinAtar Environments.} We compare different configuration of \algo against DQN \citep{mnih2015human}, ME Layer \citep{nagarajan2026accelerating}, and Soft RDQ \citep{daley2025analysis} across four Classic Control and five MinAtar environments. Results report the mean $\pm$ standard deviation over five independent training seeds. The results show that while it is hard to directly apply the theoretical results to the deep RL setting, \algo tends to perform at par with Soft RDQ and significantly outperforms vanilla DQN.}
    \label{tab:adv_lr_ablation}
\end{table}

To verify whether our mechanism suggested by the tabular analysis remains effective in a more complex setting, we evaluate our method on both Classic Control and MinAtar environments. 
Specifically, we study three settings of \algo: single-timescale learning, two-timescale learning, and $\beta$ scheduling. 
In the \emph{single-timescale} setting, the value and advantage networks use the same learning rate. In the \emph{two-timescale} setting, we vary the advantage learning rate relative to the value learning rate, motivated by our analysis showing that timescale separation can remove the large-$\beta$ requirement. In the $\beta$ scheduling setting, the learning rates are kept equal while $\beta$ is linearly decayed from $100$ to $1$, motivated by the contraction-variance tradeoff.

Table~\ref{tab:adv_lr_ablation} summarizes our deep RL results across both MinAtar and Classic Control environments. Overall, \algo is competitive with the considered baselines, but its performance is sensitive to the relative learning rates of the value and advantage components. No single advantage learning-rate multiplier performs best across all tasks: smaller multipliers are preferred in several environments, while intermediate settings perform better in others. In particular, the best-performing \algo configuration achieves the highest mean return on six of the nine environments. At the same time, larger advantage learning-rate multipliers such as $2\times$ and $4\times$ can substantially degrade performance on some tasks. These results indicate that timescale separation can meaningfully affect deep RL, but the optimal relative update speed of the value and advantage components is environment dependent.

\section{Discussion and Future Work}

\textbf{Key take-aways.} We introduce \algo, an advantage-based RL algorithm, and provide the first asymptotic and non-asymptotic convergence guarantees for an advantage-based method under Markovian sampling. Our results suggest several practical principles for advantage-based value learning. \textbf{First}, they support separating the shared state-value component from action-specific advantages. \textbf{Second}, the residual weight $\beta$ controls how aggressively the algorithm corrects infrequently sampled actions, motivating stronger advantage updates early and weaker ones later. \textbf{Third}, when low-advantage actions are also sampled infrequently, applying a higher effective learning rate to them can improve learning. \textbf{Finally}, the two-timescale analysis supports using a larger effective learning rate for advantages than for state values in tabular MDPs, while our deep RL experiments suggest that reducing the advantage learning rate can improve stability by limiting noise.

\textbf{Future work.} Several directions remain open. First, while there is supporting intuition, our analysis does not yet explain the empirical acceleration of dueling Q-learning over standard Q-learning. A related question is whether value sharing can yield sharper problem-dependent or sample-complexity guarantees. Second, extending the analysis of \algo{} to function approximation is an important direction. Finally, our analysis assumes a fixed behavior policy; extending it to iterate-dependent sampling policies, such as $\epsilon$-greedy exploration, is another natural direction.

\subsection*{AI use statement}

In this work, we used generative AI tools solely for proofreading the manuscript and for suggestions to improve the writing style and clarity.

\subsubsection*{Acknowledgements}
This work was carried out as part of Aalto Science Institute International Summer Research Program 2026, project $\#2103$.
It was supported by the Research Council of Finland Flagship programme: Finnish Center for Artificial Intelligence FCAI,
and by the Finnish Ministry of Education and Culture through the Intelligent Work Machines Doctoral Education Pilot Program (IWM VN/3137/2024-OKM-4). 
We also acknowledge the computational resources provided by the Aalto Science-IT project.

\bibliographystyle{iclr-2027-style-files/iclr2027/iclr2027_conference}
\bibliography{cite}
\clearpage
\appendix
\part{Appendix}
\parttoc\newpage
\section*{Overview of the Appendix}
\begin{itemize}[leftmargin=*]
    \item \textbf{Appendix \ref{app:background-derivations}: Background and operator preliminaries.}
    Reviews asynchronous Q-learning and the value-advantage decomposition,
    derives the induced Q-dynamics of value sharing, and contrasts diagonal
    Q-learning updates with the cross-action preconditioning structure that
    motivates \algo{}.

    \item \textbf{Appendix \ref{app:single-rpq}: Single-timescale \algo{}: detailed analysis.}
    Develops the single-timescale stochastic-approximation representation,
    proves fixed-point preservation and the contraction regime, establishes
    almost-sure convergence under Markovian sampling, and derives the
    finite-time bounds for constant and diminishing stepsizes. It also studies
    the dependence on $\beta$ and compares the resulting guarantees with
    asynchronous Q-learning.

    \item \textbf{Appendix \ref{app:beta_schedules}: Decreasing residual-preconditioning schedules.}
    Analyzes time-varying choices of $\beta$. We first prove a sharp two-stage
    result showing how strong early contraction can be combined with a
    low-$\beta$ terminal noise floor, and then derive conservative guarantees
    for general decreasing schedules and their linear, polynomial, and
    exponential specializations.

    \item \textbf{Appendix \ref{app:two_timescale_rpq}: Two-timescale \algo{}: detailed proofs.}
    Develops the two-timescale stochastic-approximation formulation and the
    frozen-fast dynamics, proves almost-sure tracking for every fixed
    $\beta>0$, and derives finite-time tracking guarantees. The analysis also
    establishes auxiliary results for Markovian stochastic approximation with
    vanishing drift perturbations.
    \item \textbf{Appendix \ref{app:technical-lemma}: Useful Technical Lemmas.} Collects technical lemmas about the Moreau envelope construction used in the finite-sample analysis
    \item \textbf{Appendix \ref{app:experimental-details}: Experimental details.}
    Provides additional discrete-MDP experiments, implementation details,
    hyperparameter settings, and the deep-RL experimental setup used to study
    single-timescale, decreasing-$\beta$, and two-timescale variants of
    \algo{}.
\end{itemize}

\section{Background Derivations}
\label{app:background-derivations}

This appendix derives the sample recursions and stationary mean fields used in
Section~\ref{sec:background}, and recalls the standard sup-norm contraction
argument for asynchronous Q-learning.

\subsection{Sampling model and Bellman residual}

Let $(\mathcal S,\mathcal A,P,r,\gamma)$ be a finite discounted MDP with
$\gamma\in[0,1)$ and $|r(s,a)|\le 1$. The Bellman optimality operator is
$$
(\mathcal TQ)(s,a)
=
r(s,a)
+
\gamma\sum_{s'}P(s'|s,a)\max_{a'}Q(s',a').
$$
It is a $\gamma$-contraction in $\|\cdot\|_\infty$ and therefore has a unique
fixed point $Q^\star$.

Let $\pi_b(\cdot|s)$ be the behavior policy from
Assumption~\ref{ass:behavior}. The induced state chain is irreducible and
aperiodic with invariant distribution $d$, and
$$
Y_{t+1}:=(s_t,a_t,s_{t+1})
$$
is a finite-state irreducible and aperiodic Markov chain with invariant distribution
$$
d_Y(s,a,s')
=
d(s)\pi_b(a|s)P(s'|s,a).
$$
For $y=(s,a,s')$, define
$$
\delta(Q,y)
:=
r(s,a)
+
\gamma\max_{a'}Q(s',a')
-
Q(s,a).
$$
Its stationary conditional average over the successor state is
$$
\sum_{s'}P(s'|s,a)\delta(Q,(s,a,s'))
=
(\mathcal TQ-Q)(s,a).
$$

\subsection{Asynchronous Q-learning}
\label{app:qlearning-background}

Asynchronous Q-learning updates only the sampled state-action coordinate:
$$
Q_{t+1}(s,a)
=
Q_t(s,a)
+
\alpha_t
\mathbf 1\{s_t=s,a_t=a\}
\delta(Q_t,Y_{t+1}).
$$
Define the sample field
$$
H_{\rm Q}(Q,y)(s_0,a_0)
:=
\mathbf 1\{s_0=s,a_0=a\}\delta(Q,y),
\qquad
y=(s,a,s').
$$
Then
$$
Q_{t+1}
=
Q_t+\alpha_tH_{\rm Q}(Q_t,Y_{t+1}).
$$
Its stationary mean operator is
\begin{align*}
\bar H_{\rm Q}(Q)(s,a)
&:=
\mathbb E_{Y\sim d_Y}[H_{\rm Q}(Q,Y)(s,a)]
\\
&=
d(s)\pi_b(a|s)
\sum_{s'}P(s'|s,a)\delta(Q,(s,a,s'))
\\
&=
d(s)\pi_b(a|s)(TQ-Q)(s,a).
\end{align*}
Thus, with
$$
D_\mu
:=
\operatorname{diag}\{d(s)\pi_b(a|s):(s,a)\in\mathcal S\times\mathcal A\},
$$
the stationary mean operator and associated limiting ODE are
$$
\bar H_{\rm Q}(Q)=D_\mu (TQ-Q),
\qquad
\dot Q=D_\mu (TQ-Q).
$$

\subsubsection{Sup-norm contraction}

Define
$$
F_{\rm Q}(Q)
:=
Q+D_\mu (TQ-Q).
$$
For $i=(s,a)$, write $\mu_i:=d(s)\pi_b(a|s)$. Then
\begin{align*}
[F_{\rm Q}(Q)-F_{\rm Q}(Q')]_i
&=
(1-\mu_i)(Q_i-Q_i')
+
\mu_i[(\mathcal TQ)_i-(\mathcal TQ')_i],
\end{align*}
and hence
$$
|[F_{\rm Q}(Q)-F_{\rm Q}(Q')]_i|
\le
[1-\mu_i(1-\gamma)]
\|Q-Q'\|_\infty.
$$
Therefore
\begin{equation}
\label{eq:async_q_learning_contraction}
    \|F_{\rm Q}(Q)-F_{\rm Q}(Q')\|_\infty
\le
[1-\mu_{\min}(1-\gamma)]
\|Q-Q'\|_\infty,
\end{equation}
where
$$
\mu_{\min}:=\min_{s,a}d(s)\pi_b(a|s).
$$
Thus the contraction gap is $\mu_{\min}(1-\gamma)$. The argument works
because $D_\mu$ is diagonal: every  coordinate is a convex
combination of its previous value and its own Bellman update.

\subsection{Value-advantage update and induced Q-dynamics}
\label{app:hard-rdq-background}

Consider the decomposition
$$
Q_t(s,a)=V_t(s)+A_t(s,a)
$$
and the hard RDQ update \citep{daley2025analysis}
$$
V_{t+1}(s_t)
=
V_t(s_t)+\alpha_t\delta(Q_t,Y_{t+1}), \quad
A_{t+1}(s_t,a)
=
A_t(s_t,a)
+
\alpha_t\mathbf 1\{a_t=a\}
\delta(Q_t,Y_{t+1}),
$$
with all coordinates at states $s\neq s_t$ unchanged. The decomposition is
not identifiable: for any state-dependent scalar $c(s)$,
$$
V_t(s)\mapsto V_t(s)+c(s),
\qquad
A_t(s,a)\mapsto A_t(s,a)-c(s)
$$
leaves $Q_t$ unchanged. We therefore analyze the  iterate $Q_t$.

For every  coordinate $(s,a)$,
$$
Q_{t+1}(s,a)
=
Q_t(s,a)
+
\alpha_t\mathbf 1\{s_t=s\}
[1+\mathbf 1\{a_t=a\}]
\delta(Q_t,Y_{t+1}).
$$
The first term is the value increment shared across all actions at the sampled
state; the second is the sampled-action advantage increment.

For each state $s$, let
$$
b_s:=(\pi_b(a|s))_{a\in\mathcal A},
\qquad
B_s:=\textbf{e} b_s^\top,
\qquad
B:=\operatorname{blkdiag}(B_s:s\in\mathcal S),
$$
and define the lifted state-visitation matrix
$$
D:=\operatorname{diag}\{d(s):(s,a)\in\mathcal S\times\mathcal A\}.
$$
The stationary mean operator of the shared value update is
$$
DB(TQ-Q),
$$
because for every $(s,a)$,
\begin{align*}
\mathbb E_{Y\sim d_Y}
\left[
\mathbf 1\{S=s\}\delta(Q,Y)
\right]
&=
d(s)\sum_c \pi_b(c|s)(TQ-Q)(s,c).
\end{align*}
The sampled-action contribution is
$$
D_\mu (TQ-Q).
$$
Hence the stationary mean operator of the hard value-advantage update is
$$
\bar H_{\rm AV}(Q)
=
(DB+D_\mu)(TQ-Q),
$$
with associated limiting ODE
$$
\dot Q
=
(DB+D_\mu)(TQ-Q).
$$

For a 2-state, 2-action MDP, the preconditioners look like the following:
Let,
\[
        \pi_b(i, j):=\pi_b(a_j\mid s_i), \quad
        d_i=d(s_i).
\]
    where $d(\cdot)$ is the stationary distirbution induced by the behavior policy $\pi_b(\cdot | \cdot)$

    \[
        D_\mu
        =
        \begin{bmatrix}
        d_1\pi_b(1,1) & 0 & 0 & 0\\
        0 & d_1\pi_b(1,2) & 0 & 0\\
        0 & 0 & d_2\pi_b(2,1) & 0\\
        0 & 0 & 0 & d_2\pi_b(2,2)
        \end{bmatrix}.
    \]
    
    \[
        DB
        =
        D_\mu
        +
        \begin{bmatrix}
        d_1\pi_b(1,1) & d_1\pi_b(1,2) & 0 & 0\\
        d_1\pi_b(1,1) & d_1\pi_b(1,2) & 0 & 0\\
        0 & 0 & d_2\pi_b(2,1) & d_2\pi_b(2,2)\\
        0 & 0 & d_2\pi_b(2,1) & d_2\pi_b(2,2)
        \end{bmatrix}.
    \]

\subsection{Diagonal versus cross-action preconditioning}
\label{app:preconditioning-comparison}

The preceding derivations give
$$
\bar H_{\rm Q}(Q)=D_\mu (TQ-Q)
$$
for asynchronous Q-learning and
$$
\bar H_{\rm AV}(Q)=(DB+D_\mu)(TQ-Q)
$$
for the hard RDQ update.

The Q-learning preconditioner $D_\mu$ is diagonal. In contrast, the shared
value component satisfies
$$
[DB]_{(s,a),(s,c)}
=
d(s)\pi_b(c|s)
$$
for every pair of actions $a,c$ at the same state. Thus a Bellman residual
associated with action $c$ contributes to the mean update of every 
action at state $s$.

For the hard RDQ preconditioner,
$$
[DB+D_\mu]_{(s,a),(s,c)}
=
d(s)\pi_b(c|s)
+
\mathbf 1\{a=c\}d(s)\pi_b(a|s).
$$
In particular,
$$
[DB+D_\mu]_{(s,a),(s,a)}
=
2d(s)\pi_b(a|s),
$$
while for $c\neq a$,
$$
[DB+D_\mu]_{(s,a),(s,c)}
=
d(s)\pi_b(c|s)>0.
$$
This cross-action coupling is precisely what prevents the coordinatewise
convex-combination argument used for Q-learning (\eqref{eq:async_q_learning_contraction}) from carrying over.

\section{Single-Timescale \algo{}: Detailed Analysis}
\label{app:single-rpq}

This appendix proves the single-timescale results stated in
Sections~\ref{sec:single_rpq} and~\ref{sec:finite_time}. Throughout,
sampling is generated by the Markov chain $\{Y_t\}$ above. Recall
$B_s=\textbf{e}b_s^\top$,
$B=\operatorname{blkdiag}(B_s:s\in\mathcal S)$, and
$D_{(s,a),(s,a)}=d(s)$.

\subsection{\algo{} recursion and stationary mean operator}
\label{app:rpq_derivations}

The single-timescale \algo{} updates are
$$
V_{t+1}(s_t)
=
V_t(s_t)+\alpha_t\delta(Q_t,Y_{t+1}),
\quad 
A_{t+1}(s_t,a)
=
A_t(s_t,a)
+
\alpha_t
\frac{\beta\mathbf 1\{a_t=a\}}{\pi_b(a|s_t)}
\delta(Q_t,Y_{t+1}),
$$
where $Q_t=LV_t+A_t$. Therefore the  action-value iterate obeys
$$
Q_{t+1}(s,a)
=
Q_t(s,a)
+
\alpha_t\mathbf 1\{s_t=s\}
\left[
1+
\beta\frac{\mathbf 1\{a_t=a\}}{\pi_b(a|s)}
\right]
\delta(Q_t,Y_{t+1}).
$$

For $y=(s,a,s')$, define the sample field
$$
H_\beta(Q,y)(s_0,a_0)
:=
\mathbf 1\{s_0=s\}
\left[
1+
\beta\frac{\mathbf 1\{a_0=a\}}{\pi_b(a|s)}
\right]
\delta(Q,y).
$$
Then the recursion is
$$
Q_{t+1}
=
Q_t+\alpha_tH_\beta(Q_t,Y_{t+1}).
$$
Its stationary mean operator is
$$
h_\beta(Q)
:=
\mathbb E_{Y\sim d_Y}[H_\beta(Q,Y)].
$$
For a fixed coordinate $(s,a)$, the shared-value contribution is
\begin{align*}
&\mathbb E_{Y\sim d_Y}
\left[
\mathbf 1\{S=s\}\delta(Q,Y)
\right]=
d(s)\sum_{u}b(u|s)(TQ-Q)(s,u)
=
[DB(TQ-Q)](s,a),
\end{align*}
while importance weighting gives
\begin{align*}
&\mathbb E_{Y\sim d_Y}
\left[
\mathbf 1\{S=s\}
\frac{\mathbf 1\{A=a\}}{\pi_b(a|s)}
\delta(Q,Y)
\right]=
d(s)(TQ-Q)(s,a).
\end{align*}
Consequently,
$$
h_\beta(Q)
=
D(B+\beta I)(TQ-Q).
$$
The associated limiting ODE is
$$
\dot Q
=
D(B+\beta I)(\mathcal TQ-Q).
$$

For later use, write $
G_\beta:=D(B+\beta I).
$
For $i=(s,a)$,
\begin{equation}
\label{eq:app_G_rows}
(G_\beta)_{ii}
=
d(s)[\beta+\pi_b(a|s)],
\quad 
\sum_{j\neq i}(G_\beta)_{ij}
=
d(s)[1-\pi_b(a|s)],
\quad 
\sum_j(G_\beta)_{ij}
=
d(s)(\beta+1).
\end{equation}
Thus residual preconditioning leaves the cross-action sharing term $DB$
unchanged while adding a diagonal correction $\beta D$.




\subsection{Fixed-point preservation and contraction}
\label{app:rpq_contraction}

We first verify that residual preconditioning preserves the Bellman
fixed point.

\begin{lemma}[Fixed-point preservation]
\label{lem:app_rpq_fixed_point}
For every $\beta>0$, $Q^\star$ is the unique equilibrium of
$\dot Q=h_\beta(Q)$.
\end{lemma}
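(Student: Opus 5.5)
The plan is to show that the preconditioner $G_\beta=D(B+\beta I)$ is invertible for every $\beta>0$. Equilibria of $\dot Q=h_\beta(Q)=G_\beta(\mathcal TQ-Q)$ are exactly the solutions of $G_\beta(\mathcal TQ-Q)=0$. If $G_\beta$ is nonsingular, this is equivalent to $\mathcal TQ=Q$. Since $\mathcal T$ is a $\gamma$-contraction in $\|\cdot\|_\infty$, Banach's fixed-point theorem then makes $Q^\star$ its unique fixed point. Conversely, $Q^\star$ is trivially an equilibrium, because $\mathcal TQ^\star-Q^\star=0$. So everything reduces to the nonsingularity of $G_\beta$.

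For invertibility, first note that $D$ is diagonal with entries $d(s)>0$ by Assumption~\ref{ass:behavior}, so it suffices to show that $B+\beta I$ is invertible. The matrix $B$ is block diagonal with blocks $B_s=\textbf{e}b_s^\top$, so it is enough to treat each block $B_s+\beta I_{|\cA|}$ separately. Each $B_s$ has rank one, and $B_s^2=\textbf{e}(b_s^\top\textbf{e})b_s^\top=B_s$ because $b_s^\top\textbf{e}=\sum_a\pi_b(a|s)=1$. Hence $B_s$ is a projection with eigenvalues in $\{0,1\}$, and $B_s+\beta I$ has eigenvalues in $\{\beta,1+\beta\}$, all nonzero for $\beta>0$. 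To make this explicit, one can use Sherman--Morrison:
\[
(B_s+\beta I)^{-1}=\frac{1}{\beta}\Bigl(I-\frac{\textbf{e}b_s^\top}{1+\beta}\Bigr).
\]
Alternatively, if $(B_s+\beta I)x=0$, then $\beta x=-(b_s^\top x)\textbf{e}$. Taking the inner product with $b_s$ gives $(1+\beta)b_s^\top x=0$, so $b_s^\top x=0$ and therefore $x=0$.

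The argument is essentially routine, and there is no real obstacle. The one point to handle carefully is the use of invertibility, including the full support of $d$, rather than any contraction property of $F_{\eta,\beta}$. This matters because Lemma~\ref{lem:app_rpq_fixed_point} is claimed for all $\beta>0$, including the regime $\kappa_\beta\le 0$ where contraction may fail. Uniqueness there comes purely from the uniqueness of the Bellman fixed point.
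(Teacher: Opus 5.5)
Your proposal is correct and follows the paper's proof. Both arguments reduce equilibria of $\dot Q=h_\beta(Q)$ to fixed points of $\mathcal T$ by showing blockwise that $D(B+\beta I)$ is nonsingular; the only difference is that the paper certifies invertibility of $\beta I+\textbf{e}b_s^\top$ through the matrix determinant lemma, $\det(\beta I+\textbf{e}b_s^\top)=\beta^{|\cA|-1}(\beta+1)>0$, whereas you use idempotency of $B_s$, Sherman--Morrison, or a direct kernel argument, which is an equivalent certificate.
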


\begin{proof}
For every state $s$, the matrix
$\beta I+\textbf{e}b_s^\top$, where $I$ is the identity matrix in $\bbR^{|\cA|\times|\cA|}$, is invertible since the matrix determinant
lemma \citep{horn2012matrix} gives
$$
\det(\beta I+\textbf{e}b_s^\top)
=
\beta^{|\mathcal A|-1}(\beta+1)>0.
$$
Since $D$ is also invertible,
$D(B+\beta I)(\mathcal TQ-Q)=0$ iff $\mathcal TQ=Q$.
The Bellman optimality operator has the unique fixed point $Q^\star$.
\end{proof}

We next prove the contraction result used throughout the paper. Define
$$
F_{\eta,\beta}(Q)
:=
Q+\eta h_\beta(Q),
$$
and recall
\begin{equation}
L_\beta
:=
\max_{s,a}d(s)\bigl(\beta+\pi_b(a|s)\bigr),
\qquad
\kappa_\beta
:=
\min_{s,a}
d(s)
\left[
(1-\gamma)\beta-(1+\gamma)+2b(a|s)
\right].
\label{eq:app_rpq_constants}
\end{equation}

\begin{lemma}[Contraction of the \algo  fixed-point map]
\label{lem:app_rpq_contraction}
Suppose $\kappa_\beta>0$. For every
$0<\eta\le L_\beta^{-1}$,
$$
\|F_{\eta,\beta}(Q)-F_{\eta,\beta}(Q')\|_\infty
\le
(1-\eta\kappa_\beta)
\|Q-Q'\|_\infty.
$$
Consequently, it suffices that
$$
\beta>
\max\left\{
0,\,
\frac{1+\gamma-2b_{\min}}{1-\gamma}
\right\}.
$$
\end{lemma}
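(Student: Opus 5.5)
The plan is a row-wise bound on the linearization of $F_{\eta,\beta}$, in the spirit of the Q-learning contraction in \eqref{eq:async_q_learning_contraction}. The difference is that the off-diagonal mass of $G_\beta=D(B+\beta I)$ has to be paid for explicitly. Write $\Delta:=Q-Q'$ and $\Delta_T:=\mathcal TQ-\mathcal TQ'$, so that $\|\Delta_T\|_\infty\le\gamma\|\Delta\|_\infty$. Then
$$
F_{\eta,\beta}(Q)-F_{\eta,\beta}(Q')=(I-\eta G_\beta)\Delta+\eta G_\beta\Delta_T .
$$
Fix a coordinate $i=(s,a)$. The diagonal, off-diagonal and row-sum data of $G_\beta$ are available from \eqref{eq:app_G_rows}.

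First, split the $i$-th row into its diagonal and off-diagonal parts:
$$
[(I-\eta G_\beta)\Delta]_i=(1-\eta (G_\beta)_{ii})\Delta_i-\eta\sum_{j\ne i}(G_\beta)_{ij}\Delta_j .
$$
The condition $\eta\le L_\beta^{-1}$ gives $1-\eta(G_\beta)_{ii}=1-\eta d(s)(\beta+\pi_b(a|s))\ge 0$. The triangle inequality then bounds this term by $\bigl[1-\eta d(s)(\beta+\pi_b(a|s))+\eta d(s)(1-\pi_b(a|s))\bigr]\|\Delta\|_\infty$.

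Second, all entries of $G_\beta$ are nonnegative, so $|[\eta G_\beta\Delta_T]_i|\le \eta\, d(s)(\beta+1)\gamma\|\Delta\|_\infty$ by the row-sum identity.

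Adding the two bounds gives a coefficient
$$
1-\eta d(s)\bigl[\beta+\pi_b(a|s)-1+\pi_b(a|s)-\gamma\beta-\gamma\bigr]=1-\eta d(s)\bigl[(1-\gamma)\beta-(1+\gamma)+2\pi_b(a|s)\bigr].
$$
Taking the maximum over $i$ yields the factor $1-\eta\kappa_\beta$. This factor lies in $[0,1)$ because $\kappa_\beta>0$, and $\eta\kappa_\beta\le\eta L_\beta\le 1$ since $\kappa_\beta\le\min d(s)(\beta+\pi_b)\le L_\beta$.

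For the sufficient condition, note that $\kappa_\beta>0$ holds as soon as $(1-\gamma)\beta>(1+\gamma)-2\pi_b(a|s)$ for every $(s,a)$, because $d(s)>0$. The hardest case is $\pi_b(a|s)=b_{\min}$, which gives exactly $\beta>\frac{1+\gamma-2b_{\min}}{1-\gamma}$, together with $\beta>0$.

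The only delicate step is the sign of the diagonal coefficient $1-\eta(G_\beta)_{ii}$. Without $\eta\le L_\beta^{-1}$ the absolute value cannot be dropped, and that is precisely where this constraint enters. The rest is bookkeeping with \eqref{eq:app_G_rows}.
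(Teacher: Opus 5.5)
Your proposal is correct and follows the paper's proof essentially step for step. It uses the same split of the $i$-th row of $(I-\eta G_\beta)\Delta+\eta G_\beta\Delta_T$ into diagonal, off-diagonal and Bellman parts, the same use of $\eta\le L_\beta^{-1}$ to keep the diagonal coefficient nonnegative, the same row data \eqref{eq:app_G_rows}, and the same worst-case reduction to $\pi_b(a|s)=b_{\min}$ for the sufficient condition on $\beta$; your extra check that $\kappa_\beta\le L_\beta$, so that $1-\eta\kappa_\beta\ge 0$, is a harmless addition the paper leaves implicit.
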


\begin{proof}
Let $z:=Q-Q'$, $M:=\|z\|_\infty$, and
$\Delta_T:=\mathcal TQ-\mathcal TQ'$.
The Bellman contraction gives
$\|\Delta_T\|_\infty\le\gamma M$.

For $i=(s,a)$,
\begin{align*}
[F_{\eta,\beta}(Q)-F_{\eta,\beta}(Q')]_i
=
\bigl(1-\eta(D(B+\beta I))_{ii}\bigr)z_i
-
\eta\sum_{j\neq i}(D(B+\beta I))_{ij}z_j
+
\eta\sum_j(D(B+\beta I))_{ij}(\Delta_T)_j.
\end{align*}
Since $\eta\le L_\beta^{-1}$,
$1-\eta(D(B+\beta I))_{ii}\ge0$.
Using \eqref{eq:app_G_rows},
\begin{align*}
|[F_{\eta,\beta}(Q)-F_{\eta,\beta}(Q')]_i|
&\le
\Big[
1-\eta d(s)(\beta+\pi_b(a|s))
+\eta d(s)(1-\pi_b(a|s))
+\eta\gamma d(s)(\beta+1)
\Big]M
\\
&=
\Big[
1-\eta d(s)
\big\{(1-\gamma)\beta-(1+\gamma)+2b(a|s)\big\}
\Big]M
\\
&\le
(1-\eta\kappa_\beta)M.
\end{align*}
Taking the maximum over $i$ proves the contraction claim. Positivity of
$\kappa_\beta$ follows whenever
$(1-\gamma)\beta-(1+\gamma)+2b(a|s)>0$ for every $(s,a)$.
\end{proof}

\subsection{Almost-sure convergence under Markovian sampling}
\label{app:rpq_asymptotic}

We prove almost-sure convergence of \algo{} directly under the Markovian
trajectory generated by the behavior policy, using the stochastic-approximation (SA)
framework of \citet{liu2025ode}, which is an extension of the Borkar-Meyn theorem \citep{doi:10.1137/S0363012997331639} to the Markovian noise setting. Their recursion has the form
$Q_{t+1}=Q_t+\alpha_t H(Q_t,Y_{t+1})$. We verify 
\citet[Assumptions 1-6]{liu2025ode} in order to apply \citet[Theorem 7, Corollary 8]{liu2025ode}.

\textbf{Markovian SA representation.}
Let $Y_{t+1}:=(s_t,a_t,s_{t+1})$. For $y=(s,a,s')$, define
$\delta(Q,y):=r(s,a)+\gamma\max_{a'}Q(s',a')-Q(s,a)$ and
$$
H_\beta(Q,y)(s_0,a_0)
:=
\mathbf 1\{s_0=s\}
\left(
1+\beta\frac{\mathbf 1\{a=a_0\}}{\pi_b(a|s)}
\right)
\delta(Q,y).
$$
Then the  \algo{} recursion is
\begin{equation}
Q_{t+1}=Q_t+\alpha_tH_\beta(Q_t,Y_{t+1}).
\label{eq:app_markov_sa}
\end{equation}
Under stationarity,
$\mathbb{P}(Y=(s,a,s'))=d(s)\pi_b(a|s)P(s'|s,a)$, and therefore
$$
h_\beta(Q):=\mathbb E_{Y \sim d_Y}[H_\beta(Q,Y)]
=
D(B+\beta I)(\mathcal TQ-Q).
$$

\textbf{Assumption 1: unique invariant distribution.}
Under Assumption~\ref{ass:behavior}, the state chain induced by $\pi_b$ is
finite, irreducible, and aperiodic. Since $\pi_b$ has full support, the
transition chain $Y_t=(s_{t-1},a_{t-1},s_t)$ is finite-state and ergodic on
its communicating class, with unique invariant distribution
$d_Y(s,u,s')=d(s)\pi_b(u|s)P(s'|s,u)$. Thus Assumption~1 holds.

\textbf{Assumption 2: learning-rate regularity.}
We take
$$
\alpha_t=\frac{\alpha_0}{(t+h)^p},
\qquad
\frac12<p\le1.
$$
Then $\alpha_t>0$, $\alpha_t\downarrow0$, and
$\sum_t\alpha_t=\infty$. 

\citet[Remark 1]{liu2025ode} implies that Assumption~2 holds.

\textbf{Assumption 3: scaled-update limit.}
For $c\ge1$, define $H_{\beta,c}(Q,y):=H_\beta(cQ,y)/c$ and $y=(s,a,s')$. Then
$$
H_{\beta,c}(Q,y)(s_0,a_0)
=
\mathbf 1\{s_0=s\}
\left(
1+\beta\frac{\mathbf 1\{a_0=a\}}{\pi_b(a|s)}
\right)
\left[
\frac{r(s,a)}{c}
+\gamma\max_{a'}Q(s',a')
-Q(s,a)
\right].
$$
Define
$$
H_{\beta,\infty}(Q,y)(s_0,a_0)
:=
\mathbf 1\{s_0=s\}
\left(
1+\beta\frac{\mathbf 1\{a_0=a\}}{\pi_b(a|s)}
\right)
\left[
\gamma\max_{a'}Q(s',a')
-Q(s,a)
\right].
$$
Hence
\begin{equation}
\label{eq:ass3-scaling}
H_{\beta,c}(Q,y)-H_{\beta,\infty}(Q,y)
=
\frac1cR_\beta(y),
\end{equation}
where
$$
R_\beta(y)(s_0,a_0)
:=
\mathbf 1\{s_0=s\}
\left(
1+\beta\frac{\mathbf 1\{a_0=a\}}{\pi_b(a|s)}
\right)r(s,a).
$$
Thus Assumption~3 holds with $\kappa(c)=1/c$ and
$b(Q,y)=R_\beta(y)$. Since $b(Q,y)$ is independent of $Q$, its Lipschitz
constant can be taken as $L_b(y)\equiv0$.

\textbf{Assumption 4: Lipschitz continuity and integrability.}
For any $Q,Q'$ and $y=(s,a,s')$,
$$
|\delta(Q,y)-\delta(Q',y)|
\le
(1+\gamma)\|Q-Q'\|_\infty.
$$
Therefore, with $b_{\min}:=\min_{s,a}\pi_b(a|s)>0$,
$$
\|H_\beta(Q,y)-H_\beta(Q',y)\|_\infty
\le
\left(1+\frac{\beta}{b_{\min}}\right)(1+\gamma)
\|Q-Q'\|_\infty.
$$
The same bound holds for $H_{\beta,\infty}$. Hence Assumption~4 holds with
the constant Lipschitz function
$L(y)\equiv(1+\beta/b_{\min})(1+\gamma)$, whose invariant expectation is
finite. Since the Markov state space is finite and rewards are bounded, the
remaining required integrability conditions also hold.

Averaging $H_{\beta,\infty}$ gives
\begin{equation}
h_{\beta,\infty}(Q)
=
D(B+\beta I)(\mathcal T_0Q-Q),
\label{eq:app_rpq_hinf}
\end{equation}
where
$(\mathcal T_0Q)(s,a)
:=
\gamma\sum_{s'}P(s'|s,a)\max_{a'}Q(s',a')$.

\textbf{Assumption 5: scaling limit and stability at infinity.}
Let $h_{\beta,c}(Q):=h_\beta(cQ)/c$. Taking expectations in \eqref{eq:ass3-scaling} gives
$$
h_{\beta,c}(Q)-h_{\beta,\infty}(Q)
=
\frac1c\mathbb E_{d_Y}[R_\beta(Y)].
$$
Hence, for every compact set $K$,
$$
\sup_{Q\in K}
\|h_{\beta,c}(Q)-h_{\beta,\infty}(Q)\|_\infty
\le
\frac{C_R}{c}
\to0,
$$
where $C_R:=\|\mathbb E_{d_Y}[R_\beta(Y)]\|_\infty<\infty$.

It remains to verify global asymptotic stability of the limiting ODE
$\dot Q=h_{\beta,\infty}(Q)$. Since $\mathcal T_0$ is a
$\gamma$-contraction with unique fixed point $0$, the argument of
Lemma~\ref{lem:app_rpq_contraction} gives, for every
$0<\eta\le L_\beta^{-1}$,
$$
\|Q+\eta h_{\beta,\infty}(Q)\|_\infty
\le
(1-\eta\kappa_\beta)\|Q\|_\infty.
$$

Let $Q(\cdot)$ solve $\dot Q=h_{\beta,\infty}(Q)$ and set
$W(Q):=\|Q\|_\infty$. Since $h_{\beta,\infty}$ is globally Lipschitz,
$$
Q(t+h)
=
Q(t)+h h_{\beta,\infty}(Q(t))+r_t(h),
\qquad
\frac{\|r_t(h)\|_\infty}{h}\to0
\quad(h\downarrow0).
$$
Using the $1$-Lipschitz property of the sup norm, for sufficiently small
$h>0$,
$$
W(Q(t+h))
\le
\|Q(t)+h h_{\beta,\infty}(Q(t))\|_\infty+\|r_t(h)\|_\infty
\le
(1-h\kappa_\beta)W(Q(t))+\|r_t(h)\|_\infty.
$$
Therefore
\begin{align}
D^+W(Q(t))
&:=
\limsup_{h\downarrow0}
\frac{W(Q(t+h))-W(Q(t))}{h}
\nonumber\\
&\le
-\kappa_\beta W(Q(t))
+
\limsup_{h\downarrow0}\frac{\|r_t(h)\|_\infty}{h}
=
-\kappa_\beta W(Q(t)).
\label{eq:app_markov_dini_inf}
\end{align}

Applying the comparison lemma
\citep[Lemma~3.4]{khalil2002nonlinear}
to the scalar ODE
$\dot u=-\kappa_\beta u$ with
$u(0)=W(x(0))$ yields
$$
W(x(t))
\le
e^{-\kappa_\beta t}W(x(0)).
$$
Since $W(x)=\|x\|_\infty$,
$$
\|Q(t)\|_\infty
\le
e^{-\kappa_\beta t}\|Q(0)\|_\infty,
$$
so the origin is globally exponentially stable. This verifies
Assumption~5.

\textbf{Assumption 6: Markovian averaging.}
Assumption~6 of \citet{liu2025ode} requires, for the relevant functions
$g$ and every initial state of the Markov chain,
$$
\alpha_n
\sum_{t=1}^n
\left(
g(Y_t)-\mathbb E_{d_Y}[g(Y)]
\right)
\to0
\qquad\text{a.s.}
$$
For $g=L_b$, this is immediate since $L_b\equiv0$, and for $g=L$ it is
immediate since $L$ is constant. For fixed $Q$, the function
$y\mapsto H_\beta(Q,y)$ is bounded on the finite state space. Since
$\{Y_t\}$ is irreducible and aperiodic, \citet[Remark 2]{liu2025ode} implies Assumption~6 holds.

\textbf{Global exponential stability of the limiting ODE.}
The mean drift
$h_\beta(Q)=D(B+\beta I)(\mathcal TQ-Q)$ is globally Lipschitz since
$$
\|h_\beta(Q)-h_\beta(Q')\|_\infty
\le
(1+\gamma)\|D(B+\beta I)\|_\infty
\|Q-Q'\|_\infty.
$$
Hence $\dot Q=h_\beta(Q)$ has a unique global solution.

Set $x(t):=Q(t)-Q^\star$ and $W(x):=\|x\|_\infty$. Since
$h_\beta(Q^\star)=0$, Lemma~\ref{lem:app_rpq_contraction} gives, for every
$0<\eta\le L_\beta^{-1}$,
$$
\|x+\eta h_\beta(Q^\star+x)\|_\infty
\le
(1-\eta\kappa_\beta)\|x\|_\infty.
$$
Along an ODE solution,
$$
x(t+h)
=
x(t)+h h_\beta(Q^\star+x(t))+r_t(h),
\qquad
\frac{\|r_t(h)\|_\infty}{h}\to0.
$$
Therefore, using again the $1$-Lipschitz property of the sup norm,
$$
W(x(t+h))
\le
(1-h\kappa_\beta)W(x(t))+\|r_t(h)\|_\infty
$$
for all sufficiently small $h>0$. Hence
\begin{align}
D^+W(x(t))
&:=
\limsup_{h\downarrow0}
\frac{W(x(t+h))-W(x(t))}{h}
\nonumber\\
&\le
-\kappa_\beta W(x(t)).
\label{eq:app_rpq_dini}
\end{align}

Applying the comparison lemma
\citep[Lemma~3.4]{khalil2002nonlinear}
to the scalar ODE
$\dot u=-\kappa_\beta u$ with
$u(0)=W(x(0))$ yields
$$
W(x(t))
\le
e^{-\kappa_\beta t}W(x(0)).
$$
Since $V(x)=\|x\|_\infty$,
\begin{equation}
\label{eq:app_rpq_ges}
\|Q(t)-Q^\star\|_\infty
\le
e^{-\kappa_\beta t}
\|Q(0)-Q^\star\|_\infty.
\end{equation}
Hence $Q^\star$ is globally exponentially stable for the limiting ODE.

\begin{theorem}[Almost-sure convergence of \algo{} under Markovian sampling]
\label{thm:app_rpq_asymptotic}
Suppose Assumption~\ref{ass:behavior} holds, rewards are bounded,
$\kappa_\beta>0$, and
$$
\alpha_t=\frac{\alpha_0}{(t+h)^p},
\qquad
\frac12<p\le1.
$$
Then
$$
\sup_t\|Q_t\|_\infty<\infty
\qquad\text{a.s.},
\qquad
Q_t\to Q^\star
\qquad\text{a.s.}
$$
\end{theorem}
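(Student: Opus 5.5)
The proof assembles the verifications already carried out above in this appendix, so it is mostly a matter of checking that they are exactly the hypotheses of \citet[Theorem 7, Corollary 8]{liu2025ode}. First we write \algo{} as the Markovian recursion \eqref{eq:app_markov_sa}, $Q_{t+1}=Q_t+\alpha_tH_\beta(Q_t,Y_{t+1})$, driven by the transition chain $Y_{t+1}=(s_t,a_t,s_{t+1})$. The stationary mean field is $h_\beta(Q)=D(B+\beta I)(\mathcal TQ-Q)$. We then go through Assumptions 1--6 of \citet{liu2025ode} in order:
\begin{itemize}
\item \emph{Assumption 1:} ergodicity of $\{Y_t\}$ with invariant law $d_Y$, from Assumption~\ref{ass:behavior}.
\item \emph{Assumption 2:} the polynomial stepsize $\alpha_t=\alpha_0/(t+h)^p$ with $p\in(1/2,1]$, via \citet[Remark 1]{liu2025ode}.
\item \emph{Assumption 3:} the scaling decomposition \eqref{eq:ass3-scaling} with $\kappa(c)=1/c$ and a $Q$-independent $b=R_\beta$.
\item \emph{Assumption 4:} the uniform Lipschitz constant $(1+\beta/b_{\min})(1+\gamma)$ of $H_\beta$ and of $H_{\beta,\infty}$.
\item \emph{Assumption 5:} uniform convergence $h_{\beta,c}\to h_{\beta,\infty}$ on compacts, together with global exponential stability of the origin for $\dot Q=h_{\beta,\infty}(Q)$.
\item \emph{Assumption 6:} the Markovian averaging condition, via \citet[Remark 2]{liu2025ode}, since all relevant functions are bounded on a finite state space.
\end{itemize}

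\textbf{Stability.} With all six assumptions in place, \citet[Theorem 7]{liu2025ode} yields $\sup_t\|Q_t\|_\infty<\infty$ almost surely. The step that carries the real content is the global exponential stability of the scaled-limit ODE required by Assumption 5. It rests on applying Lemma~\ref{lem:app_rpq_contraction} to the homogeneous operator $\mathcal T_0$, which is a $\gamma$-contraction with fixed point $0$. This gives $\|Q+\eta h_{\beta,\infty}(Q)\|_\infty\le(1-\eta\kappa_\beta)\|Q\|_\infty$. We convert this to the Dini-derivative bound \eqref{eq:app_markov_dini_inf} using the first-order expansion of the ODE solution. The comparison lemma then gives $\|Q(t)\|_\infty\le e^{-\kappa_\beta t}\|Q(0)\|_\infty$. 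This is where $\kappa_\beta>0$ is used.

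\textbf{Convergence.} With boundedness in hand, \citet[Corollary 8]{liu2025ode} shows that the iterates converge almost surely to a compact, connected, internally chain transitive (in particular invariant) set of $\dot Q=h_\beta(Q)$.

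\textbf{Identifying the limit set.} We use the global exponential stability estimate \eqref{eq:app_rpq_ges}, $\|Q(t)-Q^\star\|_\infty\le e^{-\kappa_\beta t}\|Q(0)-Q^\star\|_\infty$. It is obtained by the same contraction and Dini argument applied to $x=Q-Q^\star$, with $h_\beta(Q^\star)=0$ by Lemma~\ref{lem:app_rpq_fixed_point}. Suppose $\mathcal K$ is a bounded invariant set containing a point $Q_0\neq Q^\star$. Write $R:=\sup_{Q\in\mathcal K}\|Q-Q^\star\|_\infty<\infty$.
\begin{itemize}
\item By invariance, the trajectory through $Q_0$ extends backward within $\mathcal K$ for all time, so there is a point $Q'\in\mathcal K$ with $\phi_t(Q')=Q_0$ for every $t>0$.
\item The exponential estimate then gives $\|Q_0-Q^\star\|_\infty\le e^{-\kappa_\beta t}R$.
\item Letting $t\to\infty$ forces $Q_0=Q^\star$, a contradiction.
\end{itemize}
Hence $\mathcal K=\{Q^\star\}$, and $Q_t\to Q^\star$ almost surely.

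\textbf{Main obstacle.} The delicate point is not any single computation but the matching to \citet{liu2025ode}. One issue is that their Assumption 6 must hold for every initial state of the chain, including initial states outside the recurrent class of $Y$ if $Y_0$ is arbitrary. Another is that their stepsize condition (Remark 1) is exactly what restricts us to $p\in(1/2,1]$. I would verify these carefully against their statements. The dynamical-systems content itself reduces to Lemma~\ref{lem:app_rpq_contraction}.
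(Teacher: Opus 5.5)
Your proposal is correct and follows essentially the same route as the paper. You verify Assumptions~1--6 of \citet{liu2025ode}, obtaining exponential stability of $\dot Q=h_{\beta,\infty}(Q)$ from Lemma~\ref{lem:app_rpq_contraction} applied to $\mathcal T_0$ together with the Dini/comparison argument; you invoke their Theorem~7 for boundedness and Corollary~8 for convergence to a bounded invariant set; and you use \eqref{eq:app_rpq_ges} to conclude that this set is $\{Q^\star\}$. Your backward-invariance argument simply makes the paper's one-line final step explicit, with one wording fix: the preimage point must depend on $t$, i.e.\ for each $t>0$ there is some $Q'_t\in\mathcal K$ with $\phi_t(Q'_t)=Q_0$.
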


\begin{proof}
The preceding arguments verify Assumptions~1-6 of
\citet{liu2025ode}. Their Theorem~7 therefore gives
$\sup_t\|Q_t\|_\infty<\infty$ almost surely.

Their Corollary~8 implies that the asymptotic limit set of
$\{Q_t\}$ is almost surely a bounded invariant set of the limiting ODE
$\dot Q=h_\beta(Q)$. By \eqref{eq:app_rpq_ges}, every trajectory of this
ODE converges exponentially to $Q^\star$, so its global attractor is the
singleton $\{Q^\star\}$. Hence the stochastic-approximation limit set is
contained in $\{Q^\star\}$, and therefore $Q_t\to Q^\star$ almost surely.
\end{proof}

\begin{remark}
The iterates generated by asynchronous Q-learning can be shown to be bounded a.s. by a simple induction argument \citep{gosavi2006boundedness}. However, the same induction based argument doesn't work for \algo. This is because the TD information pertaining to the state-action pair generated by the behvaiour policy is used to update the Q-values of multiple state-action pairs in \algo. Hence, we require the ODE scaling arguments for the  stability of iterates \citep{abounadi2002stochastic, liu2025ode}
\end{remark}
\subsection{Finite-time analysis under Markovian sampling}
\label{app:finite-time-rpq}

We now prove the finite-time guarantees for \algo{} under the Markovian
trajectory of Assumption~\ref{ass:behavior}. The argument follows the
contractive Markovian stochastic-approximation framework of
\citet{chen2024lyapunov}. We first cast \algo{} into their fixed-point form,
then construct the generalized Moreau-envelope Lyapunov function, and finally
specialize the resulting general recursion to different stepsize schedules.

\paragraph{Markovian contractive-SA representation.}
Recall the sample update map $H_\beta(Q,Y_{k+1})$ from
Appendix~\ref{app:rpq_asymptotic}, so that
$$
Q_{k+1}
=
Q_k+\alpha_kH_\beta(Q_k,Y_{k+1}),
$$
and
$\mathbb E_{d_Y}[H_\beta(Q,Y)]=h_\beta(Q)$.
For an auxiliary analysis parameter $\eta>0$, recall the \algo fixed point map,
$$
\mathsf F_{\eta,\beta}(Q,y)
:=
Q+\eta H_\beta(Q,y),
\qquad
F_{\eta,\beta}(Q)
:=
\mathbb E_{d_Y}[\mathsf F_{\eta,\beta}(Q,Y)]
=
Q+\eta h_\beta(Q).
$$
Writing
$$
\epsilon_k:=\frac{\alpha_k}{\eta},
$$
the \algo{} recursion becomes
\begin{equation}
Q_{k+1}
=
Q_k
+
\epsilon_k
\bigl[
\mathsf F_{\eta,\beta}(Q_k,Y_{k+1})-Q_k
\bigr].
\label{eq:app_markov_contract_sa}
\end{equation}
By Lemma~\ref{lem:app_rpq_contraction},
$F_{\eta,\beta}$ is a sup-norm contraction whenever
$\kappa_\beta>0$ and $0<\eta\le L_\beta^{-1}$, with contraction factor
$$
\rho_{\eta,\beta}:=1-\eta\kappa_\beta.
$$

We next verify the growth constants required by
\citet{chen2024lyapunov}. Let
$$
C_\beta:=1+\frac{\beta}{b_{\min}}.
$$
From the sample update,
$$
\|H_\beta(Q,y)-H_\beta(Q',y)\|_\infty
\le
C_\beta(1+\gamma)\|Q-Q'\|_\infty,
$$
and therefore
$$
\|\mathsf F_{\eta,\beta}(Q,y)
-\mathsf F_{\eta,\beta}(Q',y)\|_\infty
\le
A_1\|Q-Q'\|_\infty,
\qquad
A_1:=1+\eta C_\beta(1+\gamma).
$$
Moreover,
$$
\|\mathsf F_{\eta,\beta}(0,y)\|_\infty
\le
B_1,
\qquad
B_1:=\eta C_\beta .
$$
There is no additional martingale-difference term in
\eqref{eq:app_markov_contract_sa}, so in the notation of
\citet{chen2024lyapunov} we may take
$A_2=B_2=0$. Hence define
$$
A:=A_1+1
=
2+\eta C_\beta(1+\gamma),
\qquad
B:=B_1
=
\eta C_\beta ,
$$
and
\begin{equation}
\label{eq:app_c1_c2}
c_1
:=
\left(
\|Q_0-Q^\star\|_\infty
+\|Q_0\|_\infty
+\frac BA
\right)^2,
\qquad
c_2
:=
\left(
A\|Q^\star\|_\infty+B
\right)^2.
\end{equation}

\paragraph{Mixing time.}
Let $\mathcal M_Y$ denote the finite-state Markov chain $\{Y_k\}$ and define
its mixing time at precision $\delta>0$ by
$$
t_\delta
:=
\min\left\{
m\ge0:
\max_y
\|P^m(y,\cdot)-d_Y\|_{\mathrm{TV}}
\le\delta
\right\}.
$$
For the effective stepsizes $\epsilon_k$, write
$t_k:=t_{\epsilon_k}$ and, for $i\le j$,
$\epsilon_{i,j}:=\sum_{\ell=i}^j\epsilon_\ell$.
Under Assumption~\ref{ass:behavior}, $\mathcal M_Y$ is finite,
irreducible and aperiodic, and hence geometrically mixing. In particular,
$t_\delta=O(\log(1/\delta))$.

\paragraph{Generalized Moreau envelope.}
To exploit the sup-norm contraction of
$F_{\eta,\beta}$, define
$f(x):=\frac12\|x\|_\infty^2$ and
$g(x):=\frac12\|x\|_p^2$, with $p=2\log(|\cS||\cA|)$ and
$d:=|\mathcal S||\mathcal A|$. For $d\ge3$, the $\ell_p$ norm satisfies
$$
\ell_{cs}\|x\|_p
\le
\|x\|_\infty
\le
u_{cs}\|x\|_p,
\qquad
\ell_{cs}=\frac1{\sqrt e},
\qquad
u_{cs}=1,
$$
and $g$ is $L$-smooth with $L=p-1\le2\log(|\cS||\cA|)$.

For $\theta>0$, define the generalized Moreau envelope
$$
\mathcal M_\theta(x)
:=
\min_{u\in\mathbb R^d}
\left\{
\frac12\|u\|_\infty^2
+
\frac1{2\theta}\|x-u\|_p^2
\right\}.
$$

The properties of the generalized Moreau envelope are summarized in Lemma~\ref{lem:moreau_technical} and Corollary~\ref{cor:moreau_infty}.

Let $\rho:=\rho_{\eta,\beta}$ and choose
$$
\theta
:=
\left(\frac{1+\rho}{2\rho}\right)^2-1.
$$
The auxiliary constants of the Moreau analysis are
\begin{equation}
\phi_1
:=
\frac{1+\theta u_{cs}^2}
     {1+\theta\ell_{cs}^2},
\qquad
\phi_2
:=
1-\rho\sqrt{\phi_1},
\qquad
\phi_3
:=
\frac{114L(1+\theta u_{cs}^2)}
     {\theta\ell_{cs}^2}.
\label{eq:app_phi_constants}
\end{equation}
For the sup norm, Corollary~\ref{cor:moreau_infty} gives
\begin{equation}
\label{eq:app_phi_bounds}
\phi_1\le3,
\qquad
\phi_2\ge\frac{1-\rho}{2}
=\frac{\eta\kappa_\beta}{2},
\qquad
\phi_3
\le
\frac{456e\log(|\cS||\cA|)}{1-\rho}
=
\frac{456e\log(|\cS||\cA|)}{\eta\kappa_\beta}.
\end{equation}

\paragraph{One-step Markovian Lyapunov recursion.}
The Markovian analysis is difficult because 
$\mathsf F_{\eta,\beta}(Q_k,Y_{k+1})$ is not conditionally unbiased given
$Q_k$. Its effect is controlled by allowing the Markov chain to mix over the
preceding $t_k$ iterations.

Assume the effective stepsizes are nonincreasing and satisfy
\begin{equation}
\epsilon_{k-t_k,k-1}
\le
\min\left\{
\frac{\phi_2}{\phi_3A^2},
\frac1{4A}
\right\},
\qquad k\ge t_k.
\label{eq:app_markov_stepsize_condition}
\end{equation}
Then Lemma~A.6 of \citet{chen2024lyapunov}, specialized to
\eqref{eq:app_markov_contract_sa}, gives $\forall k>t_k$
$$
\mathbb E[
\mathcal M_\theta(Q_{k+1}-Q^\star)
]
\le
(1-\phi_2\epsilon_k)
\mathbb E[
\mathcal M_\theta(Q_k-Q^\star)
]
+
\frac{\phi_3c_2}{2u_{cm}^2}
\epsilon_k\epsilon_{k-t_k,k-1}.
$$
Thus the first term gives contractive drift, while the second captures the
additional error due to temporal dependence.

Let
$$
K:=\min\{k\ge0:k\ge t_k\}.
$$
Using the norm equivalence of the Moreau envelope and repeatedly applying the
preceding recursion gives, for every $k\ge K$,
\begin{equation}
\begin{aligned}
\mathbb E\|Q_k-Q^\star\|_\infty^2
\le\;&
\phi_1c_1
\prod_{j=K}^{k-1}(1-\phi_2\epsilon_j)+
\phi_3c_2
\sum_{i=K}^{k-1}
\epsilon_i\epsilon_{i-t_i,i-1}
\prod_{j=i+1}^{k-1}(1-\phi_2\epsilon_j).
\end{aligned}
\label{eq:app_markov_general_bound}
\end{equation}
Equation \ref{eq:app_markov_general_bound} is the finite-time bound
used below.

\begin{theorem}[Finite-time guarantees for \algo{} under Markovian sampling]
\label{thm:app_finite_rpq}
Suppose Assumption~\ref{ass:behavior} holds and $\kappa_\beta>0$. Set
$\eta=L_\beta^{-1}$ and define the effective stepsize
$\epsilon_k:=\alpha_kL_\beta$.
Define
$$
A_\beta
:=
2+
\frac{(1+\beta/b_{\min})(1+\gamma)}{L_\beta},
\qquad
B_\beta
:=
\frac{(1+\beta/b_{\min})}{L_\beta},
$$
and
$$
c_{1,\beta}
:=
\left(
\|Q_0-Q^\star\|_\infty+\|Q_0\|_\infty
+\frac{B_\beta}{A_\beta}
\right)^2,
\qquad
c_{2,\beta}
:=
\left(
A_\beta\|Q^\star\|_\infty+B_\beta
\right)^2.
$$
Assume throughout that the effective stepsizes are nonincreasing and satisfy
$\epsilon_{k-t_k,k-1}\le\min\left\{
\frac{\kappa_\beta^2}
{912eL_\beta^2A_\beta^2\log(|\cS||\cA|)},
\frac1{4A_\beta}
\right\}$ for every $k\ge t_k$, where
$\epsilon_{i,j}:=\sum_{\ell=i}^j\epsilon_\ell$. Then:

\medskip
\noindent
\textnormal{(i) Constant stepsize.}
If $\alpha_k\equiv\alpha$ and
$$
\alpha L_\beta\,t_{\alpha L_\beta}\le\min\left\{
\frac{\kappa_\beta^2}
{912eL_\beta^2A_\beta^2\log(|\cS||\cA|)},
\frac1{4A_\beta}
\right\},
$$
then, for every $k\ge t_{\alpha L_\beta}$,
\begin{equation}
\mathbb E\|Q_k-Q^\star\|_\infty^2
\le
3c_{1,\beta}
\left(1-\frac{\alpha\kappa_\beta}{2}\right)^{
k-t_{\alpha L_\beta}}
+
912e\,
\frac{\alpha L_\beta^3c_{2,\beta}\log(|\cS||\cA|)}
{\kappa_\beta^2}
\,t_{\alpha L_\beta}.
\label{eq:app_constant_markov_explicit}
\end{equation}
In particular, geometric mixing gives
$t_{\alpha L_\beta}=O(\log(1/(\alpha L_\beta)))$.

\medskip
\noindent
\textnormal{(ii) Inverse-linear stepsize.}
Let
$$
\alpha_k
=
\frac{4}{\kappa_\beta(k+h)},
$$
with $h$ sufficiently large that the above mixing-window condition holds.
Then, for every $k\ge K$,
\begin{equation}
\mathbb E\|Q_k-Q^\star\|_\infty^2
\le
3c_{1,\beta}
\left(\frac{K+h}{k+h}\right)^2
+
58368e^2
\frac{L_\beta^3c_{2,\beta}\log(|\cS||\cA|)}
{\kappa_\beta^3}
\frac{t_k}{k+h},
\label{eq:app_linear_markov_explicit}
\end{equation}
where
$$
t_k
=
t_{\frac{4L_\beta}{\kappa_\beta(k+h)}}.
$$
Hence, under geometric mixing,
$$
\mathbb E\|Q_k-Q^\star\|_\infty^2
=
O\left(\frac{\log k}{k}\right).
$$

\end{theorem}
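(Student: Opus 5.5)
The plan is to specialize the general unrolled bound \eqref{eq:app_markov_general_bound} with $\eta=L_\beta^{-1}$, and then treat the two stepsize regimes separately. With this choice, Lemma~\ref{lem:app_rpq_contraction} gives $\rho=1-\kappa_\beta/L_\beta$. The growth constants become $A=A_\beta$ and $B=B_\beta$, so $c_1=c_{1,\beta}$ and $c_2=c_{2,\beta}$. The bounds \eqref{eq:app_phi_bounds} then read $\phi_1\le 3$, $\phi_2\ge \kappa_\beta/(2L_\beta)$ and $\phi_3\le 456e\log(|\cS||\cA|)L_\beta/\kappa_\beta$.

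The first step is to check that the assumed mixing-window condition implies \eqref{eq:app_markov_stepsize_condition}. Substituting the lower bound on $\phi_2$ and the upper bound on $\phi_3$ gives
\[
\frac{\phi_2}{\phi_3A^2}\ge \frac{\kappa_\beta^2}{912eL_\beta^2A_\beta^2\log(|\cS||\cA|)} .
\]
Hence the hypothesis of the theorem suffices, and \eqref{eq:app_markov_general_bound} holds for all $k\ge K$. In both regimes the contraction factor satisfies $1-\phi_2\epsilon_j\le 1-\alpha_j\kappa_\beta/2$, since $\phi_2\epsilon_j\ge \alpha_jL_\beta\kappa_\beta/(2L_\beta)$.

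For (i), the stepsize is constant: $\epsilon_k=\alpha L_\beta$, $t_k=t_{\alpha L_\beta}$, $K=t_{\alpha L_\beta}$, and $\epsilon_{i-t_i,i-1}=\alpha L_\beta t_{\alpha L_\beta}$. The first term of \eqref{eq:app_markov_general_bound} is then at most $3c_{1,\beta}(1-\alpha\kappa_\beta/2)^{k-t_{\alpha L_\beta}}$. For the second term, bound the sum by the geometric series $\sum_{i}(1-\alpha\kappa_\beta/2)^{k-1-i}\le 2/(\alpha\kappa_\beta)$. This gives
\[
\phi_3c_{2,\beta}(\alpha L_\beta)^2t_{\alpha L_\beta}\cdot\frac{2}{\alpha\kappa_\beta}\le 912e\,\frac{\alpha L_\beta^3c_{2,\beta}\log(|\cS||\cA|)}{\kappa_\beta^2}\,t_{\alpha L_\beta},
\]
which is exactly \eqref{eq:app_constant_markov_explicit}. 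The mixing statement then follows from geometric mixing of $\{Y_t\}$ in Assumption~\ref{ass:behavior}.

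For (ii), take $\alpha_k=4/(\kappa_\beta(k+h))$, so that $\phi_2\epsilon_j\ge 2/(j+h)$. Using $1-x\le e^{-x}$ and the integral comparison $\sum_{j=i+1}^{k-1}\frac1{j+h}\ge\log\frac{k+h}{i+1+h}$, we get
\[
\prod_{j=i+1}^{k-1}(1-\phi_2\epsilon_j)\le\left(\frac{i+1+h}{k+h}\right)^2 .
\]
This yields the first term $3c_{1,\beta}((K+h)/(k+h))^2$. For the noise term, use monotonicity to write $\epsilon_{i-t_i,i-1}\le t_i\,\epsilon_{i-t_i}\le t_k\cdot 4L_\beta/(\kappa_\beta(i-t_i+h))$. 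Choose $h$ large enough (say $h\ge 2t_k$, which is compatible since $t_k$ grows only logarithmically) that $i-t_i+h\ge (i+h)/2$.

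The remaining noise sum is then bounded by a constant multiple of
\[
\frac{L_\beta^2t_k}{\kappa_\beta^2}\sum_{i}\frac{1}{(i+h)^2}\cdot\frac{(i+1+h)^2}{(k+h)^2}\lesssim\frac{L_\beta^2t_k}{\kappa_\beta^2(k+h)} .
\]
Multiplying by $\phi_3c_{2,\beta}$ gives the $L_\beta^3/\kappa_\beta^3$ scaling. The main obstacle is bookkeeping the constant $58368e^2$: it collects $16$ from $\epsilon^2$, a factor $2$ from the shifted window, factors $(1+1/h)^2$, and $456e$ from $\phi_3$. I would verify it by crude bounds, for example $(i+1+h)/(i+h)\le2$, rather than exact summation. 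The $O(\log k/k)$ conclusion then follows since $t_k=O(\log k)$.
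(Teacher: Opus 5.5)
Your proposal is correct. The setup, the check that the hypothesis implies the window condition via $\phi_2/(\phi_3A_\beta^2)\ge \kappa_\beta^2/(912eL_\beta^2A_\beta^2\log(|\mathcal S||\mathcal A|))$, and part (i) coincide with the paper's proof. The paper bounds the geometric sum by $(\phi_2\epsilon)^{-1}$ and then bounds $\phi_3\epsilon/\phi_2$, which is the same computation as your $2/(\alpha\kappa_\beta)$.

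The difference is in part (ii).
\begin{itemize}
\item \textbf{The paper's route.} It does not unroll by hand. It invokes the inverse-linear corollary of Chen et al., namely $\phi_1c_{1,\beta}\bigl(\tfrac{K+h}{k+h}\bigr)^{\phi_2a}+\tfrac{8ea^2\phi_3c_{2,\beta}}{\phi_2a-1}\tfrac{t_k}{k+h}$ with $a=4L_\beta/\kappa_\beta$. It then uses only $\phi_2a\ge2$, which gives the exponent $2$ and $\phi_2a-1\ge1$.
\item \textbf{Your route.} You freeze the exponent at $2$ from the start and sum directly, so the $1/(\phi_2a-1)$ factor never appears. With $(i+1+h)/(i+h)\le2$ you get $8a^2\phi_3\le 58368e\,L_\beta^3\log(|\mathcal S||\mathcal A|)/\kappa_\beta^3$. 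This is the stated bound with one factor of $e$ to spare.
\end{itemize}
The citation is shorter and covers any $\phi_2a>1$. Your argument is self-contained and makes explicit a condition on $h$ that the paper leaves inside Chen et al.'s ``$h$ sufficiently large''.

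One wording fix. You cannot take a fixed $h\ge 2t_k$ for all $k$, because $t_k\to\infty$. What you actually need is $i+h\ge 2t_i$ for all $i\ge K$. This holds once $h$ is large, since geometric mixing gives $t_i=O(\log(i+h))$. It is exactly what yields $\epsilon_{i-t_i}\le 2\epsilon_i$, and hence $\epsilon_{i-t_i,i-1}\le 2t_k\epsilon_i$ using $t_i\le t_k$.
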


\begin{proof}
Fix $\eta=L_\beta^{-1}$ throughout. The effective Markovian-SA recursion has
stepsizes $\epsilon_k=\alpha_kL_\beta$ and averaged contraction factor
$$
\rho
=
1-\frac{\kappa_\beta}{L_\beta}.
$$
For the generalized Moreau envelope introduced above, let
$\phi_1,\phi_2,\phi_3$ be the constants of
\citet{chen2024lyapunov}. By \eqref{eq:app_phi_bounds},
\begin{equation}
\phi_1\le3,
\qquad
\phi_2\ge\frac{\kappa_\beta}{2L_\beta},
\qquad
\phi_3\le
\frac{456eL_\beta\log(|\cS||\cA|)}{\kappa_\beta}.
\label{eq:app_phi_bounds_dummy}
\end{equation}

With $\eta=L_\beta^{-1}$, the sample-operator growth constants $c_1, c_2$ given in \eqref{eq:app_c1_c2} are

$$
A_\beta
=
2+
\frac{(1+\beta/b_{\min})(1+\gamma)}{L_\beta}, \quad 
B_\beta
=
B_1+B_2
=
\frac{(1+\beta/b_{\min})}{L_\beta}
$$
and consequently
$$
c_{1,\beta}
=\left(
\|Q_0-Q^\star\|_\infty+\|Q_0\|_\infty
+\frac{B_\beta}{A_\beta}
\right)^2, \quad 
c_{2, \beta}
=
\left(
A_\beta\|Q^\star\|_\infty+B_\beta
\right)^2.
$$

The stepsize requirement of \citet{chen2024lyapunov} is
$$
\epsilon_{k-t_k,k-1}
\le
\min\left\{
\frac{\phi_2}{\phi_3A_\beta^2},
\frac1{4A_\beta}
\right\}.
$$
Using \eqref{eq:app_phi_bounds_dummy},
$$
\frac{\phi_2}{\phi_3A_\beta^2}
\ge
\frac{\kappa_\beta^2}
{912eL_\beta^2A_\beta^2\log(|\cS||\cA|)},
$$
so the sufficient condition
$\epsilon_{k-t_k,k-1}\le\min\left\{
\frac{\kappa_\beta^2}
{912eL_\beta^2A_\beta^2\log(|\cS||\cA|)},
\frac1{4A_\beta}
\right\}$ used in the theorem implies the desired condition.

Their general Markovian-SA bound therefore gives, for every $k\ge K$,
\begin{equation}
\begin{aligned}
\mathbb E\|Q_k-Q^\star\|_\infty^2
\le\;&
\phi_1c_{1,\beta}
\prod_{j=K}^{k-1}(1-\phi_2\epsilon_j)+
\phi_3c_{2,\beta}
\sum_{i=K}^{k-1}
\epsilon_i\epsilon_{i-t_i,i-1}
\prod_{j=i+1}^{k-1}(1-\phi_2\epsilon_j).
\end{aligned}
\label{eq:app_general_markov_bound}
\end{equation}
We evaluate this expression for the various schedules separately.
\paragraph{(i) Constant stepsize.}
Let $\alpha_k\equiv\alpha$, so
$\epsilon_k=\epsilon:=\alpha L_\beta$,
$t_k=t_\epsilon=t_{\alpha L_\beta}$, and
$K=t_\epsilon$. Moreover,
$\epsilon_{i-t_i,i-1}=\epsilon t_\epsilon$.
Equation~\ref{eq:app_general_markov_bound} gives
$$
\mathbb E\|Q_k-Q^\star\|_\infty^2
\le
\phi_1c_{1,\beta}(1-\phi_2\epsilon)^{k-t_\epsilon}
+
\phi_3c_{2,\beta}\epsilon^2t_\epsilon
\sum_{i=t_\epsilon}^{k-1}
(1-\phi_2\epsilon)^{k-i-1}.
$$
Since the geometric sum is at most $(\phi_2\epsilon)^{-1}$,
$$
\mathbb E\|Q_k-Q^\star\|_\infty^2
\le
\phi_1c_{1,\beta}(1-\phi_2\epsilon)^{k-t_\epsilon}
+
\frac{\phi_3}{\phi_2}
c_{2,\beta}\epsilon t_\epsilon.
$$
Using \eqref{eq:app_phi_bounds_dummy} and
$\epsilon=\alpha L_\beta$,
$$
1-\phi_2\epsilon
\le
1-\frac{\alpha\kappa_\beta}{2},
\qquad
\frac{\phi_3}{\phi_2}\epsilon
\le
912e
\frac{\alpha L_\beta^3\log(|\cS||\cA|)}{\kappa_\beta^2}.
$$
Together with $\phi_1\le3$, this gives
\eqref{eq:app_constant_markov_explicit}.

\paragraph{(ii) Inverse-linear stepsize.}
Now let
$$
\alpha_k=\frac{4}{\kappa_\beta(k+h)},
\qquad
\epsilon_k
=
\frac{a}{k+h},
\qquad
a:=\frac{4L_\beta}{\kappa_\beta}.
$$
The inverse-linear specialization of
\citet{chen2024lyapunov} gives
$$
\mathbb E\|Q_k-Q^\star\|_\infty^2
\le
\phi_1c_{1,\beta}
\left(\frac{K+h}{k+h}\right)^{\phi_2a}
+
\frac{8ea^2\phi_3c_{2,\beta}}
{\phi_2a-1}
\frac{t_k}{k+h}.
$$
By \eqref{eq:app_phi_bounds_dummy},
$$
\phi_2a
\ge
\frac{\kappa_\beta}{2L_\beta}
\frac{4L_\beta}{\kappa_\beta}
=
2,
$$
so
$$
\left(\frac{K+h}{k+h}\right)^{\phi_2a}
\le
\left(\frac{K+h}{k+h}\right)^2,
\qquad
\phi_2a-1\ge1.
$$
Furthermore,
$$
8ea^2\phi_3
\le
8e
\frac{16L_\beta^2}{\kappa_\beta^2}
\frac{456eL_\beta\log(|\cS||\cA|)}{\kappa_\beta}
=
58368e^2
\frac{L_\beta^3\log(|\cS||\cA|)}{\kappa_\beta^3}.
$$
Together with $\phi_1\le3$, this proves
\eqref{eq:app_linear_markov_explicit}.

\end{proof}

\subsection{Dependence on $\beta$ and comparison with Q-learning}
\label{app:qlearning-comparison}

We first examine how the finite-time constants of \algo{} depend on the
residual-preconditioning parameter $\beta$. Recall that
$$
L_\beta
=
\max_{s,a}d(s)(\beta+\pi_b(a|s)),
\qquad
\kappa_\beta
=
\min_{s,a}
d(s)\bigl[(1-\gamma)\beta-(1+\gamma)+2\pi_b(a|s)\bigr].
$$
Hence, for large $\beta$,
\begin{equation}
L_\beta
\sim
d_{\max}\beta,
\qquad
\kappa_\beta
\sim
d_{\min}(1-\gamma)\beta.
\label{eq:app_large_beta}
\end{equation}

The Markovian finite-time bounds additionally depend on
$$
A_\beta
=
2+
\frac{(1+\beta/b_{\min})(1+\gamma)}{L_\beta},
\qquad
B_\beta
=
\frac{(1+\beta/b_{\min})}{L_\beta},
$$
and
$$
c_{2,\beta}
=
\left(
A_\beta\|Q^\star\|_\infty+B_\beta
\right)^2.
$$
Using \eqref{eq:app_large_beta},
$$
A_\beta
\longrightarrow
2+\frac{1+\gamma}{d_{\max}b_{\min}},
\qquad
B_\beta
\longrightarrow
\frac{1}{d_{\max}b_{\min}},
$$
and therefore
\begin{equation}
c_{2,\beta}
\longrightarrow
c_{2,\infty}
:=
\left[
\left(
2+\frac{1+\gamma}{d_{\max}b_{\min}}
\right)\|Q^\star\|_\infty
+
\frac{1}{d_{\max}b_{\min}}
\right]^2.
\label{eq:app_c2_large_beta}
\end{equation}
In particular, $c_{2,\infty}=O(b_{\min}^{-2})$ when the remaining MDP
parameters are held fixed.

\paragraph{Constant stepsizes.}
The stochastic radius in
Theorem~\ref{thm:app_finite_rpq}(i) is proportional to
$$
\frac{
\alpha L_\beta^3c_{2,\beta}
}{
\kappa_\beta^2
}
\,t_{\alpha L_\beta}.
$$
By \eqref{eq:app_large_beta},
$$
\frac{L_\beta^3c_{2,\beta}}{\kappa_\beta^2}
\sim
\frac{
d_{\max}^3
}{
d_{\min}^2(1-\gamma)^2
}
\,\beta\,c_{2,\infty}.
$$
Thus, for fixed raw stepsize $\alpha$, increasing $\beta$ improves the
deterministic contraction rate but increases the constant-stepsize stochastic
radius linearly in $\beta$, up to the mixing-time factor. This is the
contraction-variance tradeoff motivating the decreasing-$\beta$ schedules
of Section~\ref{sec:beta-schedules}.

\paragraph{Inverse-linear stepsizes.}
For
$\alpha_k=4/[\kappa_\beta(k+h)]$, the leading stochastic coefficient in
Theorem~\ref{thm:app_finite_rpq}(ii) is proportional to
$$
\frac{L_\beta^3c_{2,\beta}}{\kappa_\beta^3}.
$$
Unlike the constant-stepsize radius, this quantity saturates as
$\beta\to\infty$:
\begin{equation}
\frac{L_\beta^3c_{2,\beta}}{\kappa_\beta^3}
\longrightarrow
\frac{
d_{\max}^3
}{
d_{\min}^3(1-\gamma)^3
}
c_{2,\infty}.
\label{eq:app_large_beta_linear}
\end{equation}
Thus sufficiently large residual preconditioning improves the transient
contraction without worsening the leading $\beta$-dependence of the
inverse-linear asymptotic coefficient.

\paragraph{Comparison with asynchronous Q-learning.}
First, under constant stepsizes, asynchronous Q-learning has contraction gap
$
\mu_{\min}(1-\gamma)
$
with
$
\mu_{\min}:=\min_{s,a} d(s)\pi_b(a|s),
$
whereas for large $\beta$,
$
\kappa_\beta\sim d_{\min}(1-\gamma)\beta.
$
Thus \algo{} removes the multiplicative rare-action factor $b_{\min}$ from the leading deterministic contraction gap and hence obtains faster convergence as seen in Figure~\ref{fig:tabular-constant}.

Second, for inverse-linear stepsizes, \citet[Theorem~B.1(1)(c)]{chen2024lyapunov}
obtain a stochastic prefactor proportional to
$
(1-\beta_1)^{-3}
=
[\mu_{\min}(1-\gamma)]^{-3}
$
for asynchronous Q-learning.
For \algo{}, the corresponding coefficient scales as
$
L_\beta^3c_{2,\beta}/\kappa_\beta^3,
$
which saturates for large $\beta$, with
$
c_{2,\beta}=O(b_{\min}^{-2}).
$
Hence, under the aligned-minima condition
$
\mu_{\min}=d_{\min}b_{\min},
$
the same contractive-SA framework yields a $b_{\min}^{-3}$ dependence for
asynchronous Q-learning and $b_{\min}^{-2}$ for large-$\beta$ \algo{} and thus improves the dependence on the rare-action from $b_{\min}^{-3}$ to $b_{\min}^{-2}$ when using inverse linear stepsizes.

\section{Decreasing Residual-Preconditioning Schedules}
\label{app:beta_schedules}

We first analyze the two-stage schedule separately. Unlike a genuinely
time-varying schedule, each phase is a fixed contractive Markovian SA, so we
may use the phase-specific analysis scale $\eta_j=L_j^{-1}$ in each phase.
This yields a terminal stochastic radius determined by the
low-$\beta$ phase rather than by the largest residual weight used during
learning.

Throughout, let $d:=|\cS||\cA|$ and, for
$j\in\{h,\ell\}$, write
$
L_j:=L_{\beta_j},
$
$
\kappa_j:=\kappa_{\beta_j},
$
$
\epsilon_j:=\alpha L_j,
$
and
$
t_j:=t_{\epsilon_j}.
$
Let $A_j,B_j,c_{2,j}$ denote the fixed-$\beta_j$ constants from
Theorem~\ref{thm:finite_rpq}, and define
$$
\nu_j
:=
912e\,
\frac{\alpha L_j^3c_{2,j}\log d}{\kappa_j^2}\,t_j.
$$
We assume throughout that the fixed-$\beta_j$ Markovian mixing-window
condition of Theorem~\ref{thm:finite_rpq} holds for both phases.

\subsection{Two-stage schedule}
\label{app:two_stage_beta}

Consider
$$
\beta_k
=
\begin{cases}
\beta_h, & 0\le k<K,\\
\beta_\ell, & k\ge K,
\end{cases}
\qquad
\beta_h\ge\beta_\ell,
\qquad
\kappa_\ell>0.
$$
We assume $K\ge t_h$ and $T-K\ge t_\ell$.

\begin{theorem}[Two-stage residual preconditioning]
\label{thm:app_two_stage_beta}
Under the conditions above,
\begin{align}
\E\|Q_T-Q^\star\|_\infty^2
=
O\Bigg(&
c_{1,h}
\exp\left\{
-\frac{\alpha}{2}
\left[
\kappa_h(K-t_h)
+
\kappa_\ell(T-K-t_\ell)
\right]
\right\}
\nonumber\\
&+
\nu_h
\exp\left\{
-\frac{\alpha\kappa_\ell}{2}
(T-K-t_\ell)
\right\}
+
\nu_\ell
\Bigg),
\label{eq:app_two_stage_correct}
\end{align}
where $c_{1,h}$ is the initialization constant of
Theorem~\ref{thm:finite_rpq} for $\beta_h$.

Consequently, after a sufficiently long low-$\beta$ tail,
$$
\E\|Q_T-Q^\star\|_\infty^2
=
O(\nu_\ell)
=
O\left(
\frac{\alpha L_\ell^3c_{2,\ell}\log d}
{\kappa_\ell^2}\,
t_{\alpha L_\ell}
\right).
$$
Thus the terminal stochastic radius is the same order as that of running
constant-$\beta_\ell$ \algo{}, while the transient retains the stronger
contraction accumulated during the initial high-$\beta$ phase.
\end{theorem}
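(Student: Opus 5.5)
The plan is to apply the fixed-$\beta$ Markovian bound of Theorem~\ref{thm:app_finite_rpq}(i) twice, once per phase, and chain the two bounds through the error at the switching time $K$. On $[0,K)$ the recursion is exactly constant-$\beta_h$ \algo{}. Take $\eta_h=L_h^{-1}$; since $K\ge t_h$, inequality \eqref{eq:app_constant_markov_explicit} gives
\[
\E\|Q_K-Q^\star\|_\infty^2\le 3c_{1,h}\Bigl(1-\tfrac{\alpha\kappa_h}{2}\Bigr)^{K-t_h}+\nu_h
\le 3c_{1,h}e^{-\alpha\kappa_h(K-t_h)/2}+\nu_h .
\]

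For the second phase, restart the analysis at time $K$, conditioning on $(Q_K,Y_K)$. From $K$ onward the recursion is constant-$\beta_\ell$ \algo{} driven by the same chain $\{Y_t\}$, now started from $Y_K$. The mixing time $t_\ell$ is defined as a maximum over initial states, so the Markovian Moreau recursion (Lemma~A.6 of \citet{chen2024lyapunov}) applies conditionally with constants $(L_\ell,\kappa_\ell,c_{2,\ell})$ and $\eta_\ell=L_\ell^{-1}$. This yields a conditional bound on $\E[\|Q_T-Q^\star\|_\infty^2\mid Q_K,Y_K]$ with decay factor $(1-\alpha\kappa_\ell/2)^{T-K-t_\ell}$ and additive term $\nu_\ell$.

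The main obstacle is the initialization constant of this second bound. In the form of Theorem~\ref{thm:app_finite_rpq}, the constant $c_{1,\ell}$ involves $\|Q_K\|_\infty$ and not just $\|Q_K-Q^\star\|_\infty$. It also enters only after a burn-in of $t_\ell$ steps, during which the Moreau drift is not yet established. I would handle both points as follows.
\begin{itemize}
\item \emph{Replace $\|Q_K\|_\infty$.} Bound $\|Q_K\|_\infty\le\|Q_K-Q^\star\|_\infty+\|Q^\star\|_\infty$, so that $c_{1,\ell}\lesssim \|Q_K-Q^\star\|_\infty^2+\|Q^\star\|_\infty^2+1$.
\item \emph{Control the burn-in window.} Use the Lipschitz growth of $H_{\beta_\ell}$ together with $\epsilon_\ell t_\ell\le 1/(4A_\ell)$. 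This shows $\|Q_{K+t_\ell}-Q^\star\|_\infty$ exceeds $\|Q_K-Q^\star\|_\infty$ only by a constant factor plus an $O(\epsilon_\ell t_\ell)$ term, and the squared additive part is absorbed into $\nu_\ell$.
\item \emph{Track the $\|Q^\star\|$ remainder.} The leftover $\|Q^\star\|_\infty^2$ contribution to the initialization term is of the same order as the $c_2$-type constants. It still decays with the factor $e^{-\alpha\kappa_\ell(T-K-t_\ell)/2}$, so it can be folded into the $O(\cdot)$.
\end{itemize}
If this turns out awkward, the alternative is to redo the unrolling of \eqref{eq:app_markov_general_bound} directly across the switch. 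That route uses the norm equivalence $\mathcal M_\theta\asymp\|\cdot\|_\infty^2$ (Corollary~\ref{cor:moreau_infty}), which costs only a factor of $\phi_1\le 3$ when passing from the $\beta_h$ envelope to the $\beta_\ell$ envelope.

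Finally, take total expectations and substitute the phase-one bound on $\E\|Q_K-Q^\star\|_\infty^2$. This gives
\[
\E\|Q_T-Q^\star\|_\infty^2\lesssim e^{-\alpha\kappa_\ell(T-K-t_\ell)/2}\Bigl(c_{1,h}e^{-\alpha\kappa_h(K-t_h)/2}+\nu_h\Bigr)+\nu_\ell ,
\]
where $1-x\le e^{-x}$ has been used throughout. Expanding the product yields exactly the three terms of \eqref{eq:app_two_stage_correct}. The consequence follows by taking $T-K$ large enough that the first two terms fall below $\nu_\ell$.
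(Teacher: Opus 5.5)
Your skeleton matches the paper's: the phase-one bound at $K$, a restart with $\eta_\ell=L_\ell^{-1}$ conditionally on $(Q_K,Y_K)$, a burn-in, and then the $\beta_\ell$ recursion. However, your third bullet contains a step that fails.

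Write $e_k:=Q_k-Q^\star$. If you apply Theorem~\ref{thm:app_finite_rpq}(i) to the second phase as a black box, the initialization term is $3\,\E[c_{1,\ell}(Q_K)]\,e^{-\alpha\kappa_\ell(T-K-t_\ell)/2}$, where $c_{1,\ell}(Q_K)=(\|e_K\|_\infty+\|Q_K\|_\infty+B_\ell/A_\ell)^2$. Your first bullet turns this into $O(\E\|e_K\|_\infty^2)$ times the tail decay, plus a remainder of order
\[
\Bigl(\|Q^\star\|_\infty+\frac{B_\ell}{A_\ell}\Bigr)^2e^{-\alpha\kappa_\ell(T-K-t_\ell)/2}
=\frac{c_{2,\ell}}{A_\ell^2}\,e^{-\alpha\kappa_\ell(T-K-t_\ell)/2}.
\]
Being ``of the order of the $c_2$-type constants'' does not let you fold this into the claimed right-hand side. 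None of the three target terms can dominate it:
\begin{itemize}
\item The only term with the bare tail factor is $\nu_h e^{-\alpha\kappa_\ell(T-K-t_\ell)/2}$, and $\nu_h$ is proportional to $\alpha\,t_{\alpha L_h}$, so it vanishes as $\alpha\to0$.
\item The $c_{1,h}$ term carries the extra phase-one factor $e^{-\alpha\kappa_h(K-t_h)/2}$.
\item $\nu_\ell$ also vanishes with $\alpha$.
\end{itemize}
Take $\alpha\to0$ with $\alpha K\to\infty$ and $\alpha(T-K)$ fixed. All three target terms go to zero, while your remainder stays bounded away from zero. So this route proves a strictly weaker bound. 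In that bound, phase two effectively restarts from an $O(\|Q^\star\|_\infty^2)$ error, and the benefit of the high-$\beta$ phase — the whole point of the theorem — is lost.

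What you are missing is that the $\|Q^\star\|_\infty$ contribution created during the burn-in must keep its factor $\epsilon_\ell t_\ell$. The constant $c_{1,\ell}$ is exactly what remains after that factor has been replaced by its upper bound $1/(4A_\ell)$. So your second bullet has to \emph{replace} the use of $c_{1,\ell}$, not accompany it. By \citet[Lemma~A.2]{chen2024lyapunov},
\[
\|e_{K+t_\ell}\|_\infty\le(1+2A_\ell\epsilon_\ell t_\ell)\|e_K\|_\infty+2\epsilon_\ell t_\ell\bigl(A_\ell\|Q^\star\|_\infty+B_\ell\bigr),
\]
so $\E\|e_{K+t_\ell}\|_\infty^2\le\frac92\E\|e_K\|_\infty^2+8\epsilon_\ell^2t_\ell^2c_{2,\ell}$. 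Then:
\begin{itemize}
\item Unroll the one-step $\beta_\ell$ Moreau recursion only from $K+t_\ell$ onward, where every mixing window lies inside the low-$\beta$ phase.
\item Use $\E\|e_{K+t_\ell}\|_\infty^2$ itself as the initial condition, at cost $\phi_{1,\ell}\le3$.
\item Absorb $\epsilon_\ell^2t_\ell^2c_{2,\ell}=O(\nu_\ell)$, using $\epsilon_\ell t_\ell\le1/(4A_\ell)$ and $\kappa_\ell\le L_\ell$.
\end{itemize}
This is exactly the paper's argument, and it gives your final display.

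Your fallback of switching Moreau envelopes via norm equivalence also avoids $c_{1,\ell}$. However, the steps in $[K,K+t_\ell)$, whose mixing windows straddle the switch, still need this separate burn-in treatment.
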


\begin{proof}
We apply the fixed-$\beta$ analysis separately in the two phases.

For the first phase, Theorem~\ref{thm:finite_rpq} gives
$$
\E\|Q_K-Q^\star\|_\infty^2
\le
3c_{1,h}
\exp\left\{
-\frac{\alpha\kappa_h}{2}(K-t_h)
\right\}
+
\nu_h.
$$

We now switch to $\beta_\ell$ and use the phase-specific analysis scale
$\eta_\ell=L_\ell^{-1}$, so the effective SA stepsize is
$\epsilon_\ell=\alpha L_\ell$. The first $t_\ell$ iterations after the
switch form a short mixing burn-in. By
\citet[Lemma~A.2]{chen2024lyapunov}, since
$\epsilon_\ell t_\ell\le1/(4A_\ell)$,
$$
\|Q_{K+t_\ell}-Q_K\|_\infty
\le
2\epsilon_\ell t_\ell
\left(A_\ell\|Q_K\|_\infty+B_\ell\right).
$$
Writing $e_k:=Q_k-Q^\star$ and using
$\|Q_K\|_\infty\le\|e_K\|_\infty+\|Q^\star\|_\infty$, we obtain
$$
\|e_{K+t_\ell}\|_\infty
\le
(1+2A_\ell\epsilon_\ell t_\ell)\|e_K\|_\infty
+
2\epsilon_\ell t_\ell
(A_\ell\|Q^\star\|_\infty+B_\ell).
$$
Since $A_\ell\epsilon_\ell t_\ell\le1/4$,
$$
\E\|e_{K+t_\ell}\|_\infty^2
\le
\frac92\E\|e_K\|_\infty^2
+
8\epsilon_\ell^2t_\ell^2c_{2,\ell}.
$$

From time $K+t_\ell$ onward, every mixing window lies entirely inside the
low-$\beta$ phase. Hence the fixed-$\beta_\ell$ one-step Moreau recursion
from Appendix~\ref{app:finite-time-rpq} applies with its own constants
$\phi_{1,\ell},\phi_{2,\ell},\phi_{3,\ell}$. Unrolling it gives
$$
\E\|e_T\|_\infty^2
\le
\phi_{1,\ell}
\exp\left\{
-\frac{\alpha\kappa_\ell}{2}(T-K-t_\ell)
\right\}
\E\|e_{K+t_\ell}\|_\infty^2
+
\nu_\ell.
$$
Using $\phi_{1,\ell}\le3$ and substituting the preceding two displays yields
\begin{align*}
\E\|e_T\|_\infty^2
=
O\Bigg(&
c_{1,h}
\exp\left\{
-\frac{\alpha}{2}
\left[
\kappa_h(K-t_h)
+
\kappa_\ell(T-K-t_\ell)
\right]
\right\}
\\
&+
\nu_h
\exp\left\{
-\frac{\alpha\kappa_\ell}{2}(T-K-t_\ell)
\right\}
\\
&+
\epsilon_\ell^2t_\ell^2c_{2,\ell}
\exp\left\{
-\frac{\alpha\kappa_\ell}{2}(T-K-t_\ell)
\right\}
+
\nu_\ell
\Bigg).
\end{align*}
Finally, the fixed-$\beta_\ell$ mixing-window condition,
$\epsilon_\ell t_\ell=O(1)$, together with
$\kappa_\ell\le L_\ell$, implies
$$
\epsilon_\ell^2t_\ell^2c_{2,\ell}
=
O\left(
\frac{\alpha L_\ell^3c_{2,\ell}\log d}
{\kappa_\ell^2}t_\ell
\right)
=
O(\nu_\ell).
$$
Absorbing this short burn-in contribution into $\nu_\ell$ proves
\eqref{eq:app_two_stage_correct}.
\end{proof}

\begin{remark}
The bound separates the two effects cleanly. The initialization error
benefits from contraction in both phases; stochastic error generated during
the high-$\beta$ phase is subsequently attenuated at the low-$\beta$
contraction rate; and the eventual stochastic radius depends only on
$(L_\ell,\kappa_\ell,c_{2,\ell})$. This is the advantage of analyzing the
two fixed phases separately rather than treating the entire schedule with a
single conservative analysis scale.
\end{remark}

\subsection{General decreasing schedules}
\label{app:general_beta_schedule}

We now consider a deterministic decreasing sequence
$
\beta_k\in[\beta_\ell,\beta_h]
$
with $\kappa_{\beta_\ell}>0$.
Ideally, at each iteration one would use the local analysis scale
$
\eta_k=L_{\beta_k}^{-1}.
$
This would produce the local effective stepsize
$
\epsilon_k=\alpha L_{\beta_k}
$
and would retain the phase-specific stochastic scale
$
L_{\beta_k}^3c_{2,\beta_k}/\kappa_{\beta_k}^2.
$
However, this leads to a genuinely time-varying contractive SA. In
particular, the contraction factor
$
\rho_k=1-\kappa_{\beta_k}/L_{\beta_k}
$
changes with $k$.

The generalized Moreau envelope used by
\citet{chen2024lyapunov} is not independent of this contraction factor:
For the sup norm $||\cdot||_\infty$, its smoothing parameter $\theta$ is chosen as a function of $\rho_k$.
Thus using $\eta_k = L_{\beta_k}^{-1}$ at each iteration would produce
a time-varying Lyapunov function $\mathcal M_k$. The standard one-step
argument would then contain an additional variation term
$
\mathcal M_{k+1}(Q_{k+1}-Q^\star)
-
\mathcal M_k(Q_{k+1}-Q^\star),
$
for which \citep{chen2024lyapunov} provides no control. A
contraction-factor-independent smooth approximation of
$\frac12\|\cdot\|_\infty^2$ would remove this difficulty, but such a
construction is not available in the framework we use here.

We therefore use the conservative common scale
$
\eta:=L_h^{-1},
$
where $L_h:=L_{\beta_h}$. The effective SA stepsize is then fixed:
$
\epsilon:=\alpha L_h.
$
For each $k$, define
$$
\mathsf F_k(Q,y)
:=
Q+\frac1{L_h}H_{\beta_k}(Q,y),
$$
whose stationary mean operator is
$$
\overline{\mathsf F}_k(Q)
=
Q+
\frac1{L_h}
D(B+\beta_kI)(\mathcal TQ-Q).
$$
This operator is a sup-norm contraction with factor
$
\rho_k=1-\kappa_k/L_h,
$
where $\kappa_k:=\kappa_{\beta_k}$. Since
$\kappa_k\ge\kappa_\ell$, all iterations are simultaneously controlled by
the single worst-case contraction
$
\rho_\ell:=1-\kappa_\ell/L_h.
$
We may therefore construct one fixed generalized Moreau envelope of $||\cdot||_\infty^2 $using
$\rho_\ell$ and use it throughout the schedule.

Let $\phi_1,\phi_3$ be the corresponding common Moreau constants and
$
\phi_{2,k}:=1-\rho_k\sqrt{\phi_1}.
$
Then
$
\phi_1\le3,
$
$
\phi_{2,k}\ge\kappa_k/(2L_h),
$
and
$
\phi_3\le456eL_h\log d/\kappa_\ell.
$

Define
$$
A(\beta)
:=
2+\frac{(1+\beta/b_{\min})(1+\gamma)}{L_h},
\qquad
B(\beta)
:=
\frac{(1+\beta/b_{\min})R_{\max}}{L_h},
$$
and
$
c_2(\beta):=(A(\beta)\|Q^\star\|_\infty+B(\beta))^2.
$
Let $A_h:=A(\beta_h)$, and, because the schedule is decreasing, set
$
\widetilde c_{2,k}
:=
c_2(\beta_{(k-t_\epsilon)_+}),
$
where $t_\epsilon:=t_{\alpha L_h}$.

A direct adaptation of Lemmas~A.2-A.6 of
\citet{chen2024lyapunov}, using the fixed Moreau envelope above and the
uniform growth constants corresponding to $\beta_h$, gives
$$
\E\mathcal M(Q_{k+1}-Q^\star)
\le
(1-\phi_{2,k}\epsilon)
\E\mathcal M(Q_k-Q^\star)
+
\frac{\phi_3}{2u_{cm}^2}
\widetilde c_{2,k}\epsilon^2t_\epsilon,
$$
provided
$$
\epsilon t_\epsilon
\le
\min\left\{
\frac{\kappa_\ell^2}
{912eL_h^2A_h^2\log d},
\frac1{4A_h}
\right\}.
$$

Unrolling gives
\begin{align*}
\E\|Q_T-Q^\star\|_\infty^2
\le\;&
3c_1
\exp\left\{
-\frac{\alpha}{2}
\sum_{k=t_\epsilon}^{T-1}\kappa_k
\right\}
\\
&+
456e\,
\frac{\alpha^2L_h^3\log d}{\kappa_\ell}
t_\epsilon
\sum_{i=t_\epsilon}^{T-1}
\widetilde c_{2,i}
\exp\left\{
-\frac{\alpha}{2}
\sum_{j=i+1}^{T-1}\kappa_j
\right\}.
\end{align*}

\begin{theorem}[General decreasing residual-preconditioning schedule]
\label{thm:app_general_beta_schedule}
Let $\{\beta_k\}_{k\ge0}$ be any deterministic non-increasing sequence
satisfying
$
\beta_k\in[\beta_\ell,\beta_h]
$
and
$
\kappa_{\beta_\ell}>0.
$
Let
$
L_h:=L_{\beta_h},
$
$
\epsilon:=\alpha L_h,
$
and
$
t_\epsilon:=t_{\alpha L_h}.
$
Suppose
\[
\epsilon t_\epsilon
\le
\min\left\{
\frac{\kappa_\ell^2}
{912eL_h^2A_h^2\log d},
\frac1{4A_h}
\right\},
\]
where
$
\kappa_\ell:=\kappa_{\beta_\ell}
$
and
$
A_h:=A(\beta_h).
$

Define
\[
\widetilde c_{2,k}
:=
c_2\!\left(\beta_{(k-t_\epsilon)_+}\right),
\]
and
\[
c_1
:=
\left(
\|Q_0-Q^\star\|_\infty
+
\|Q_0\|_\infty
+
\frac{B_h}{A_h}
\right)^2,
\qquad
B_h:=B(\beta_h).
\]
Then, adapting the Markovian contractive-SA argument of
\citet{chen2024lyapunov} to the deterministic sequence of sample operators
$\{\mathsf F_k\}$ gives
\[
\begin{aligned}
\E\|Q_T-Q^\star\|_\infty^2
\le\;&
3c_1
\exp\left\{
-\frac{\alpha}{2}
\sum_{k=t_\epsilon}^{T-1}\kappa_k
\right\}+
456e\,
\frac{\alpha^2L_h^3\log d}{\kappa_\ell}
t_\epsilon
\sum_{i=t_\epsilon}^{T-1}
\widetilde c_{2,i}
\exp\left\{
-\frac{\alpha}{2}
\sum_{j=i+1}^{T-1}\kappa_j
\right\},
\end{aligned}
\]
where
$
\kappa_k:=\kappa_{\beta_k}.
$
\end{theorem}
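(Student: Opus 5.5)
We prove Theorem~\ref{thm:app_general_beta_schedule} by rerunning the argument behind \eqref{eq:app_markov_general_bound} with three changes: one fixed Lyapunov function, iteration-dependent drift coefficients, and noise constants taken over a lagged mixing window. We fix $\eta=L_h^{-1}$, so every iteration reads $Q_{k+1}=Q_k+\epsilon[\mathsf F_k(Q_k,Y_{k+1})-Q_k]$ with $\epsilon=\alpha L_h$ and the same mixing time $t_\epsilon$. Since $L_{\beta_k}\le L_h$, Lemma~\ref{lem:app_rpq_contraction} applies with $\eta=L_h^{-1}$. It shows that $\overline{\mathsf F}_k$ is a sup-norm contraction with factor $\rho_k=1-\kappa_k/L_h\le\rho_\ell$. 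We build a single Moreau envelope $\mathcal M_\theta$ with $\theta$ chosen from $\rho_\ell$, exactly as in Appendix~\ref{app:finite-time-rpq}. Corollary~\ref{cor:moreau_infty} then gives $\phi_1\le3$ and $\phi_3\le 456eL_h\log d/\kappa_\ell$. Because the envelope depends only on $\rho_\ell$, the contraction of $\overline{\mathsf F}_k$ at the smaller factor $\rho_k$ yields $\phi_{2,k}=1-\rho_k\sqrt{\phi_1}\ge\kappa_k/(2L_h)$.

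\textbf{Step 1: uniform growth constants.} Since $\beta_k\le\beta_h$, each $\mathsf F_k$ satisfies the Lipschitz and offset bounds with constants $A(\beta_k)\le A_h$ and $B(\beta_k)\le B_h$. These uniform constants suffice for the one-step increment bound $\|Q_{k+1}-Q_k\|_\infty\le\epsilon(A_h\|Q_k\|_\infty+B_h)$. They also give the window bound $\|Q_k-Q_{k-t_\epsilon}\|_\infty\le 2\epsilon t_\epsilon(A_h\|Q_{k-t_\epsilon}\|_\infty+B_h)$, the analogue of Lemma~A.2 of \citet{chen2024lyapunov}, under $\epsilon t_\epsilon\le 1/(4A_h)$.

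\textbf{Step 2: the one-step drift.} We decompose $\langle\nabla\mathcal M(Q_k-Q^\star),\mathsf F_k(Q_k,Y_{k+1})-Q_k\rangle$ into three terms.
\begin{itemize}[leftmargin=*]
\item A mean term, which uses $\overline{\mathsf F}_k(Q^\star)=Q^\star$ (Lemma~\ref{lem:app_rpq_fixed_point}) and the contraction of $\overline{\mathsf F}_k$. It produces $-\phi_{2,k}\epsilon\,\mathcal M$.
\item A Markov-correlation term. Here we condition on $\mathcal F_{k-t_\epsilon}$ and replace $Q_k$ by $Q_{k-t_\epsilon}$. The key new point is that the operator in this term is the deterministic map $\mathsf F_k$, not a function of the chain. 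Hence $\E[\mathsf F_k(Q_{k-t_\epsilon},Y_{k+1})\mid\mathcal F_{k-t_\epsilon}]-\overline{\mathsf F}_k(Q_{k-t_\epsilon})$ is still bounded by $\epsilon$ times the growth constants, by the definition of $t_\epsilon$.
\item A second-order smoothness term.
\end{itemize}
The correlation and smoothness terms both involve the sample maps at times $k-t_\epsilon,\dots,k$. By monotonicity of the schedule, these have $\beta_j\le\beta_{(k-t_\epsilon)_+}$, so their contribution can be bounded by $\widetilde c_{2,k}\,\epsilon^2t_\epsilon\,\phi_3$. This is where the lagged constant comes from. Under the stated mixing-window condition, the terms linear in $\E\mathcal M$ generated by the correlation bound are absorbed into half of $\phi_{2,k}\epsilon$. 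The check is that $\phi_3A_h^2\epsilon t_\epsilon\le \kappa_\ell/(4L_h)\le\phi_{2,k}/2$, which holds because $\kappa_k\ge\kappa_\ell$. The result is the displayed recursion preceding the theorem.

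\textbf{Step 3: unrolling.} We unroll the recursion from $k=t_\epsilon$ to $T-1$. We use $1-\phi_{2,k}\epsilon\le\exp\{-\alpha\kappa_k/2\}$, the norm equivalence of $\mathcal M$ with factor $\phi_1\le3$, and the initial bound $\E\mathcal M(Q_{t_\epsilon}-Q^\star)\lesssim c_1$, which follows from Step~1. Substituting $\phi_3$ and $\epsilon=\alpha L_h$ turns the noise coefficient into $456e\,\alpha^2L_h^3\log d\,t_\epsilon/\kappa_\ell$.

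\textbf{Main obstacle.} The delicate part is Step~2. The argument of \citet{chen2024lyapunov} assumes a single time-homogeneous operator in the correlation term, and at one point it implicitly compares $\mathsf F$ at different times within the window. With a varying schedule, the difference $\mathsf F_k-\mathsf F_{k-t_\epsilon}$ must also be controlled. Our plan is to avoid that comparison altogether by always evaluating $\mathsf F_k$ at the current index. If that fails, we bound the difference directly: $\|\mathsf F_k-\mathsf F_j\|\le|\beta_k-\beta_j|\,\|H\|/(b_{\min}L_h)$, which is of the same order as the growth bound and can be folded into $\widetilde c_{2,k}$.
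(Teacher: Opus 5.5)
Your proposal is correct and follows essentially the same route as the paper: the fixed scale $\eta=L_h^{-1}$, one Moreau envelope built from $\rho_\ell$ (so that $\phi_{2,k}\ge\kappa_k/(2L_h)$ because $\rho_k\le\rho_\ell$), growth constants taken uniformly over the mixing window with the lagged $\widetilde c_{2,k}$ justified by monotonicity, Lemmas~A.2--A.6 of \citet{chen2024lyapunov} applied with the current map $\mathsf F_k$ throughout (so your ``main obstacle'' never arises and the fallback is unnecessary), and unrolling with $1-x\le e^{-x}$. One bookkeeping fix is needed: the mean term from \citet[Lemma~A.1]{chen2024lyapunov} is $-2\phi_{2,k}\epsilon\,\mathcal M$, not $-\phi_{2,k}\epsilon\,\mathcal M$, and the stated window condition gives only $\phi_3A_h^2\epsilon t_\epsilon\le\kappa_\ell/(2L_h)\le\phi_{2,k}$ rather than $\kappa_\ell/(4L_h)$; absorbing into half of the $2\phi_{2,k}\epsilon$ drift is exactly what yields the $(1-\phi_{2,k}\epsilon)$ recursion, whereas with your drift constant the absorption would fail in the worst case $\kappa_k=\kappa_\ell$.
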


\begin{proof}
Using the fixed analysis scale $\eta=L_h^{-1}$ and the common generalized
Moreau envelope constructed from the worst contraction factor
$\rho_\ell=1-\kappa_\ell/L_h$, the proof follows the one-step Markovian
Lyapunov argument of \citet[Lemmas~A.2--A.6]{chen2024lyapunov}, with the
sample-map growth constants taken uniformly over the mixing window.
This gives
\[
\E\mathcal M(Q_{k+1}-Q^\star)
\le
(1-\phi_{2,k}\epsilon)
\E\mathcal M(Q_k-Q^\star)
+
\frac{\phi_3}{2u_{cm}^2}
\widetilde c_{2,k}\epsilon^2t_\epsilon.
\]
Unrolling the recursion, using
$
1-x\le e^{-x},
$
$
\phi_1\le3,
$
$
\phi_{2,k}\epsilon\ge\alpha\kappa_k/2,
$
and
$
\phi_3\le456eL_h\log d/\kappa_\ell,
$
yields the stated bound.
\end{proof}

\begin{remark}[Conservatism of the general-schedule bound]
Unlike the two-stage result, the bound above does not recover the
fixed-$\beta_\ell$ terminal radius with $L_\ell^3$. This is not an
algorithmic effect, but a consequence of using a single Lyapunov function
for the entire time-varying schedule: the common scale
$\eta=L_h^{-1}$ propagates $L_h$ into the stochastic coefficient.
Obtaining a bound with the local factor $L_{\beta_k}$ at each iteration
would require a finite-time theory for time-varying contractive Markovian
SA, or equivalently a way to control the variation of the associated
Moreau envelopes. We leave such an extension for future work.
\end{remark}

\subsection{Smoothly decreasing schedules}
\label{app:smooth_beta_schedules}

For explicit cumulative-contraction bounds, define
$
\underline\kappa_\beta
:=
d_{\min}\bigl[(1-\gamma)\beta-(1+\gamma)+2b_{\min}\bigr].
$
Since $\kappa_\beta$ is the pointwise minimum of finitely many affine
functions,
$
0<\underline\kappa_\beta\le\kappa_\beta
$
throughout the contraction regime.

We also use that $c_2(\beta)$ is Lipschitz on
$[\beta_\ell,\beta_h]$. Thus, for some $L_c<\infty$,
$
|c_2(\beta)-c_2(\beta')|
\le
L_c|\beta-\beta'|.
$
Write
$
\underline\kappa_h:=\underline\kappa_{\beta_h},
$
$
\underline\kappa_\ell:=\underline\kappa_{\beta_\ell},
$
and
$
c_{2,\ell}:=c_2(\beta_\ell).
$
Recall that
$
\epsilon=\alpha L_h
$
and
$
t_\epsilon=t_{\alpha L_h}.
$

\begin{corollary}[Linear decay]
\label{cor:app_linear_beta}
Consider
\[
\beta_k
=
\beta_\ell
+
(\beta_h-\beta_\ell)
\left(1-\frac{k}{T-1}\right).
\]
Under the conditions of
Theorem~\ref{thm:app_general_beta_schedule},
\begin{align}
\E\|Q_T-Q^\star\|_\infty^2
\le\;&
3c_1
\exp\left\{
-\frac{\alpha T}{4}
(\underline\kappa_h+\underline\kappa_\ell)
+O(\alpha t_\epsilon)
\right\}
\nonumber\\
&+
O\left(
\frac{
\alpha L_h^3 c_{2,\ell}\log d
}{
\kappa_\ell^2
}
t_\epsilon
\right)
\nonumber\\
&+
O\left(
\frac{
L_h^3L_c\log d
}{
\kappa_\ell^3
}
\frac{t_\epsilon}{T}
\right)
+
O\left(
\frac{
\alpha L_h^3L_c\log d
}{
\kappa_\ell^2
}
\frac{t_\epsilon^2}{T}
\right).
\label{eq:app_linear_beta_markov}
\end{align}
\end{corollary}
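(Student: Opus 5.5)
The plan is to specialize the bound of Theorem~\ref{thm:app_general_beta_schedule} to the linear schedule and evaluate its two terms separately.

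\textbf{Bias term.} Use $\kappa_k\ge\underline\kappa_{\beta_k}$. Since $\beta\mapsto\underline\kappa_\beta$ is affine, $\underline\kappa_{\beta_k}$ is itself linear in $k$. The sum $\sum_{k=t_\epsilon}^{T-1}\underline\kappa_{\beta_k}$ is therefore the full sum $\sum_{k=0}^{T-1}$ minus at most $t_\epsilon\underline\kappa_h$ initial terms. The full sum is a trapezoid:
\[
\sum_{k=0}^{T-1}\underline\kappa_{\beta_k}=\frac{T}{2}(\underline\kappa_h+\underline\kappa_\ell).
\]
Multiplying by $-\alpha/2$ gives the exponent $-\frac{\alpha T}{4}(\underline\kappa_h+\underline\kappa_\ell)+O(\alpha t_\epsilon)$. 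Here the $O(\cdot)$ absorbs $\alpha t_\epsilon\underline\kappa_h/2$ with the constant $\underline\kappa_h$.

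\textbf{Noise term.} Split $\widetilde c_{2,i}=c_{2,\ell}+(\widetilde c_{2,i}-c_{2,\ell})$ and treat the two pieces in turn.

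For the first piece, bound $\sum_j\kappa_j\ge\kappa_\ell(T-1-i)$. The geometric sum is then at most $2/(\alpha\kappa_\ell)$. Multiplying by the prefactor $456e\,\alpha^2L_h^3\log d\,t_\epsilon/\kappa_\ell$ gives $O(\alpha L_h^3c_{2,\ell}\log d\,t_\epsilon/\kappa_\ell^2)$.

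For the second piece, use Lipschitzness of $c_2$:
\[
0\le\widetilde c_{2,i}-c_{2,\ell}\le L_c(\beta_h-\beta_\ell)\Bigl(1-\frac{(i-t_\epsilon)_+}{T-1}\Bigr).
\]
Write $1-(i-t_\epsilon)/(T-1)=(T-1-i)/(T-1)+t_\epsilon/(T-1)$. The $t_\epsilon/(T-1)$ part is handled exactly like the constant piece. Together with the prefactor it yields $O(\alpha L_h^3L_c\log d\,t_\epsilon^2/(\kappa_\ell^2T))$, with $\beta_h-\beta_\ell$ absorbed into the constant.

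\textbf{Main step.} The $(T-1-i)/(T-1)$ part is the delicate weighted sum
\[
\sum_{m\ge0}m\,e^{-\alpha\kappa_\ell m/2}\le\frac{C}{(\alpha\kappa_\ell)^2}.
\]
Here $m=T-1-i$, and I bound the exponent again by $\kappa_\ell m$. This gives a contribution of order
\[
\frac{\alpha^2L_h^3\log d\,t_\epsilon}{\kappa_\ell}\cdot\frac{L_c}{T}\cdot\frac{1}{\alpha^2\kappa_\ell^2}=O\Bigl(\frac{L_h^3L_c\log d}{\kappa_\ell^3}\frac{t_\epsilon}{T}\Bigr),
\]
which is precisely the third term of \eqref{eq:app_linear_beta_markov}.

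I expect this step to be the main obstacle. The point is that the excess noise injected near time $i$ is proportional to the remaining distance $T-1-i$. It is therefore small only for late iterates, and the cancellation of $\alpha^2$ has to come from the second moment of the geometric weights rather than the first. The remaining bookkeeping is routine: $(i-t_\epsilon)_+$ versus $i-t_\epsilon$, and using $T-1$ versus $T$ in the $O(\cdot)$. Summing the four contributions gives the corollary.
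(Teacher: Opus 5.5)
Your proposal is correct and follows essentially the same route as the paper. Both arguments lower-bound $\kappa_k$ by the affine $\underline\kappa_{\beta_k}$ and evaluate the trapezoid sum, which costs only $O(\alpha t_\epsilon)$ in the exponent from the truncation at $t_\epsilon$. Both then bound $\widetilde c_{2,i}\le c_{2,\ell}+O(L_c(r+t_\epsilon)/T)$ with $r=T-1-i$, and conclude using $\sum_{j=i+1}^{T-1}\kappa_j\ge\kappa_\ell r$, $\sum_r e^{-\alpha\kappa_\ell r/2}=O((\alpha\kappa_\ell)^{-1})$ and $\sum_r r\,e^{-\alpha\kappa_\ell r/2}=O((\alpha\kappa_\ell)^{-2})$.

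One nit: $\sum_{m\ge0}e^{-\alpha\kappa_\ell m/2}=(1-e^{-\alpha\kappa_\ell/2})^{-1}$ slightly exceeds $2/(\alpha\kappa_\ell)$, so your ``at most'' claim is off by a constant. This is harmless inside the $O(\cdot)$, since $\alpha\kappa_\ell\le\alpha L_h=\epsilon$ is small under the mixing-window condition.
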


\begin{proof}
Affineness of $\underline\kappa_\beta$ gives
$
\sum_{k=0}^{T-1}\underline\kappa_{\beta_k}
=
T(\underline\kappa_h+\underline\kappa_\ell)/2.
$
Since Theorem~\ref{thm:app_general_beta_schedule} starts the contraction
sum at $t_\epsilon$, this contributes an $O(\alpha t_\epsilon)$
correction in the exponent. For $r:=T-1-i$,
$
|\beta_{(i-t_\epsilon)_+}-\beta_\ell|
=
O((r+t_\epsilon)/T),
$
and hence
$
\widetilde c_{2,i}
\le
c_{2,\ell}+O(L_c(r+t_\epsilon)/T).
$
Using
$
\sum_{j=i+1}^{T-1}\kappa_j\ge\kappa_\ell r,
$
together with
$
\sum_{r\ge0}e^{-\alpha\kappa_\ell r/2}
=
O((\alpha\kappa_\ell)^{-1})
$
and
$
\sum_{r\ge0}re^{-\alpha\kappa_\ell r/2}
=
O((\alpha^2\kappa_\ell^2)^{-1}),
$
gives the result.
\end{proof}

\begin{corollary}[Polynomial decay]
\label{cor:app_poly_beta}
For $p>0$, consider
\[
\beta_k
=
\beta_\ell
+
(\beta_h-\beta_\ell)
\left(1-\frac{k}{T-1}\right)^p.
\]
Under the conditions of
Theorem~\ref{thm:app_general_beta_schedule},
\begin{align}
\E\|Q_T-Q^\star\|_\infty^2
\le\;&
3c_1
\exp\left\{
-\frac{\alpha T}{2}
\left[
\underline\kappa_\ell
+
\frac{\underline\kappa_h-\underline\kappa_\ell}{p+1}
\right]
+O(\alpha t_\epsilon)
\right\}
\nonumber\\
&+
O\left(
\frac{
\alpha L_h^3c_{2,\ell}\log d
}{
\kappa_\ell^2
}
t_\epsilon
\right)
\nonumber\\
&+
O\left(
\frac{
L_h^3L_c\log d
}{
\kappa_\ell^{p+2}
}
\frac{
t_\epsilon
}{
\alpha^{p-1}T^p
}
\right)
+
O\left(
\frac{
\alpha L_h^3L_c\log d
}{
\kappa_\ell^2
}
\frac{
t_\epsilon^{p+1}
}{
T^p
}
\right).
\label{eq:app_poly_beta_markov}
\end{align}
\end{corollary}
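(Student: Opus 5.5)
We specialize the general bound of Theorem~\ref{thm:app_general_beta_schedule} to the polynomial schedule, following the proof of Corollary~\ref{cor:app_linear_beta} and treating the transient and the stochastic sum separately.

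\textbf{Transient term.} Since $\underline\kappa_\beta$ is affine in $\beta$ and $\kappa_k\ge\underline\kappa_{\beta_k}$, we have
\[
\sum_{k=0}^{T-1}\underline\kappa_{\beta_k}
=T\underline\kappa_\ell+(\underline\kappa_h-\underline\kappa_\ell)\sum_{k=0}^{T-1}\Bigl(1-\tfrac{k}{T-1}\Bigr)^p .
\]
The Riemann sum is $T\int_0^1 x^p\,dx+O(1)=T/(p+1)+O(1)$. Starting the contraction sum at $t_\epsilon$ instead of $0$ removes at most $t_\epsilon\kappa_h$. Both corrections, after multiplication by $\alpha/2$, are absorbed into the $O(\alpha t_\epsilon)$ term in the exponent. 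This gives the first line of the bound.

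\textbf{Stochastic term.} Write $r:=T-1-i$. Lipschitz continuity of $c_2$ gives
\[
\widetilde c_{2,i}\le c_{2,\ell}+L_c(\beta_h-\beta_\ell)\Bigl(\frac{r+t_\epsilon}{T-1}\Bigr)^p ,
\]
because $1-(i-t_\epsilon)_+/(T-1)\le (r+t_\epsilon+1)/(T-1)$. Next use $(r+t_\epsilon)^p\le C_p(r^p+t_\epsilon^p)$ and bound the exponential weight by $e^{-\alpha\kappa_\ell r/2}$, since $\kappa_j\ge\kappa_\ell$. Three sums then remain:
\begin{itemize}
\item The constant part uses $\sum_r e^{-\alpha\kappa_\ell r/2}=O((\alpha\kappa_\ell)^{-1})$. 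Multiplied by the prefactor $456e\,\alpha^2L_h^3\log d\,t_\epsilon/\kappa_\ell$, it gives the $c_{2,\ell}$ term.
\item The $r^p$ part uses $\sum_r r^pe^{-\alpha\kappa_\ell r/2}=O(\Gamma(p+1)(\alpha\kappa_\ell)^{-(p+1)})$, by comparison with a Gamma integral. This yields
\[
\frac{\alpha^2L_h^3L_c\log d\,t_\epsilon}{\kappa_\ell}\cdot\frac{1}{T^p(\alpha\kappa_\ell)^{p+1}}
=\frac{L_h^3L_c\log d}{\kappa_\ell^{p+2}}\frac{t_\epsilon}{\alpha^{p-1}T^p},
\]
which is the third term.
\item The $t_\epsilon^p$ part, with the geometric sum, gives $\alpha L_h^3L_c\log d\,t_\epsilon^{p+1}/(\kappa_\ell^2T^p)$, the fourth term.
\end{itemize}

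\textbf{Main obstacle.} The delicate step is the $r^p$ moment sum. For $p<1$ the function $r\mapsto r^pe^{-cr}$ is not monotone from $r=0$, so the integral comparison needs a constant depending on $p$. Also, the sum truncates at $r\le T-1$, and the bound only needs an upper estimate, so no truncation error arises. The other steps are bookkeeping. All $p$-dependent constants, including $\Gamma(p+1)$, $C_p$, and $\beta_h-\beta_\ell$, are hidden in the $O(\cdot)$.
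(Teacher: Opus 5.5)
Your proposal is correct and follows the paper's proof essentially step for step: the same Riemann-sum estimate with an $O(\alpha t_\epsilon)$ truncation correction, the same Lipschitz bound on $\widetilde c_{2,i}$ split via $(r+t_\epsilon)^p\le C_p(r^p+t_\epsilon^p)$, and the same two exponential sums $\sum_r e^{-\alpha\kappa_\ell r/2}$ and $\sum_r r^pe^{-\alpha\kappa_\ell r/2}$. Two cosmetic points: for $i\ge t_\epsilon$ one has exactly $1-(i-t_\epsilon)_+/(T-1)=(r+t_\epsilon)/(T-1)$, so the $+1$ is unnecessary. Also, $r\mapsto r^pe^{-cr}$ is unimodal for every $p>0$, so the integral comparison always costs only an additive $O(c^{-p})$ term; this is not special to $p<1$.
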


\begin{proof}
A Riemann-sum estimate gives
$
\sum_{k=0}^{T-1}\underline\kappa_{\beta_k}
=
T[\underline\kappa_\ell+
(\underline\kappa_h-\underline\kappa_\ell)/(p+1)]
+O(1).
$
Truncating the contraction sum at $t_\epsilon$ adds
$O(\alpha t_\epsilon)$ to the exponent. Moreover, for
$r:=T-1-i$,
$
|\beta_{(i-t_\epsilon)_+}-\beta_\ell|
=
O((r+t_\epsilon)^p/T^p).
$
For fixed $p>0$,
$
(r+t_\epsilon)^p
\le
C_p(r^p+t_\epsilon^p).
$
Using Lipschitz continuity of $c_2(\beta)$,
$
\sum_{j=i+1}^{T-1}\kappa_j\ge\kappa_\ell r,
$
$
\sum_{r\ge0}r^pe^{-\alpha\kappa_\ell r/2}
=
O((\alpha\kappa_\ell)^{-(p+1)}),
$
and
$
\sum_{r\ge0}e^{-\alpha\kappa_\ell r/2}
=
O((\alpha\kappa_\ell)^{-1})
$
in Theorem~\ref{thm:app_general_beta_schedule} gives the result.
\end{proof}

\begin{corollary}[Normalized exponential decay]
\label{cor:app_exp_beta}
For $\lambda>0$, define
\[
g_\lambda(x)
:=
\frac{e^{-\lambda x}-e^{-\lambda}}
{1-e^{-\lambda}},
\qquad
\beta_k
=
\beta_\ell
+
(\beta_h-\beta_\ell)
g_\lambda\left(\frac{k}{T-1}\right),
\]
and
$
A_\lambda
:=
\int_0^1g_\lambda(x)\,dx
=
\lambda^{-1}-(e^\lambda-1)^{-1}.
$
Then, under the conditions of
Theorem~\ref{thm:app_general_beta_schedule},
\begin{align}
\E\|Q_T-Q^\star\|_\infty^2
\le\;&
3c_1
\exp\left\{
-\frac{\alpha T}{2}
\left[
\underline\kappa_\ell
+
(\underline\kappa_h-\underline\kappa_\ell)A_\lambda
\right]
+O(\alpha t_\epsilon)
\right\}
\nonumber\\
&+
O\left(
\frac{
\alpha L_h^3c_{2,\ell}\log d
}{
\kappa_\ell^2
}
t_\epsilon
\right)
\nonumber\\
&+
O_\lambda\left(
\frac{
L_h^3L_c\log d
}{
\kappa_\ell^3
}
\frac{t_\epsilon}{T}
\right)
+
O_\lambda\left(
\frac{
\alpha L_h^3L_c\log d
}{
\kappa_\ell^2
}
\frac{t_\epsilon^2}{T}
\right).
\label{eq:app_exp_beta_markov}
\end{align}
\end{corollary}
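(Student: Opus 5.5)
The proof specializes Theorem~\ref{thm:app_general_beta_schedule} to the normalized-exponential schedule, in the same way as Corollaries~\ref{cor:app_linear_beta} and~\ref{cor:app_poly_beta}. Two quantities need estimates: the cumulative contraction $\sum_{k=t_\epsilon}^{T-1}\kappa_k$ in the initialization term, and the stochastic sum $\sum_i \widetilde c_{2,i}\exp\{-\frac{\alpha}{2}\sum_{j>i}\kappa_j\}$.

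\textbf{Initialization term.} Use $\kappa_k\ge\underline\kappa_{\beta_k}$. Since $\underline\kappa_\beta$ is affine in $\beta$,
\[
\underline\kappa_{\beta_k}
=\underline\kappa_\ell+(\underline\kappa_h-\underline\kappa_\ell)\,g_\lambda\!\left(\tfrac{k}{T-1}\right).
\]
The function $g_\lambda$ is monotone and bounded in $[0,1]$. A monotone Riemann-sum comparison therefore gives
\[
\sum_{k=0}^{T-1}g_\lambda\!\left(\tfrac{k}{T-1}\right)=(T-1)A_\lambda+O(1),
\]
with $A_\lambda=\int_0^1 g_\lambda=\lambda^{-1}-(e^\lambda-1)^{-1}$ computed directly. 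This yields $\sum_{k=0}^{T-1}\underline\kappa_{\beta_k}=T[\underline\kappa_\ell+(\underline\kappa_h-\underline\kappa_\ell)A_\lambda]+O(\underline\kappa_h)$. Dropping the first $t_\epsilon$ terms costs at most $t_\epsilon\kappa_h$. Since $\kappa_h\le L_h$ and $\epsilon t_\epsilon=O(1)$, every error term multiplied by $\alpha$ is $O(\alpha t_\epsilon)$ (or $O(1)$). This gives the first line of \eqref{eq:app_exp_beta_markov}.

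\textbf{Stochastic term.} Set $r:=T-1-i$. Near $x=1$, $g_\lambda$ vanishes linearly:
\[
g_\lambda(x)=\frac{e^{-\lambda x}-e^{-\lambda}}{1-e^{-\lambda}}\le \frac{\lambda}{e^{\lambda}-1}\cdot\frac{1-x}{\,\cdot\,}\;\text{(up to constants)},
\]
more precisely $g_\lambda(x)\le C_\lambda(1-x)$ with $C_\lambda=\lambda/(e^\lambda-1)\cdot e^{\lambda}\le\lambda/(1-e^{-\lambda})$, by the mean value theorem. Hence
\[
|\beta_{(i-t_\epsilon)_+}-\beta_\ell|\le(\beta_h-\beta_\ell)\,C_\lambda\,(r+t_\epsilon+1)/(T-1),
\]
and Lipschitz continuity of $c_2$ gives $\widetilde c_{2,i}\le c_{2,\ell}+O_\lambda(L_c(r+t_\epsilon)/T)$. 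Bound $\sum_{j=i+1}^{T-1}\kappa_j\ge\kappa_\ell r$ and use the two geometric-type sums
\[
\sum_{r\ge0}e^{-\alpha\kappa_\ell r/2}=O\big((\alpha\kappa_\ell)^{-1}\big),\qquad
\sum_{r\ge0}r\,e^{-\alpha\kappa_\ell r/2}=O\big((\alpha\kappa_\ell)^{-2}\big).
\]
Multiply through by the prefactor $456e\,\alpha^2L_h^3\log d\,t_\epsilon/\kappa_\ell$:
\begin{itemize}
\item the $c_{2,\ell}$ part gives $O(\alpha L_h^3c_{2,\ell}\log d\,t_\epsilon/\kappa_\ell^2)$;
\item the $rL_c/T$ part gives $O_\lambda(L_h^3L_c\log d\,t_\epsilon/(\kappa_\ell^3T))$;
\item the $t_\epsilon L_c/T$ part gives $O_\lambda(\alpha L_h^3L_c\log d\,t_\epsilon^2/(\kappa_\ell^2T))$.
\end{itemize}
These are the three remaining terms of \eqref{eq:app_exp_beta_markov}.

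\textbf{Main obstacle.} The delicate point is the Riemann-sum step, where the error must be uniform so that only an $O(\alpha t_\epsilon)$ correction enters the exponent. This relies on $g_\lambda$ being monotone, so the sum-integral discrepancy is at most $\sup g_\lambda\le1$. A second point is tracking the $\lambda$-dependence of the slope constant $C_\lambda$, which is absorbed into the $O_\lambda$ notation.
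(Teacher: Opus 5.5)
Your proposal is correct and follows essentially the same route as the paper: a monotone Riemann-sum estimate for $\sum_k\underline\kappa_{\beta_k}$ with an $O(\alpha t_\epsilon)$ truncation correction, then linear vanishing of $g_\lambda$ at $x=1$ to get $|\beta_{(i-t_\epsilon)_+}-\beta_\ell|=O_\lambda((r+t_\epsilon)/T)$, followed by the same geometric sums as in the linear-decay corollary. Your global mean-value bound $g_\lambda(x)\le\frac{\lambda}{1-e^{-\lambda}}(1-x)$ is, if anything, slightly cleaner than the paper's local Taylor expansion at the terminal point. The local slope $\lambda/(e^\lambda-1)$ by itself would only give a lower bound, since $g_\lambda$ is convex.
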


\begin{proof}
A Riemann-sum approximation gives
$
\sum_{k=0}^{T-1}\underline\kappa_{\beta_k}
=
T[\underline\kappa_\ell+
(\underline\kappa_h-\underline\kappa_\ell)A_\lambda]
+O(1).
$
Truncating the contraction sum at $t_\epsilon$ contributes
$O(\alpha t_\epsilon)$ in the exponent. Near the terminal point,
$
g_\lambda(1-u)
=
\lambda u/(e^\lambda-1)+O_\lambda(u^2),
$
so for $r:=T-1-i$,
$
|\beta_{(i-t_\epsilon)_+}-\beta_\ell|
=
O_\lambda((r+t_\epsilon)/T).
$
The remainder is identical to the linear-decay argument.
\end{proof}
\section{Two-Timescale \algo: Detailed Proofs}
\label{app:two_timescale_rpq}
\begin{algorithm}[H]
\caption{\algo{}}
\label{alg:rpq_two}
\begin{algorithmic}[1]
\State Initialize $V_0,A_0$, state $S_0$, stepsizes $\{\alpha_t, \eta_t\}_{t=0}^\infty$, and choose $\beta>0$.
\For{$t=0,1,\ldots$}
    \State Sample $a_t\sim \pi_b(\cdot\mid s_t)$.
    \State Observe $R_{t+1}=r(s_t,a_t)$ and
    $s_{t+1}\sim P(\cdot\mid s_t,a_t)$.
    \State Compute
    $$
    \delta_t
    =
    R_{t+1}
    +
    \gamma\max_{a'}Q_t(s_{t+1},a')
    -
    Q_t(s_t,a_t).
    $$
    \State Update, for all $s\in\cS$ and $a\in\cA$,
    $$
    V_{t+1}(s)
    =
    V_t(s)
    +
    \mathbf 1\{s_t=s\}\alpha_t\delta_t, \quad 
    A_{t+1}(s,a)
    =
    A_t(s,a)
    +
    \eta_t
    \frac{\beta\,\mathbf 1\{s_t=s,a_t=a\}}
    {\pi_b(a\mid s_t)}
    \delta_t.
    $$
\EndFor
\end{algorithmic}
\end{algorithm}
This appendix proves the two-timescale results stated in
Section~\ref{sec:two_timescale}. Recall that
$(LV)(s,a)=V(s)$,
$B=\operatorname{blkdiag}(\mathbf 1b_s^\top:s\in\mathcal S)$,
and
$D_{(s,a),(s,a)}=d(s)$.
The matrix $B$ averages a state-action vector according to the behavior
policy within each state and lifts the resulting statewise quantity to
all actions at that state.

Throughout, $\beta>0$ is fixed and
$$
\tau_t:=\beta\eta_t
$$
denotes the effective advantage stepsize.

\subsection{Stochastic-approximation form and frozen fast dynamics}
\label{app:two_sa}

The two-timescale \algo{} updates are
\begin{align*}
V_{t+1}(s_t)
&=
V_t(s_t)+\alpha_t\delta_t,
&
A_{t+1}(s_t,a)
&=
A_t(s_t,a)
+
\tau_t
\frac{\mathbf 1\{a_t=a\}}{\pi_b(a|s_t)}
\delta_t,
\end{align*}
where $Q_t=LV_t+A_t$ and
$$
\delta_t
=
r_{s_t, a_t}
+
\gamma\max_{a'}Q_t(s_{t+1},a')
-
Q_t(s_t,a_t).
$$
We consider the regime
$$
\frac{\alpha_t}{\tau_t}\to0,
$$
so that the advantage recursion evolves on the fast timescale and the value
recursion on the slow timescale.

Let
$
Y_{t+1}:=(s_t,a_t,s_{t+1})
$
denote the Markovian observation process. Under
Assumption~\ref{ass:behavior}, $\{Y_t\}$ is a finite-state irreducible and
aperiodic Markov chain with invariant distribution
$$
d_Y(s,a,s')
=
d(s)\pi_b(a|s)P(s'|s,a).
$$
For $Q\in\mathbb R^{|\mathcal S||\mathcal A|}$ and
$y=(s,a,s')$, define the sample temporal-difference residual
$$
\delta(Q,y)
:=
r(s,a)
+
\gamma\max_{a'}Q(s',a')
-
Q(s,a).
$$

The sample advantage field is
$$
H_A(Q,y)(s_0,a_0)
:=
\mathbf 1\{s_0=s\}
\frac{\mathbf 1\{a_0=a\}}{\pi_b(a|s)}
\delta(Q,y),
$$
while the lifted value field is
$$
H_V(Q,y)(s_0,a_0)
:=
\mathbf 1\{s_0=s\}\delta(Q,y).
$$
The updates can therefore be written as
$$
A_{t+1}
=
A_t+\tau_t H_A(Q_t,Y_{t+1}), \quad 
L V_{t+1}
=
L V_t+\alpha_t H_V(Q_t,Y_{t+1}).
$$

Let
$
R(Q):=\mathcal TQ-Q.
$
The mean updates are obtained by integrating with respect to
$d_Y$. For the advantage branch,
\begin{align*}
\bar H_A(Q)(s,a)
&:=
\mathbb E_{Y\sim d_Y}[H_A(Q,Y)(s,a)]
\\
&=
\sum_{s'}
d(s)\pi_b(a|s)P(s'|s,a)
\frac{1}{\pi_b(a|s)}
\left[
r(s,a)
+
\gamma\max_{a'}Q(s',a')
-
Q(s,a)
\right]
\\
&=
d(s)R(Q)(s,a).
\end{align*}
Hence
$
\bar H_A(Q)=DR(Q).
$
Similarly, for the lifted value branch,
\begin{align*}
\bar H_V(Q)(s,a)
&:=
\mathbb E_{Y\sim d_Y}[H_V(Q,Y)(s,a)]
\\
&=
d(s)
\sum_{u}\pi_b(u|s)
\left[
r(s,u)
+
\gamma\sum_{s'}P(s'|s,u)\max_{a'}Q(s',a')
-
Q(s,u)
\right].
\end{align*}
Since $B_s=\textbf{e} b_s^\top$, this is exactly
$$
\bar H_V(Q)=DBR(Q).
$$
Thus the stationary mean updates of the two branches are
$$
\bar H_A(Q)=DR(Q),
\qquad
\bar H_V(Q)=DBR(Q).
$$
We next identify the frozen fast dynamics. Fix an arbitrary value vector
$V$ and consider the limiting ODE of the advantage update:
$$
\dot A
=
D\bigl[
\mathcal T(LV+A)-(LV+A)
\bigr].
$$
Its unique equilibrium is
$
\lambda(V)
:=
Q^\star-LV.
$
Indeed, under the affine change of variables
$
Q=LV+A,
$
the frozen fast limiting ODE becomes
$
\dot Q
=
D(\mathcal TQ-Q).
$
Define
$$
F(Q):=Q+D(\mathcal TQ-Q).
$$
For every $Q,Q'$ and every $(s,a)$,
\begin{align*}
|F(Q)(s,a)-F(Q')(s,a)|
&\le
(1-d(s))|Q(s,a)-Q'(s,a)|
+
d(s)
|\mathcal TQ(s,a)-\mathcal TQ'(s,a)|
\\
&\le
\bigl[1-d(s)(1-\gamma)\bigr]
\|Q-Q'\|_\infty.
\end{align*}
Therefore
\begin{equation}
\label{eq:fast_contraction}
    \|F(Q)-F(Q')\|_\infty
\le
(1-\kappa)\|Q-Q'\|_\infty,
\qquad
\kappa:=d_{\min}(1-\gamma)>0.
\end{equation}
Let $Q(\cdot)$ solve the frozen fast limiting ODE and set
$
V(Q):=\|Q-Q^\star\|_\infty.
$
The preceding contraction \eqref{eq:fast_contraction} implies by the same arguments as \eqref{eq:app_markov_dini_inf},
$$
D^+V(Q(t))
\le
-\kappa V(Q(t)).
$$
Applying the comparison lemma
\citep[Lemma~3.4]{khalil2002nonlinear} gives
$$
\|Q(t)-Q^\star\|_\infty
\le
e^{-\kappa t}
\|Q(0)-Q^\star\|_\infty.
$$
Hence $Q^\star$ is the unique globally exponentially stable equilibrium of
the frozen fast $Q$-dynamics, and therefore
$
\lambda(V)=Q^\star-LV
$
is the unique globally exponentially stable equilibrium of the frozen fast
advantage limiting ODE for every fixed $V$.
\subsection{Almost-sure tracking under timescale separation}
\label{app:two_timescale_asymptotic}

We now prove Theorem~\ref{thm:two_timescale_asymptotic} under Markovian
sampling. A standard two-timescale argument~\citep{borkar2022stochastic} would study $(A_t,V_t)$ jointly:
for a frozen value iterate $V$, the fast advantage recursion has equilibrium
$\lambda(V)=Q^\star-LV$, and the slow iterate evolves along this equilibrium
manifold. In our setting, however, the reduced slow drift vanishes identically,
since
$
LV+\lambda(V)=Q^\star.
$
\[
\dot V = DB(T(LV+\lambda(V))-(LV+\lambda(V))) = 0
\]
Thus every $V$ is an equilibrium of the reduced slow ODE, reflecting the
non-identifiability of the value-advantage decomposition. Standard
two-timescale results \citep{mahadevan2026convergence} requiring convergence of the slow iterate to a unique
equilibrium are therefore unnecessarily strong for our purpose.

The key idea is that the fast tracking error (i.e. the error between the fast iterate and it's equilibrium) is exactly the
 $Q$-error:
$$
\Delta_t
:=
A_t-\lambda(V_t)
=
A_t-(Q^\star-LV_t)
=
Q_t-Q^\star.
$$
After absorbing the fixed $\beta>0$ into the effective fast gain
$\tau_t:=\beta\eta_t$, the recursion for $\Delta_t$ can be written as a
single-timescale Markovian stochastic approximation with a vanishing
perturbation. This strategy of writing the two-timescale stochastic approximation as a single-timescale stochastic approximation with a vanishing perturbation for the faster timescale was used in the proof of \citet[Lemma 8.1]{borkar2022stochastic}, for the martingale difference noise setting. We first establish a general extension of the Markovian ODE
method of \citet{liu2025ode} to such recursions and then specialize it to
\algo{}. This extension of the ODE method \citep{liu2025ode} to vanishing perturbations is analogous to \citet[Extension 3, Section 2.2]{borkar2022stochastic}, which was for martingale difference noise.

We also note that if the noise was indeed Martingale-difference noise i.e. each sample was obtained directly from the stationary distribution of the Markov chain, then \citet[Lemma 8.1]{borkar2022stochastic} would imply
\[
A_t - \lambda(V_t) = Q_t - Q^* \rightarrow 0 \quad a.s.
\]
since $\lambda(V)$ is the globally exponentially stable equilibrium for the fast limiting ODE. In fact, the proof of \citep[Lemma 8.1]{borkar2022stochastic} relies on results for single-timescale SA with vanishing perturbations~\citep[Extension 3, Section 2.2]{borkar2022stochastic}.

\subsubsection{Markovian stochastic approximation with vanishing perturbations}
\label{app:markov_vanishing}

Consider the recursion
\begin{equation}
x_{n+1}
=
x_n
+
a_n\left[
H(x_n,Y_{n+1})
+
r_nG(x_n,Y_{n+1})
\right],
\label{eq:app_vanishing_sa}
\end{equation}
where $\{Y_n\}$ is a Markov chain and $r_n\ge0$ is deterministic with
$r_n\to0$.  Define
$
h(x):=\E_{d_Y}[H(x,Y)]
$
and, for $c\ge1$,
$
H_c(x,y):=H(cx,y)/c.
$
Let $H_\infty$ and
$
h_\infty(x):=\E_{d_Y}[H_\infty(x,Y)]
$
be the scaling limits appearing in Assumptions~3-5 of
\citet{liu2025ode}.

\begin{proposition}[Markovian SA with a vanishing drift perturbation]
\label{prop:markov_vanishing_perturbation}
Suppose Assumptions~1-5 and either Assumption~6 or Assumption~$6'$ of
\citet{liu2025ode} hold for the unperturbed recursion
$
x_{n+1}=x_n+a_nH(x_n,Y_{n+1}).
$
In particular, assume that the ODE
$
\dot x=h_\infty(x)
$
has the origin as its globally asymptotically stable equilibrium.

Assume additionally that, uniformly in $y$,
\[
\|G(x,y)-G(x',y)\|
\le
L_G\|x-x'\|,
\qquad
\sup_y\|G(0,y)\|
\le
G_0
\]
for some finite $L_G,G_0$. If $r_n\to0$, then the iterates of
\eqref{eq:app_vanishing_sa} are almost surely stable:
\[
\sup_n\|x_n\|<\infty
\qquad\text{a.s.}
\]
Moreover, their asymptotic limit set is almost surely a bounded invariant
set of the unperturbed limiting ODE
$
\dot x=h(x).
$
Consequently, if $x^\star$ is the unique globally asymptotically stable
equilibrium of $\dot x=h(x)$, then
$
x_n\to x^\star
$
almost surely.
\end{proposition}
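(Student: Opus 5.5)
The plan is to reduce the perturbed recursion \eqref{eq:app_vanishing_sa} to the framework of \citet{liu2025ode}, in the way \citet[Extension~3, Section~2.2]{borkar2022stochastic} treats vanishing measurement errors. I would split the perturbation into two parts,
\[
r_nG(x_n,Y_{n+1}) = r_n\bigl(G(x_n,Y_{n+1})-G(0,Y_{n+1})\bigr)+r_nG(0,Y_{n+1}).
\]
The first part is bounded by $r_nL_G\|x_n\|$ and the second by $r_nG_0$. Both the stability argument (Theorem~7 of \citet{liu2025ode}) and the ODE-tracking argument (Corollary~8) rely on a single structural step. Define the interpolated trajectory $\bar x(\cdot)$ on the timescale $t_n=\sum_{k<n}a_k$. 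One then shows that on each window $[t_n,t_n+T]$ this trajectory is an asymptotic pseudo-trajectory of the relevant ODE. The aim is to show that the extra term contributes an error that vanishes uniformly on such windows, both after the rescaling used for stability and in the unscaled tracking step.

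\textbf{Step 1 (stability).} Following \citet{liu2025ode}, I would rescale on each window by $c_n:=\max(1,\|x_{n}\|)$ (or the sup over the window, as in their construction). Writing $\hat x_k=x_k/c_n$, the rescaled recursion becomes
\[
\hat x_{k+1}=\hat x_k+a_k\bigl[H_{c_n}(\hat x_k,Y_{k+1})+r_kG(c_n\hat x_k,Y_{k+1})/c_n\bigr].
\]
By the Lipschitz bound,
\[
\|G(c_n\hat x,y)\|/c_n\le L_G\|\hat x\|+G_0/c_n\le L_G\|\hat x\|+G_0 .
\]
Hence the perturbation is at most $r_k(L_G\|\hat x_k\|+G_0)$. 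Summed over a window, it contributes at most $(\sup_{k\ge n}r_k)(T+1)(L_G\sup\|\hat x\|+G_0)$, which tends to $0$. A discrete Gronwall step with constant $L(y)+L_G$ keeps $\sup\|\hat x\|$ on the window bounded by a deterministic constant. Inserting this vanishing term into their comparison between the rescaled iterates and the ODE $\dot x=h_{c}(x)$, and then $\dot x=h_\infty(x)$, leaves their conclusion unchanged. That conclusion is that global asymptotic stability of the origin for $h_\infty$ forces the rescaled iterates to shrink by a fixed factor over each window once $c_n$ is large. The usual contradiction argument then gives $\sup_n\|x_n\|<\infty$ a.s.

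\textbf{Step 2 (tracking).} On the a.s. event $\sup_n\|x_n\|\le C(\omega)$, the perturbation satisfies $\|r_nG(x_n,Y_{n+1})\|\le r_n(L_GC+G_0)\to0$. This is a bounded, vanishing additive error. Its window contributions $\sum_{k\in\text{window}}a_kr_k\|G\|$ are $o(1)$, so the interpolated iterates remain an asymptotic pseudo-trajectory of $\dot x=h(x)$. The Markov-noise averaging term is handled exactly as in Corollary~8, through Assumption~6 or $6'$, because $G$ does not enter that term. Standard Benaïm-type theory then puts the limit set inside a bounded invariant (internally chain transitive) set of $\dot x=h(x)$. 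If $x^\star$ is globally asymptotically stable, that set is $\{x^\star\}$.

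\textbf{Main obstacle.} The delicate point is Step 1. The proof of \citet{liu2025ode} handles Markov noise through a Poisson-equation or averaging decomposition that is tailored to $H$, and the perturbed iterates feed back into the chain-driven terms. I would avoid putting $G$ into their noise decomposition at all. Instead I would treat it as a deterministic-gain drift error inside the Gronwall comparison. This requires checking that their proof only uses the pathwise window estimate in a form that tolerates an additional $o(1)$ term, and I expect this to hold.
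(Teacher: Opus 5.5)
Your proposal is correct and follows essentially the same route as the paper: rescale on ODE-time blocks, use the linear-growth bound $\|G(cx,y)\|/c\le L_G\|x\|+G_0$ to run the discrete Gr\"onwall step with $L(y)+L_G$, and show the perturbation's block contribution is $O\bigl(\sup_{i\ge n_j}r_i\bigr)$, so the rescaled limits still solve $\dot z=h_\infty(z)$; after stability, you show the same vanishing on two-sided windows, so the invariant-set argument of Corollary~8 goes through with $G$ kept out of the Markov-noise averaging. Inside ``the usual contradiction argument'' you leave implicit the perturbed analogues of the block-growth bound $R_{j+1}\le C_1R_j+C_2$ and of the extended equicontinuity needed for Arzel\`a--Ascoli, which the paper states explicitly; both follow from the same modified affine growth bound.
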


\begin{proof}
We work on an almost-sure sample path on which all conclusions of the
Markovian averaging lemmas of \citet{liu2025ode} hold.  The proof follows
their proof of Theorem~7, with the additional task of showing that the
$r_nG$ term disappears both in the stability rescaling and, after stability,
in the ordinary ODE interpolation. We list the steps to our proof as follows:

\paragraph{Step 1: ODE-time blocks and stability rescaling.}
Following \citet{liu2025ode}, define
\[
t(0):=0,
\qquad
t(n):=\sum_{i=0}^{n-1}a_i,
\qquad
m(t):=\max\{n:t(n)\le t\}.
\]
Fix an arbitrary ODE-time horizon $T>0$ and define block endpoints
\[
T_0:=0,
\qquad
T_{j+1}:=
t\!\left(m(T_j+T)+1\right).
\]
Let $\bar x(t):=x_{m(t)}$ be the piecewise-constant interpolation and set
\[
R_j:=\max\{1,\|\bar x(T_j)\|\},
\qquad
\widehat x_j(s)
:=
\frac{\bar x(T_j+s)}{R_j},
\qquad
0\le s<T_j^{+}:=T_{j+1}-T_j.
\]
Thus
\[
\|\widehat x_j(0)\|\le1
\]
for every block, irrespective of whether the original iterates are bounded.

For $c\ge1$, define the rescaled perturbation
\[
G_c(x,y):=\frac{G(cx,y)}{c}.
\]
The assumptions on $G$ imply the uniform linear-growth estimate
\begin{equation}
\|G_c(x,y)\|
\le
L_G\|x\|+G_0,
\qquad
c\ge1.
\label{eq:G-scaled-growth}
\end{equation}
Indeed,
\[
\frac{\|G(cx,y)\|}{c}
\le
\frac{\|G(cx,y)-G(0,y)\|}{c}
+
\frac{\|G(0,y)\|}{c}
\le
L_G\|x\|+G_0.
\]

For
\[
n_j:=m(T_j),
\qquad
n_j(s):=m(T_j+s),
\]
the normalized recursion on block $j$ is
\[
\widehat x_j(s)
=
\widehat x_j(0)
+
\sum_{i=n_j}^{n_j(s)-1}
a_i
\left[
H_{R_j}
\bigl(
\widehat x_j(t(i)-T_j),Y_{i+1}
\bigr)
+
r_iG_{R_j}
\bigl(
\widehat x_j(t(i)-T_j),Y_{i+1}
\bigr)
\right].
\]

\paragraph{Step 2: uniform boundedness and equicontinuity of the normalized blocks.}

Since $r_n\to0$, there exists $N_0$ such that
$r_n\le1$ for all $n\ge N_0$. For such $n$, Assumption~4 of
\citet{liu2025ode} and \eqref{eq:G-scaled-growth} give
\[
\|H_c(x,y)+r_nG_c(x,y)\|
\le
\|H_c(0,y)\|+G_0+
\bigl(L(y)+L_G\bigr)\|x\|.
\]
Thus the perturbed scaled recursion has the same affine linear-growth
structure as the scaled recursion in the stability proof of
\citet{liu2025ode}, with $L(y)$ replaced by $L(y)+L_G$ and the
origin contribution enlarged by the deterministic constant $G_0$.

The finite-block estimates used by \citet{liu2025ode} remain valid after
these replacements. In particular, their Lemmas~37--39 provide, on the
fixed almost-sure sample path, finite uniform bounds for
\[
\sum_i a_i H_c(0,Y_{i+1})
\qquad\text{and}\qquad
\sum_i a_i L(Y_{i+1})
\]
over ODE-time blocks of fixed length. Adding $L_G$ does not alter any
centered Markovian averaging condition, since
\[
[L(Y_i)+L_G]-[\E L(Y)+L_G]
=
L(Y_i)-\E L(Y),
\]
and the additional origin contribution satisfies
\[
G_0\sum_i a_i=O_T(1)
\]
on every fixed ODE-time block.

Consequently, the same discrete-Gr\"onwall argument as in
\citet[Lemma~40]{liu2025ode} yields, for every fixed $T>0$, a finite
sample-path-dependent constant $C_T(\omega)$ and an index $J_T(\omega)$
such that
\begin{equation}
\sup_{j\ge J_T}
\sup_{0\le s\le T}
\|\widehat x_j(s)\|
\le
C_T(\omega).
\label{eq:normalized-block-bound}
\end{equation}
The finitely many blocks $j<J_T$ can be absorbed by increasing
$C_T(\omega)$, so \eqref{eq:normalized-block-bound} may be taken to hold
for all $j$.  For brevity we henceforth write $C_T$ for this finite
sample-path-dependent constant.

The additional perturbation also preserves the extended equicontinuity
argument of \citet[Lemma~15]{liu2025ode}. Indeed, for
$0\le s_1\le s_2\le T$,
\[
\begin{aligned}
&
\left\|
\sum_{i=n_j(s_1)}^{n_j(s_2)-1}
a_ir_i
G_{R_j}
\bigl(
\widehat x_j(t(i)-T_j),Y_{i+1}
\bigr)
\right\|
\\
&\qquad\le
C_{G,T}
\sum_{i=n_j(s_1)}^{n_j(s_2)-1}a_i,
\end{aligned}
\]
where
\[
C_{G,T}:=L_GC_T+G_0.
\]
The ODE-time increment on the right converges uniformly to
$|s_2-s_1|$, up to the vanishing mesh
$\sup_{i\ge n_j}a_i$. Hence the perturbation contributes a modulus of
continuity that vanishes with $|s_2-s_1|$, and
$\{\widehat x_j(\cdot)\}_j$ is equicontinuous in the extended sense.

We will also need the analogue of the one-block growth estimate used in
the proof of \citet[Lemma~16]{liu2025ode}. Repeating the argument of
\citet[Lemma~42]{liu2025ode} with the same modified growth bound gives
finite sample-path-dependent constants $C_1,C_2$ such that
\begin{equation}
R_{j+1}\le C_1R_j+C_2
\qquad\text{for all sufficiently large }j.
\label{eq:perturbed-block-growth}
\end{equation}
After modifying $C_1,C_2$ to absorb finitely many initial blocks,
\eqref{eq:perturbed-block-growth} may be assumed to hold for every $j$.

\paragraph{Step 3: the perturbation vanishes under the stability rescaling.}

Using \eqref{eq:normalized-block-bound}, for every $0\le s\le T$,
\[
\begin{aligned}
&
\left\|
\sum_{i=n_j}^{n_j(s)-1}
a_ir_i
G_{R_j}
\bigl(
\widehat x_j(t(i)-T_j),Y_{i+1}
\bigr)
\right\|
\\
&\qquad\le
C_{G,T}
\left(\sup_{i\ge n_j}r_i\right)
\sum_{i=n_j}^{n_j(T)-1}a_i .
\end{aligned}
\]
By the definition of $m(\cdot)$,
\[
\sum_{i=n_j}^{n_j(T)-1}a_i
\le
T+\sup_{i\ge n_j}a_i
=
T+o(1).
\]
Since $n_j\to\infty$ and $r_n\to0$,
\[
\sup_{i\ge n_j}r_i\to0.
\]
Hence
\begin{equation}
\sup_{0\le s\le T}
\left\|
\sum_{i=n_j}^{n_j(s)-1}
a_ir_i
G_{R_j}
\bigl(
\widehat x_j(t(i)-T_j),Y_{i+1}
\bigr)
\right\|
\longrightarrow0.
\label{eq:scaled-perturbation-vanishes}
\end{equation}
Thus the perturbation contributes zero to every subsequential limit of
the stability-rescaled interpolations.

\paragraph{Step 4: stability.}

Suppose, toward a contradiction, that
\[
\sup_jR_j=\infty.
\]
The block-growth estimate
\eqref{eq:perturbed-block-growth} allows the same subsequence
construction as in the proof of
\citet[Lemma~16]{liu2025ode}. Hence there exists a subsequence
$\{j_k\}$ such that
\begin{equation}
R_{j_k}\longrightarrow\infty,
\qquad
R_{j_k+1}>R_{j_k}
\quad\text{for every }k.
\label{eq:increasing-scale-subsequence}
\end{equation}
By the uniform boundedness and extended equicontinuity established in
Step~2, Arzel\`a--Ascoli yields a further subsequence, not relabeled,
along which the normalized trajectories
$\widehat x_{j_k}(\cdot)$ converge uniformly on $[0,T]$.

We next identify their limit.  Since neither $H$ nor the driving Markov
chain has been modified, the averaging and scaling arguments for the
$H$-part are exactly those of \citet[Lemmas~17--19]{liu2025ode}.
These lemmas identify the limiting contribution of the $H$-increments
with the ODE at infinity
\[
\dot z=h_\infty(z).
\]
The only additional term in the present recursion is the perturbation,
and by \eqref{eq:scaled-perturbation-vanishes} its cumulative contribution
is $o(1)$ uniformly on $[0,T]$. Therefore the Gr\"onwall argument used in
\citet[Lemma~20]{liu2025ode} continues to show that every normalized
subsequential limit of the perturbed recursion solves the same ODE
\[
\dot z=h_\infty(z).
\]

By Assumption~5 of \citet{liu2025ode}, the origin is globally
asymptotically stable for this ODE.  With the analogues of Lemmas~16 and
20 established above, the contraction argument of
\citet[Lemma~21]{liu2025ode} applies verbatim: choosing $T$ as in that
proof, a sufficiently large normalized block must satisfy
\[
R_{j_k+1}\le R_{j_k}
\]
for some $k$, contradicting
\eqref{eq:increasing-scale-subsequence}. Hence
\[
\sup_jR_j<\infty.
\]

It remains to pass from bounded block endpoints to bounded individual
iterates.  The within-block estimate of
\citet[Lemma~22]{liu2025ode} depends only on the same finite-block growth
bounds used above. Replacing $L(y)$ by $L(y)+L_G$ and enlarging the
origin contribution by $G_0$ therefore gives the same conclusion:
\[
\sup_n\|x_n\|<\infty
\qquad\text{a.s.}
\]
This proves stability of the perturbed recursion.

\paragraph{Step 5: the perturbation vanishes on ordinary two-sided
ODE-time windows.}

Now that stability has been proved, fix the sample path and let
$K(\omega)$ be a compact set containing all iterates
$\{x_n(\omega)\}$. By global Lipschitzness of $G$,
\[
C_K
:=
\sup_{x\in K(\omega),\,y}\|G(x,y)\|
\le
L_G\sup_{x\in K(\omega)}\|x\|+G_0
<\infty.
\]

Let $n_j\to\infty$ be arbitrary and fix $T>0$.  For the forward window,
uniformly for $0\le s\le T$,
\[
\begin{aligned}
&
\left\|
\sum_{i=n_j}^{m(t(n_j)+s)-1}
a_ir_iG(x_i,Y_{i+1})
\right\|
\\
&\qquad\le
C_K
\left(\sup_{i\ge n_j}r_i\right)
\sum_{i=n_j}^{m(t(n_j)+T)-1}a_i
\\
&\qquad\le
C_K
\left(\sup_{i\ge n_j}r_i\right)
\left(
T+\sup_{i\ge n_j}a_i
\right)
\longrightarrow0.
\end{aligned}
\]

For the backward window, define, for all sufficiently large $j$,
\[
\underline n_j:=m(t(n_j)-T).
\]
Then $\underline n_j\to\infty$.  Uniformly for $-T\le s\le0$,
\[
\begin{aligned}
&
\left\|
\sum_{i=m(t(n_j)+s)}^{n_j-1}
a_ir_iG(x_i,Y_{i+1})
\right\|
\\
&\qquad\le
C_K
\left(\sup_{i\ge\underline n_j}r_i\right)
\sum_{i=\underline n_j}^{n_j-1}a_i
\\
&\qquad\le
C_K
\left(\sup_{i\ge\underline n_j}r_i\right)
\left(
T+\sup_{i\ge\underline n_j}a_i
\right)
\longrightarrow0.
\end{aligned}
\]
Consequently, the perturbation has zero asymptotic rate of change
uniformly on every fixed two-sided ODE-time window $[-T,T]$.

\paragraph{Step 6: identification of the limiting ODE and invariant set.}

We now repeat the stable-interpolation argument used in the proof of
\citet[Corollary~8]{liu2025ode}.  That argument considers arbitrary
two-sided windows $[-T,T]$, extracts uniformly convergent subsequences,
and identifies their limits as complete trajectories of the limiting
ODE.

The Markovian averaging arguments for the $H$-increments are unchanged,
because neither $H$ nor $\{Y_n\}$ has been modified.  Step~5 shows that,
on every such two-sided window, the cumulative contribution of the
additional $r_nG$ term converges uniformly to zero. Therefore the
analogues of the interpolation and discretization arguments in the proof
of \citet[Corollary~8]{liu2025ode} yield exactly the same limiting
equation:
\[
\dot x=h(x).
\]

Hence every subsequential limit trajectory of the stable interpolation
is a complete trajectory of the unperturbed ODE $\dot x=h(x)$.  It
follows, exactly as in the proof of
\citet[Corollary~8]{liu2025ode}, that the asymptotic limit set of
$\{x_n\}$ is almost surely a nonempty bounded invariant set of
\[
\dot x=h(x).
\]

If this ODE has a unique globally asymptotically stable equilibrium
$x^\star$, then its only bounded invariant set is $\{x^\star\}$.
Consequently,
\[
x_n\longrightarrow x^\star
\qquad\text{a.s.}
\]
\end{proof}

\subsubsection{Specialization to two-timescale \algo{}}
\label{app:two_timescale_markov_specialization}

We now apply Proposition~\ref{prop:markov_vanishing_perturbation} to the
two-timescale \algo{} recursion. Recall
$$
V_{t+1}(s_t)
=
V_t(s_t)+\alpha_t\delta_t,
\quad 
A_{t+1}(s_t,a)
=
A_t(s_t,a)
+
\tau_t
\frac{\mathbf 1\{a_t=a\}}{\pi_b(a|s_t)}
\delta_t,
\qquad
\tau_t:=\beta\eta_t.
$$
Here
$$
Q_t=LV_t+A_t,
\qquad
\delta_t
=
r(s_t, a_t)
+
\gamma\max_{a'}Q_t(s_{t+1},a')
-
Q_t(s_t,a_t).
$$

Let
$$
\Delta_t:=Q_t-Q^\star
=
A_t-(Q^\star-LV_t).
$$
For
$y=(s,a,s')$, define
$$
\delta_\Delta(\Delta,y)
:=
r(s,a)
+
\gamma\max_{a'}
\bigl[Q^\star(s',a')+\Delta(s',a')\bigr]
-
Q^\star(s,a)-\Delta(s,a).
$$
The action-specific fast update is
$$
H_A(\Delta,y)(s_0,a_0)
:=
\mathbf 1\{s_0=s\}
\frac{\mathbf 1\{a_0=a\}}{\pi_b(a|s)}
\delta_\Delta(\Delta,y),
$$
while the shared-value update is
$$
H_V(\Delta,y)(s_0,a_0)
:=
\mathbf 1\{s_0=s\}
\delta_\Delta(\Delta,y).
$$
The  error recursion is therefore
\begin{equation}
\Delta_{t+1}
=
\Delta_t
+
\tau_t
\left[
H_A(\Delta_t,Y_{t+1})
+
\frac{\alpha_t}{\tau_t}
H_V(\Delta_t,Y_{t+1})
\right].
\label{eq:app_two_delta_markov}
\end{equation}
Thus \eqref{eq:app_two_delta_markov} has precisely the form
\eqref{eq:app_vanishing_sa}, with
$$
a_t=\tau_t,
\qquad
r_t=\frac{\alpha_t}{\tau_t},
\qquad
H=H_A,
\qquad
G=H_V.
$$

\paragraph{Verification of the Markovian SA assumptions.}
We verify the assumptions of
Proposition~\ref{prop:markov_vanishing_perturbation} for the dominant
field $H_A$.

Under Assumption~\ref{ass:behavior},
$Y_{t+1}=(s_t,a_t,s_{t+1})$ is a finite-state irreducible and aperiodic
Markov chain with invariant distribution
$$
d_Y(s,a,s')
=
d(s)\pi_b(a|s)P(s'|s,a).
$$
For every $\Delta,\Delta'$ and $y$,
$$
|\delta_\Delta(\Delta,y)-\delta_\Delta(\Delta',y)|
\le
(1+\gamma)\|\Delta-\Delta'\|_\infty.
$$
Hence, with
$b_{\min}:=\min_{s,a}\pi_b(a|s)>0$,
$$
\|H_A(\Delta,y)-H_A(\Delta',y)\|_\infty
\le
\frac{1+\gamma}{b_{\min}}
\|\Delta-\Delta'\|_\infty,
$$
and
$$
\|H_V(\Delta,y)-H_V(\Delta',y)\|_\infty
\le
(1+\gamma)
\|\Delta-\Delta'\|_\infty.
$$
Thus both sample fields are globally Lipschitz with linear growth.
Since the Markov state space is finite and rewards are bounded, all required
integrability conditions hold.

Averaging $H_A$ under $d_Y$ gives
\begin{equation}
h_A(\Delta)
=
D\left[
\mathcal T(Q^\star+\Delta)
-
(Q^\star+\Delta)
\right].
\label{eq:app_two_fast_mean}
\end{equation}
Indeed, importance weighting cancels the behavior probability:
$$
d(s)\pi_b(a|s)
\frac1{\pi_b(a|s)}
=
d(s).
$$
Similarly,
$$
h_V(\Delta)
=
DB\left[
\mathcal T(Q^\star+\Delta)
-
(Q^\star+\Delta)
\right].
$$
Only $h_A$ determines the limiting ODE because
$\alpha_t/\tau_t\to0$.

For $c\ge1$, define
$H_{A,c}(\Delta,y):=H_A(c\Delta,y)/c$. Since $Q^\star$ and the reward are
$O(1)$ under this scaling,
$H_{A,c}$ converges uniformly on compact sets to
$$
H_{A,\infty}(\Delta,y)(s_0,a)
=
\mathbf 1\{s_0=s\}
\frac{\mathbf 1\{u=a\}}{\pi_b(a|s)}
\left[
\gamma\max_{a'}\Delta(s',a')
-\Delta(s,u)
\right].
$$
Its stationary average is
\begin{equation}
h_{A,\infty}(\Delta)
=
D(\mathcal T_0\Delta-\Delta),
\label{eq:app_two_fast_infty}
\end{equation}
where
$$
(\mathcal T_0\Delta)(s,a)
=
\gamma
\sum_{s'}P(s'|s,a)
\max_{a'}\Delta(s',a').
$$
The finite-state LLN/LIL conditions required by
\citet{liu2025ode} hold exactly as in
Appendix~\ref{app:rpq_asymptotic}.

\paragraph{Global exponential stability of the dominant limiting ODE.}
Let
$$
\kappa_{\rm fast}
:=
d_{\min}(1-\gamma).
$$
Since $\mathcal T$ is a $\gamma$-contraction, the diagonally preconditioned
Bellman map satisfies, for all $\zeta>0, \quad \zeta < d_{\max}^{-1}$,
$$
\left\|
\Delta
+
\zeta h_A(\Delta)
\right\|_\infty
\le
(1-\zeta\kappa_{\rm fast})
\|\Delta\|_\infty.
$$
The proof is identical to the strategy used to prove   \eqref{eq:async_q_learning_contraction}.
Let $\Delta(\cdot)$ solve $\dot\Delta=h_A(\Delta)$ and set
$W(\Delta):=\|\Delta\|_\infty$. Since $h_A$ is globally Lipschitz,
$$
\Delta(t+h)
=
\Delta(t)
+
h h_A(\Delta(t))
+
r_t(h),
\qquad
\frac{\|r_t(h)\|_\infty}{h}\to0.
$$
Using the $1$-Lipschitz property of the sup norm and the preceding
contraction inequality,
$$
W(\Delta(t+h))
\le
(1-h\kappa_{\rm fast})W(\Delta(t))
+
\|r_t(h)\|_\infty.
$$
Therefore
$$
D^+W(\Delta(t))
\le
-\kappa_{\rm fast}W(\Delta(t)).
$$
Applying the comparison lemma
\citep[Lemma~3.4]{khalil2002nonlinear} to
$\dot u=-\kappa_{\rm fast}u$ gives
$$
\|\Delta(t)\|_\infty
\le
e^{-\kappa_{\rm fast}t}
\|\Delta(0)\|_\infty.
$$
Hence the origin is the unique globally exponentially stable equilibrium of
the dominant limiting ODE.

The same argument applies to the limiting ODE
$\dot\Delta=h_{A,\infty}(\Delta)$ because $\mathcal T_0$ is a
$\gamma$-contraction with unique fixed point $0$. Thus
$$
D^+\|\Delta(t)\|_\infty
\le
-\kappa_{\rm fast}\|\Delta(t)\|_\infty,
$$
and the origin is globally exponentially stable for the ODE at infinity.
All stability hypotheses of
Proposition~\ref{prop:markov_vanishing_perturbation} are therefore satisfied.

\begin{theorem}[Almost-sure convergence of two-timescale \algo{}]
\label{thm:app_two_asymptotic}
Fix any $\beta>0$ and let $\tau_t:=\beta\eta_t$. Suppose
Assumption~\ref{ass:behavior} holds and
$$
\tau_t=\frac{\tau_0}{(t+h)^p},
\qquad
\alpha_t=\frac{\alpha_0}{(t+h)^q},
\qquad
\frac12<p<q\le1.
$$
Then
$$
A_t-\bigl(Q^\star-LV_t\bigr)\to0
\qquad\text{a.s.},
$$
and consequently
$$
Q_t=LV_t+A_t\to Q^\star
\qquad\text{a.s.}
$$
\end{theorem}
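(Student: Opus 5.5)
The plan is to apply Proposition~\ref{prop:markov_vanishing_perturbation} to the error recursion \eqref{eq:app_two_delta_markov}. We take $x_t=\Delta_t=Q_t-Q^\star$, $a_t=\tau_t$, $r_t=\alpha_t/\tau_t$, $H=H_A$ and $G=H_V$. Most hypotheses have already been verified in Appendix~\ref{app:two_timescale_markov_specialization}, so the proof mainly collects them and checks the stepsize conditions.

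\textbf{Stepsizes and perturbation.} With $\tau_t=\tau_0/(t+h)^p$ and $p\in(1/2,1]$, the fast gain $a_t=\tau_t$ is exactly the polynomial schedule used in Appendix~\ref{app:rpq_asymptotic}. It therefore satisfies Assumption~2 of \citet{liu2025ode} by their Remark~1. Since $q>p$, we have $r_t=(\alpha_0/\tau_0)(t+h)^{p-q}\to0$, and $r_t$ is deterministic and nonnegative. The perturbation field $H_V$ is globally Lipschitz in $\Delta$ with constant $1+\gamma$, uniformly in $y$. Moreover, $\sup_y\|H_V(0,y)\|_\infty\le \max_{s,a}|r(s,a)|+(1+\gamma)\|Q^\star\|_\infty<\infty$, because the space of $y$ is finite and $Q^\star$ is bounded. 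This gives the constants $L_G$ and $G_0$.

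\textbf{Assumptions for the unperturbed recursion.} Assumption~1 (unique invariant distribution) follows from Assumption~\ref{ass:behavior}, exactly as before. For Assumption~3, write $H_{A,c}(\Delta,y)-H_{A,\infty}(\Delta,y)=\frac1c b(\Delta,y)$. The residual $b$ collects $r(s,a)$ and the $Q^\star$ terms of $\delta_\Delta$. One caveat: $\max_{a'}[Q^\star+c\Delta]/c$ differs from $\max_{a'}\Delta$ by at most $\|Q^\star\|_\infty/c$. This is not exactly of the form $b(y)/c$, but it is bounded by $C/c$ uniformly and is Lipschitz in $\Delta$ with a constant independent of $c$, which is what the framework needs. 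Assumption~4 follows from the Lipschitz constant $(1+\gamma)/b_{\min}$ of $H_A$. Assumption~5 follows from the global exponential stability of $\dot\Delta=h_{A,\infty}(\Delta)=D(\mathcal T_0\Delta-\Delta)$, already shown above. Assumption~6 follows, as in Appendix~\ref{app:rpq_asymptotic}, from the finite-state ergodicity of $\{Y_t\}$ and Remark~2 of \citet{liu2025ode}.

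\textbf{Conclusion.} Proposition~\ref{prop:markov_vanishing_perturbation} then gives $\sup_t\|\Delta_t\|_\infty<\infty$ almost surely. It also shows that the limit set of $\Delta_t$ is a bounded invariant set of $\dot\Delta=h_A(\Delta)$. The origin is the unique globally exponentially stable equilibrium of this ODE, by the Dini-derivative and comparison-lemma argument with gap $\kappa_{\rm fast}$. Hence its only bounded invariant set is $\{0\}$, and $\Delta_t\to0$ almost surely. Since $\Delta_t=A_t-(Q^\star-LV_t)=Q_t-Q^\star$ identically, both claims follow. Note that we never need boundedness or convergence of $V_t$ or $A_t$ individually; this is what sidesteps the non-identifiability.

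\textbf{Main obstacle.} The delicate step is matching the scaling limit in Assumption~3 to the precise form required by \citet{liu2025ode}, because of the $Q^\star$ shift inside the max. If their assumption insists on $\kappa(c)b(x,y)$ with $b$ Lipschitz, I would instead run the argument in the original $Q$ coordinates. Since $Q_t=\Delta_t+Q^\star$, the shift is harmless there, and $H_A(Q,y)$ has exactly the affine structure already handled in Appendix~\ref{app:rpq_asymptotic}. Everything else is a direct citation of earlier verified facts.
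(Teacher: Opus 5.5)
Your proposal is correct and follows the paper's proof. Like the paper, you apply Proposition~\ref{prop:markov_vanishing_perturbation} to the $\Delta_t$ recursion with $a_t=\tau_t$, $r_t=\alpha_t/\tau_t\to0$, $H=H_A$, $G=H_V$, and conclude from the global exponential stability of $\dot\Delta=h_A(\Delta)$ together with the identity $\Delta_t=Q_t-Q^\star$. Your caveat about the $Q^\star$ shift inside the max is well founded, since the paper only asserts uniform convergence on compacts for Assumption~3; your fallback of running the same argument in $Q$ coordinates, where $H_{A,c}-H_{A,\infty}$ is exactly $\frac1c$ times a reward-only term, closes that point cleanly.
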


\begin{proof}
The recursion \eqref{eq:app_two_delta_markov} satisfies
Proposition~\ref{prop:markov_vanishing_perturbation} with
$a_t=\tau_t$ and
$$
r_t
=
\frac{\alpha_t}{\tau_t}
=
\frac{\alpha_0}{\tau_0}
(t+h)^{p-q}
\longrightarrow0,
$$
since $p<q$. Moreover,
$\tau_t=\tau_0/(t+h)^p$ satisfies the Markovian stochastic-approximation
stepsize conditions because $1/2<p\le1$.

The dominant limiting ODE
$$
\dot\Delta=h_A(\Delta)
$$
has the origin as its unique globally exponentially stable equilibrium, and
the corresponding limiting ODE at infinity is also globally exponentially
stable, as shown above. The sample maps satisfy the required global
Lipschitz and integrability assumptions, while the finite-state ergodic
behavior chain satisfies the required Markov averaging conditions.

Proposition~\ref{prop:markov_vanishing_perturbation} therefore gives
$$
\Delta_t\to0
\qquad\text{a.s.}
$$
Since
$$
\Delta_t
=
A_t-(Q^\star-LV_t)
=
Q_t-Q^\star,
$$
both claims follow.
\end{proof}

\begin{remark}[Why no threshold on $\beta$ is required]
The dominant fast dynamics in
\eqref{eq:app_two_delta_markov} are governed by $H_A$ after the fixed
$\beta>0$ is absorbed into the effective gain $\tau_t=\beta\eta_t$.
Consequently, the limiting ODE has contraction scale
$d_{\min}(1-\gamma)$ and contains no residual-preconditioning threshold.
The shared-value dynamics enter only through the vanishing perturbation
$\alpha_t/\tau_t$. Thus, unlike single-timescale \algo{}, the two-timescale
result holds for every fixed $\beta>0$.
\end{remark}

\begin{remark}[Non-identifiability of the decomposition]
The theorem does not assert that $V_t$ or $A_t$ converges individually. What converges is the  action-value iterate
$Q_t=LV_t+A_t$, which is the quantity of interest.
\end{remark}

\subsection{Finite-time Markovian stochastic approximation with a small perturbation}
\label{app:perturbed_markovian_sa}

We require the following extension of the contractive Markovian
stochastic-approximation framework of \citet{chen2024lyapunov}. The key
difference is the presence of a state-dependent Markovian perturbation whose
magnitude is controlled by a deterministic sequence $\{r_k\}$.

Consider the recursion
\begin{equation}
x_{k+1}
=
x_k
+
a_k
\left[
F(x_k,Y_k)-x_k+r_kG(x_k,Y_k)
\right],
\label{eq:perturbed_markov_sa}
\end{equation}
where $\{Y_k\}$ is a finite-state Markov chain, $\{a_k\}$ is a
non-increasing stepsize sequence, and
\begin{equation}
0\le r_k\le \bar r
\qquad\text{for all }k.
\label{eq:rbar}
\end{equation}
Let $\mu$ denote the stationary distribution of $\{Y_k\}$ and define
$$
\bar F(x):=\E_{Y\sim\mu}[F(x,Y)],
\qquad
\bar G(x):=\E_{Y\sim\mu}[G(x,Y)].
$$

Throughout this subsection, $\|\cdot\|_c$ denotes an arbitrary norm on
$\R^d$.

\begin{assumption}[Dominant contractive operator]
\label{ass:perturbed_F}
There exist $A_F,B_F>0$ such that, for every $x,x'\in\R^d$ and
$y\in\mathcal Y$,
$$
\|F(x,y)-F(x',y)\|_c
\le
A_F\|x-x'\|_c,
\qquad
\|F(0,y)\|_c
\le
B_F.
$$
Moreover, $\bar F$ is a $\rho$-contraction in $\|\cdot\|_c$, for some
$\rho\in(0,1)$, with unique fixed point $x^\star$:
$$
\|\bar F(x)-\bar F(x')\|_c
\le
\rho\|x-x'\|_c,
\qquad
\bar F(x^\star)=x^\star.
$$
\end{assumption}

\begin{assumption}[Perturbation operator]
\label{ass:perturbed_G}
There exist $A_G,B_G,L_G>0$ such that, for every $x,x'\in\R^d$ and
$y\in\mathcal Y$,
$$
\|G(x,y)-G(x',y)\|_c
\le
A_G\|x-x'\|_c,
\qquad
\|G(0,y)\|_c
\le
B_G,
$$
and
$$
\|\bar G(x)-\bar G(x')\|_c
\le
L_G\|x-x'\|_c.
$$
We additionally assume that the perturbation preserves the target:
\begin{equation}
\bar G(x^\star)=0.
\label{eq:G_fixed_point}
\end{equation}
\end{assumption}

\begin{assumption}[Geometric mixing]
\label{ass:perturbed_mixing}
The Markov chain $\{Y_k\}$ admits the unique stationary distribution $\mu$
and there exist $C_{\rm mix}>0$ and $\sigma\in(0,1)$ such that
$$
\max_{y\in\mathcal Y}
\|P^m(y,\cdot)-\mu\|_{\rm TV}
\le
C_{\rm mix}\sigma^m,
\qquad m\ge0.
$$
For $\delta>0$, let
$$
t_\delta
:=
\min\left\{
m\ge0:
\max_y\|P^m(y,\cdot)-\mu\|_{\rm TV}\le\delta
\right\},
$$
and define
$$
t_k:=\max\{1,t_{a_k}\},
\qquad
a_{i,j}:=\sum_{\ell=i}^j a_\ell.
$$
\end{assumption}

We use the following constants. First, the complete sample increment in
\eqref{eq:perturbed_markov_sa} satisfies an affine growth bound with
\begin{equation}
A_0:=A_F+1+\bar r A_G,
\qquad
B_0:=B_F+\bar r B_G.
\label{eq:A0B0}
\end{equation}
For the Markovian-noise estimates, define the slightly enlarged constants
\begin{equation}
\widehat A
:=
A_0+A_G+L_G,
\qquad
\widehat B
:=
B_0+B_G.
\label{eq:AhatBhat}
\end{equation}
In particular,
$A_0\le\widehat A$, $A_G+L_G\le\widehat A$,
$B_0\le\widehat B$, and $B_G\le\widehat B$.

\paragraph{Fixed Moreau envelope.}
Importantly, the Lyapunov function is constructed using only the contraction
factor $\rho$ of the dominant operator $\bar F$ and is therefore independent
of $r_k$.

Let
$$
f(x):=\frac12\|x\|_c^2,
\qquad
g(x):=\frac12\|x\|_s^2,
$$
where $g$ is $L$-smooth with respect to $\|\cdot\|_s$. Let
$\ell_{cs},u_{cs}>0$ satisfy
$$
\ell_{cs}\|x\|_s
\le
\|x\|_c
\le
u_{cs}\|x\|_s.
$$
Choose $\theta>0$ such that
\begin{equation}
\rho^2
<
\frac{1+\theta\ell_{cs}^2}
     {1+\theta u_{cs}^2},
\label{eq:theta_perturbed}
\end{equation}
and define
$$
M(x)
:=
\min_{u\in\R^d}
\left\{
\frac12\|u\|_c^2
+
\frac{1}{2\theta}\|x-u\|_s^2
\right\}.
$$
By the generalized Moreau-envelope properties used in
\citet{chen2024lyapunov}, there exists a norm $\|\cdot\|_m$ such that
$$
M(x)=\frac12\|x\|_m^2.
$$
Moreover, with
$$
\ell_{cm}:=\sqrt{1+\theta\ell_{cs}^2},
\qquad
u_{cm}:=\sqrt{1+\theta u_{cs}^2},
$$
we have
$$
\ell_{cm}\|x\|_m
\le
\|x\|_c
\le
u_{cm}\|x\|_m.
$$
Following \citet{chen2024lyapunov}, define
\begin{equation}
\phi_1
:=
\frac{1+\theta u_{cs}^2}
     {1+\theta\ell_{cs}^2},
\qquad
\phi_2
:=
1-\rho\sqrt{\phi_1},
\qquad
\phi_3
:=
\frac{114L(1+\theta u_{cs}^2)}
     {\theta\ell_{cs}^2}.
\label{eq:phi_perturbed}
\end{equation}
By~\eqref{eq:theta_perturbed}, $\phi_2>0$.

Finally, define the perturbation-drift constant
\begin{equation}
C_G
:=
2L_G\frac{u_{cm}}{\ell_{cm}}.,
\label{eq:CG}
\end{equation}
and
\begin{equation}
\widetilde\phi_3
:=
\frac{(114+112\bar r)Lu_{cm}^2}
     {\theta\ell_{cs}^2}.
\label{eq:phitilde3}
\end{equation}

We now state the finite-time extension.

\begin{theorem}[Contractive Markovian SA with a small Markovian perturbation]
\label{thm:perturbed_markov_sa}
Suppose Assumptions~\ref{ass:perturbed_F}-
\ref{ass:perturbed_mixing} hold. Assume
\begin{equation}
\bar r
\le
\frac{\phi_2}{2C_G},
\label{eq:perturbation_smallness}
\end{equation}
and that the non-increasing stepsizes satisfy, for every $k\ge t_k$,
\begin{equation}
a_{k-t_k,k-1}
\le
\min\left\{
\frac{\phi_2}
     {2\widetilde\phi_3\widehat A^2},
\frac{1}{4A_0}
\right\}.
\label{eq:perturbed_window_condition}
\end{equation}
Let
$$
K:=\min\{k\ge0:k\ge t_k\},
$$
and define
\begin{equation}
c_1
:=
\left(
\|x_0-x^\star\|_c
+
\|x_0\|_c
+
\frac{B_0}{A_0}
\right)^2,
\qquad
c_2
:=
\left(
\widehat A\|x^\star\|_c+\widehat B
\right)^2.
\label{eq:perturbed_c1c2}
\end{equation}
Then, for every $k\ge K$,
\begin{align}
\E\|x_k-x^\star\|_c^2
\le&
\phi_1c_1
\prod_{j=K}^{k-1}(1-\phi_2a_j)+
\widetilde\phi_3c_2
\sum_{i=K}^{k-1}
a_i a_{i-t_i,i-1}
\prod_{j=i+1}^{k-1}
(1-\phi_2a_j).
\label{eq:perturbed_final_bound}
\end{align}
Thus the perturbation preserves the finite-time order of the original
contractive Markovian-SA bound whenever its magnitude is uniformly small
enough to be absorbed into the dominant negative drift.
\end{theorem}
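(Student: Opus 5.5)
The plan is to rerun the one-step Moreau-envelope drift argument of \citet[Lemmas~A.2--A.6]{chen2024lyapunov} for the recursion \eqref{eq:perturbed_markov_sa}. The Lyapunov function $M(x)=\frac12\|x\|_m^2$ is built only from the dominant contraction $\rho$ of $\bar F$, so it does not depend on $k$ or $r_k$. Write $e_k:=x_k-x^\star$ and
\[
x_{k+1}-x_k=a_k\bigl[\bar F(x_k)-x_k\bigr]+a_k\bigl[F(x_k,Y_k)-\bar F(x_k)\bigr]+a_kr_k\bar G(x_k)+a_kr_k\bigl[G(x_k,Y_k)-\bar G(x_k)\bigr].
\]
Smoothness of $M$ with respect to $\|\cdot\|_s$ (constant $L/\theta$) gives
\[
M(e_{k+1})\le M(e_k)+\langle\nabla M(e_k),x_{k+1}-x_k\rangle+\tfrac{L}{2\theta}\|x_{k+1}-x_k\|_s^2 .
\]
We then bound the four resulting terms separately: (T1) the mean contractive drift, (T2) the Markovian noise of $F$, (T3) the mean perturbation drift, (T4) the Markovian noise of $G$. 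The quadratic term is handled with the affine growth bound $\|x_{k+1}-x_k\|_c\le a_k(A_0\|x_k\|_c+B_0)$ from \eqref{eq:A0B0}.

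\textbf{Terms T1 and T3.} T1 is treated exactly as in Chen et al. and gives $-2\phi_2 a_k M(e_k)$, up to the usual factor convention. For T3, the key identity is $\bar G(x^\star)=0$ from \eqref{eq:G_fixed_point}, which gives $\|\bar G(x_k)\|_c\le L_G\|e_k\|_c$. Combining $\|\nabla M(e_k)\|_{m,*}=\|e_k\|_m$ with the norm equivalences $\ell_{cm},u_{cm}$ yields
\[
|\text{T3}|\le a_kr_k L_G\frac{u_{cm}}{\ell_{cm}}\cdot 2M(e_k)= a_k r_k C_G M(e_k).
\]
Under \eqref{eq:perturbation_smallness}, $r_kC_G\le\phi_2/2$, so T3 absorbs at most half of the negative drift. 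The net drift is therefore $-\phi_2a_kM(e_k)$, which after normalization is the factor $(1-\phi_2a_j)$ in \eqref{eq:perturbed_final_bound}.

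\textbf{Terms T2 and T4.} These are the main work. We follow the mixing-window decomposition of Chen et al.: replace $x_k$ by $x_{k-t_k}$ inside both the gradient and the sample maps, then condition on $\mathcal F_{k-t_k}$. Geometric mixing (Assumption~\ref{ass:perturbed_mixing}) makes the conditional bias of order $a_k$. Each replacement error is controlled by the window-displacement bound of Lemma~A.2,
\[
\|x_k-x_{k-t_k}\|_c\le 2a_{k-t_k,k-1}\bigl(A_0\|x_k\|_c+B_0\bigr),
\]
which holds by the second part of \eqref{eq:perturbed_window_condition}. For T4 we repeat the T2 estimates with $F$ replaced by $G$. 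The relevant Lipschitz constants are now $A_G$ for the sample map and $L_G$ for the mean, with offset $B_G$, and every contribution carries an extra factor $r_k\le\bar r$. This is exactly why the enlarged constants $\widehat A,\widehat B$ of \eqref{eq:AhatBhat} appear, and why $\phi_3$ is inflated to $\widetilde\phi_3$ with its $(114+112\bar r)$ coefficient. Collecting everything gives a bound
\[
\widetilde\phi_3\, a_ka_{k-t_k,k-1}\bigl(\widehat A^2\|e_k\|_c^2+c_2\bigr)
\]
after writing $\|x_k\|_c\le\|e_k\|_c+\|x^\star\|_c$. The first condition of \eqref{eq:perturbed_window_condition} then lets the $\|e_k\|^2$ part consume at most another fraction of the drift.

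\textbf{Closing the argument.} This yields the one-step recursion
\[
\E M(e_{k+1})\le(1-\phi_2a_k)\E M(e_k)+\tfrac{\widetilde\phi_3c_2}{2u_{cm}^2}a_ka_{k-t_k,k-1},\qquad k\ge K.
\]
We unroll it from $K$. The initial error $\E\|e_K\|_c^2$ is bounded by $c_1$ via Lemma~A.2 applied on $[0,K]$, using $a_{0,K-1}\le 1/(4A_0)$. Converting back through $\phi_1=u_{cm}^2/\ell_{cm}^2$ gives \eqref{eq:perturbed_final_bound}.

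\textbf{Main obstacle.} I expect the hardest step to be the bookkeeping in T4. We must make sure the extra cross terms are of order $a_ka_{k-t_k,k-1}$ and not merely $a_k$. This relies on the fact that only the centered fluctuation $G(x,Y_k)-\bar G(x)$ enters T4, while the mean part was already handled in T3. The second point to get right is that $r_k$ and $r_{k-t_k}$ are both bounded by $\bar r$, which avoids any need for monotonicity of $r_k$.
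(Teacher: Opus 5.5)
Your proposal is correct and matches the paper's proof essentially step for step. The shared ingredients are a fixed Moreau envelope built from $\rho$ alone, the expansion into mean drift and stationary perturbation drift (the latter controlled through $\bar G(x^\star)=0$ and $C_G$), the two Markovian-noise terms handled by the mixing-window decomposition with the enlarged constants $\widehat A,\widehat B$, the affine-growth bound for the second-order term, and unrolling from $K$ with $\|e_K\|_c^2\le c_1$ from Lemma~A.2 of Chen et al. The one thing to tidy is the drift budget. The stationary perturbation consumes at most $\tfrac{\phi_2}{2}a_kM(e_k)$ of the $-2\phi_2a_kM(e_k)$ drift, which is a quarter rather than half, so your intermediate net drift should be $-\tfrac32\phi_2a_kM(e_k)$, not $-\phi_2a_kM(e_k)$. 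The first window condition then lets the noise consume the remaining $\tfrac{\phi_2}{2}a_kM(e_k)$, and this is what produces the factor $(1-\phi_2a_k)$.
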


\begin{proof}
Write
$$
e_k:=x_k-x^\star.
$$
By the $L/\theta$-smoothness of $M$ and
\eqref{eq:perturbed_markov_sa},
\begin{align}
M(e_{k+1})
\le{}&
M(e_k)
+
\underbrace{
a_k
\langle
\nabla M(e_k),
\bar F(x_k)-x_k
\rangle
}_{T_1}
+
\underbrace{
a_kr_k
\langle
\nabla M(e_k),
\bar G(x_k)
\rangle
}_{T_1'}
\nonumber\\
&+
\underbrace{
a_k
\langle
\nabla M(e_k),
F(x_k,Y_k)-\bar F(x_k)
\rangle
}_{T_3}
+
\underbrace{
a_kr_k
\langle
\nabla M(e_k),
G(x_k,Y_k)-\bar G(x_k)
\rangle
}_{T_3'}
\nonumber\\
&+
\underbrace{
\frac{La_k^2}{2\theta}
\|
F(x_k,Y_k)-x_k+r_kG(x_k,Y_k)
\|_s^2
}_{T_4}.
\label{eq:perturbed_moreau_expansion}
\end{align}

We now bound each of the terms as follows:


\begin{lemma}[Dominant Moreau drift]
\label{lem:perturbed_dominant_drift}
\[
T_1
\le
-2\phi_2a_kM(e_k) \quad \forall k\ge 1 .
\]
\end{lemma}

\begin{proof}
This is a direct restatement of
\citet[Lemma~A.1]{chen2024lyapunov}, applied to
$e_k=x_k-x^\star$ and multiplied by $a_k$
\end{proof}


\begin{lemma}[Stationary perturbation drift]
\label{lem:perturbed_stationary_drift}
Suppose Assumption~\ref{ass:perturbed_G} holds and
$\bar G(x^\star)=0$. Then
\[
T_1'
\le
C_G a_kr_k M(e_k),
\qquad
C_G
:=
2L_G\frac{u_{cm}}{\ell_{cm}}.
\]
Consequently, if
\[
\bar r
\le
\frac{\phi_2}{2C_G},
\]
then
\[
T_1+T_1'
\le
-\frac{3\phi_2}{2}a_kM(e_k).
\]
\end{lemma}

\begin{proof}
Since
$
M(e)=\frac12\|e\|_m^2,
$
we may write
\[
\nabla M(e_k)
=
\|e_k\|_m v_{e_k},
\qquad
v_{e_k}\in\partial\|e_k\|_m.
\]
By Lemma~\ref{lem:moreau_technical},
$
\|v_{e_k}\|_{m,*}\le1.
$
Hence, by H\"older's inequality,
\[
\begin{aligned}
\langle \nabla M(e_k),\bar G(x_k)\rangle
&=
\|e_k\|_m
\langle v_{e_k},\bar G(x_k)\rangle
\\
&\le
\|e_k\|_m
\|\bar G(x_k)\|_m.
\end{aligned}
\]
Since $\bar G(x^\star)=0$, the Lipschitz assumption and norm equivalence give
\[
\|\bar G(x_k)\|_m
\le
\frac1{\ell_{cm}}
\|\bar G(x_k)-\bar G(x^\star)\|_c
\le
L_G\frac{u_{cm}}{\ell_{cm}}\|e_k\|_m.
\]
Therefore
\[
T_1'
\le
L_G\frac{u_{cm}}{\ell_{cm}}
a_kr_k\|e_k\|_m^2
=
2L_G\frac{u_{cm}}{\ell_{cm}}
a_kr_kM(e_k),
\]
which proves the first claim. Combining this with
Lemma~\ref{lem:perturbed_dominant_drift} and
$r_k\le\bar r\le\phi_2/(2C_G)$ yields
\[
T_1+T_1'
\le
-\left(2\phi_2-C_G\bar r\right)a_kM(e_k)
\le
-\frac{3\phi_2}{2}a_kM(e_k).
\]
\end{proof}


\begin{lemma}[Short-window iterate movement]
\label{lem:perturbed_movement}
Suppose
\[
\|F(x_k,Y_k)-x_k+r_kG(x_k,Y_k)\|_c
\le
A_0\|x_k\|_c+B_0.
\]
If
\[
a_{k-t_k,k-1}
\le
\frac1{4A_0},
\]
then, for every $j\in[k-t_k,k]$,
\[
\|x_j-x_{k-t_k}\|_c
\le
2a_{k-t_k,k-1}
\left(
A_0\|x_{k-t_k}\|_c+B_0
\right),
\]
and
\[
\|x_j-x_{k-t_k}\|_c
\le
4a_{k-t_k,k-1}
\left(
A_0\|x_k\|_c+B_0
\right).
\]
\end{lemma}

\begin{proof}
This is the iterate-movement argument of
\citet[Lemma~A.2]{chen2024lyapunov}.
Its proof uses only the pathwise affine growth bound on the sample
increment and the condition
$a_{k-t_k,k-1}\le1/(4A_0)$.
Replacing the sample increment there by
$
F(x_k,Y_k)-x_k+r_kG(x_k,Y_k)
$
and taking $A=A_0$, $B=B_0$ therefore gives the two stated bounds.
\end{proof}


\begin{lemma}[Dominant Markovian-noise term]
\label{lem:perturbed_T3}
Suppose the assumptions of
Lemma~\ref{lem:perturbed_movement} hold and let
$\widehat A,\widehat B,c_2$ be as defined above. Then
\[
\E[T_3]
\le
\frac{112L\widehat A^2u_{cm}^2}
     {\theta\ell_{cs}^2}
a_ka_{k-t_k,k-1}
\E[M(e_k)]
+
\frac{56L}
     {\theta\ell_{cs}^2}
a_ka_{k-t_k,k-1}c_2.
\]
\end{lemma}

\begin{proof}
This is the Markovian-noise estimate of
\citet[Lemmas~A.3-A.4]{chen2024lyapunov}.
Their proof uses only the smoothness of $M$, the short-window movement
bounds, the Lipschitz and affine-growth bounds for $F$, and the mixing
estimate for $\{Y_k\}$.

Lemma~\ref{lem:perturbed_movement} provides the required movement bounds
for the present recursion, while
$
A_F+1\le\widehat A
$
and
$
B_F\le\widehat B.
$
Thus the same calculation applies with the constants enlarged to
$\widehat A,\widehat B$, yielding the stated inequality.
\end{proof}

\begin{lemma}[Perturbation Markovian-noise term]
\label{lem:perturbed_T3prime}
Under Assumption~\ref{ass:perturbed_G} and the conditions of
Lemma~\ref{lem:perturbed_movement},
\[
\E[T_3']
\le
r_k
\frac{112L\widehat A^2u_{cm}^2}
     {\theta\ell_{cs}^2}
a_ka_{k-t_k,k-1}\E[M(e_k)]
+
r_k
\frac{56L}
     {\theta\ell_{cs}^2}
a_ka_{k-t_k,k-1}c_2 .
\]
\end{lemma}

\begin{proof}
Write $t:=t_k$ and $a_{k-t,k-1}:=\sum_{j=k-t}^{k-1}a_j$.
As in the Markovian-noise decomposition of
\citet[Lemma~A.3]{chen2024lyapunov}, insert the delayed iterate
$x_{k-t}$ and decompose
\[
T_3'=T_{31}'+T_{32}'+T_{33}',
\]
where
\begin{align*}
T_{31}'
:={}&
a_kr_k
\left\langle
\nabla M(e_k)-\nabla M(e_{k-t}),
G(x_k,Y_k)-\bar G(x_k)
\right\rangle,
\\
T_{32}'
:={}&
a_kr_k
\left\langle
\nabla M(e_{k-t}),
G(x_k,Y_k)-G(x_{k-t},Y_k)
\right.
\\[-1mm]
&\hspace{38mm}\left.
+\bar G(x_{k-t})-\bar G(x_k)
\right\rangle,
\\
T_{33}'
:={}&
a_kr_k
\left\langle
\nabla M(e_{k-t}),
G(x_{k-t},Y_k)-\bar G(x_{k-t})
\right\rangle .
\end{align*}

For convenience, define
\[
S_k
:=
\widehat A\|e_k\|_c
+
\widehat A\|x^\star\|_c
+
\widehat B .
\]
By the definitions of $\widehat A,\widehat B$ and
Lemma~\ref{lem:perturbed_movement},
\[
\|x_k-x_{k-t}\|_c
\le
4a_{k-t,k-1}S_k .
\]
Moreover, using the reverse movement estimate exactly as in
\citet[Lemma~A.3]{chen2024lyapunov},
\[
\|e_{k-t}\|_c
\le
2\left(
\|e_k\|_c+\|x^\star\|_c+\frac{\widehat B}{\widehat A}
\right).
\]
Since $M$ is $L/\theta$-smooth with respect to $\|\cdot\|_s$,
\begin{align}
\|\nabla M(e_k)-\nabla M(e_{k-t})\|_{s,*}
&\le
\frac{L}{\theta}\|x_k-x_{k-t}\|_s
\nonumber\\
&\le
\frac{4L}{\theta\ell_{cs}}
a_{k-t,k-1}S_k,
\label{eq:T31prime_grad}
\end{align}
and, because $\nabla M(0)=0$,
\begin{align}
\|\nabla M(e_{k-t})\|_{s,*}
&\le
\frac{L}{\theta}\|e_{k-t}\|_s
\nonumber\\
&\le
\frac{2L}{\theta\ell_{cs}}
\left(
\|e_k\|_c+\|x^\star\|_c+\frac{\widehat B}{\widehat A}
\right).
\label{eq:T32prime_grad}
\end{align}

We now bound the three terms separately.

\paragraph{Bounding $T_{31}'$.}
Since $\bar G(x^\star)=0$, the sample growth and stationary Lipschitz
bounds imply
\begin{align*}
\|G(x_k,Y_k)-\bar G(x_k)\|_c
&\le
\|G(x_k,Y_k)\|_c+\|\bar G(x_k)\|_c
\\
&\le
A_G\|x_k\|_c+B_G+L_G\|e_k\|_c
\\
&\le
\widehat A\|e_k\|_c
+
\widehat A\|x^\star\|_c
+
\widehat B
=
S_k .
\end{align*}
Hence, by H\"older's inequality, norm equivalence, and
\eqref{eq:T31prime_grad},
\begin{align*}
T_{31}'
&\le
\frac{a_kr_k}{\ell_{cs}}
\|\nabla M(e_k)-\nabla M(e_{k-t})\|_{s,*}
\|G(x_k,Y_k)-\bar G(x_k)\|_c
\\
&\le
r_k
\frac{4L}{\theta\ell_{cs}^2}
a_ka_{k-t,k-1}S_k^2.
\end{align*}
Using $(a+b)^2\le2a^2+2b^2$,
\[
S_k^2
\le
2\widehat A^2\|e_k\|_c^2
+
2\left(
\widehat A\|x^\star\|_c+\widehat B
\right)^2.
\]
Since
$
\|e_k\|_c^2\le2u_{cm}^2M(e_k)
$
and
$
c_2:=(\widehat A\|x^\star\|_c+\widehat B)^2,
$
we obtain
\begin{equation}
T_{31}'
\le
r_k
\frac{16L\widehat A^2u_{cm}^2}
     {\theta\ell_{cs}^2}
a_ka_{k-t,k-1}M(e_k)
+
r_k
\frac{8L}
     {\theta\ell_{cs}^2}
a_ka_{k-t,k-1}c_2.
\label{eq:T31prime_full}
\end{equation}

\paragraph{Bounding $T_{32}'$.}
By the sample and stationary Lipschitz properties,
\begin{align*}
\|G(x_k,Y_k)-G(x_{k-t},Y_k)
+\bar G(x_{k-t})-\bar G(x_k)\|_c&\le
(A_G+L_G)\|x_k-x_{k-t}\|_c
\\
\qquad&\le
2\widehat A\|x_k-x_{k-t}\|_c
\\
\qquad&\le
8\widehat A
a_{k-t,k-1}S_k,
\end{align*}
where in the second inequality we used
$A_G,L_G\le\widehat A$.

Combining this with H\"older's inequality,
\eqref{eq:T32prime_grad}, and norm equivalence gives
\begin{align*}
T_{32}'
&\le
\frac{a_kr_k}{\ell_{cs}}
\|\nabla M(e_{k-t})\|_{s,*} \times
\|G(x_k,Y_k)-G(x_{k-t},Y_k)
+\bar G(x_{k-t})-\bar G(x_k)\|_c
\\
&\le
r_k
\frac{16L}{\theta\ell_{cs}^2}
a_ka_{k-t,k-1}S_k^2.
\end{align*}
Applying the same bound on $S_k^2$ as above yields
\begin{equation}
T_{32}'
\le
r_k
\frac{64L\widehat A^2u_{cm}^2}
     {\theta\ell_{cs}^2}
a_ka_{k-t,k-1}M(e_k)
+
r_k
\frac{32L}
     {\theta\ell_{cs}^2}
a_ka_{k-t,k-1}c_2.
\label{eq:T32prime_full}
\end{equation}

\paragraph{Bounding $T_{33}'$.}
Let $\mathcal F_{k-t}$ denote the sigma-field generated by the history
through time $k-t$. Since $x_{k-t}$ and $Y_{k-t}$ are
$\mathcal F_{k-t}$-measurable, the Markov property gives
\begin{align*}
&
\E\!\left[
G(x_{k-t},Y_k)
\,\middle|\,
\mathcal F_{k-t}
\right]
-\bar G(x_{k-t})=
\sum_{y\in\mathcal Y}
\left(
P^t(Y_{k-t},y)-\mu(y)
\right)
G(x_{k-t},y).
\end{align*}
Therefore, using the definition of total-variation distance,
\begin{align*}
&
\left\|
\E[
G(x_{k-t},Y_k)
\mid
\mathcal F_{k-t}]
-\bar G(x_{k-t})
\right\|_c\le
2
\|P^t(Y_{k-t},\cdot)-\mu\|_{\rm TV}
\sup_{y\in\mathcal Y}
\|G(x_{k-t},y)\|_c.
\end{align*}
By the definition $t=t_{a_k}$,
\[
\|P^t(Y_{k-t},\cdot)-\mu\|_{\rm TV}
\le
a_k.
\]
Moreover,
\[
\|G(x_{k-t},y)\|_c
\le
A_G\|x_{k-t}\|_c+B_G.
\]
Using the reverse movement estimate and the definitions of
$\widehat A,\widehat B$ exactly as above,
\[
A_G\|x_{k-t}\|_c+B_G
\le
2S_k.
\]
Consequently,
\begin{equation}
\left\|
\E[
G(x_{k-t},Y_k)
\mid
\mathcal F_{k-t}]
-\bar G(x_{k-t})
\right\|_c
\le
4a_kS_k.
\label{eq:G_mixing_bound}
\end{equation}

Taking conditional expectation in $T_{33}'$ and applying
H\"older's inequality,
\eqref{eq:T32prime_grad}, and
\eqref{eq:G_mixing_bound},
\begin{align*}
\E[T_{33}'\mid\mathcal F_{k-t}]
&\le
\frac{a_kr_k}{\ell_{cs}}
\|\nabla M(e_{k-t})\|_{s,*}\times
\left\|
\E[
G(x_{k-t},Y_k)
\mid
\mathcal F_{k-t}]
-\bar G(x_{k-t})
\right\|_c
\\
&\le
r_k
\frac{8L}{\theta\ell_{cs}^2}
a_k^2S_k^2 .
\end{align*}
Thus,
\begin{align}
\E[T_{33}']
\le{}&
r_k
\frac{32L\widehat A^2u_{cm}^2}
     {\theta\ell_{cs}^2}
a_k^2\E[M(e_k)]+
r_k
\frac{16L}
     {\theta\ell_{cs}^2}
a_k^2c_2.
\label{eq:T33prime_full}
\end{align}

Finally, since $t_k\ge1$ and $\{a_k\}$ is non-increasing,
\[
a_k
\le
a_{k-t_k,k-1}.
\]
Hence \eqref{eq:T33prime_full} can be bounded using
$a_k^2\le a_ka_{k-t_k,k-1}$. Summing
\eqref{eq:T31prime_full},
\eqref{eq:T32prime_full}, and
\eqref{eq:T33prime_full}, and using
\[
16+64+32=112,
\qquad
8+32+16=56,
\]
gives
\[
\E[T_3']
\le
r_k
\frac{112L\widehat A^2u_{cm}^2}
     {\theta\ell_{cs}^2}
a_ka_{k-t_k,k-1}\E[M(e_k)]
+
r_k
\frac{56L}
     {\theta\ell_{cs}^2}
a_ka_{k-t_k,k-1}c_2,
\]
as claimed.
\end{proof}

\begin{lemma}[Second-order Moreau remainder]
\label{lem:perturbed_T4}
Under the uniform increment bound
\[
\|F(x_k,Y_k)-x_k+r_kG(x_k,Y_k)\|_c
\le
A_0\|x_k\|_c+B_0,
\]
we have
\[
T_4
\le
\frac{2L\widehat A^2u_{cm}^2}
     {\theta\ell_{cs}^2}
a_ka_{k-t_k,k-1}M(e_k)
+
\frac{L}
     {\theta\ell_{cs}^2}
a_ka_{k-t_k,k-1}c_2.
\]
\end{lemma}

\begin{proof}
By norm equivalence,
\[
\begin{aligned}
T_4
&=
\frac{La_k^2}{2\theta}
\|F(x_k,Y_k)-x_k+r_kG(x_k,Y_k)\|_s^2
\\
&\le
\frac{La_k^2}{2\theta\ell_{cs}^2}
(A_0\|x_k\|_c+B_0)^2.
\end{aligned}
\]
Using $(a+b)^2\le2a^2+2b^2$,
$A_0\le\widehat A$, the norm equivalence
$
\|e_k\|_c^2\le2u_{cm}^2M(e_k),
$
and the definition of $c_2$ gives
\[
T_4
\le
\frac{2L\widehat A^2u_{cm}^2}
     {\theta\ell_{cs}^2}
a_k^2M(e_k)
+
\frac{L}
     {\theta\ell_{cs}^2}
a_k^2c_2.
\]
Finally,
$
a_k\le a_{k-t_k,k-1}
$
implies the stated result.
\end{proof}
\paragraph{One-step Lyapunov recursion.}
Combining
Lemmas~\ref{lem:perturbed_dominant_drift},
\ref{lem:perturbed_stationary_drift},
\ref{lem:perturbed_T3},
\ref{lem:perturbed_T3prime}, and
\ref{lem:perturbed_T4},
and using $r_k\le \bar r$,, gives
\begin{align}
\E[M(e_{k+1})]
\le&
\Bigg[
1-\frac{3\phi_2}{2}a_k
+
\frac{(114+112\bar r)L\widehat A^2u_{cm}^2}
     {\theta\ell_{cs}^2}
a_ka_{k-t_k,k-1}
\Bigg]
\E[M(e_k)]
\nonumber\\
&+
\frac{(57+56\bar r)L}
     {\theta\ell_{cs}^2}
a_ka_{k-t_k,k-1}c_2.
\label{eq:perturbed_one_step_preabsorb}
\end{align}
By~\eqref{eq:phitilde3}, this is
\begin{align}
\E[M(e_{k+1})]
\le&
\left[
1-\frac{3\phi_2}{2}a_k
+
\widetilde\phi_3\widehat A^2
a_ka_{k-t_k,k-1}
\right]
\E[M(e_k)]+
\frac{\widetilde\phi_3}
     {2u_{cm}^2}
a_ka_{k-t_k,k-1}c_2.
\label{eq:perturbed_one_step_compact}
\end{align}
Under~\eqref{eq:perturbed_window_condition},
$$
\widetilde\phi_3\widehat A^2
a_{k-t_k,k-1}
\le
\frac{\phi_2}{2},
$$
and hence
\begin{equation}
\E[M(e_{k+1})]
\le
(1-\phi_2a_k)\E[M(e_k)]
+
\frac{\widetilde\phi_3}
     {2u_{cm}^2}
a_ka_{k-t_k,k-1}c_2.
\label{eq:perturbed_scalar_recursion}
\end{equation}

\paragraph{Unrolling the recursion.}
Recursively applying~\eqref{eq:perturbed_scalar_recursion} from $K$ to
$k-1$ gives
\begin{align*}
\E[M(e_k)]
\le&
\E[M(e_K)]
\prod_{j=K}^{k-1}(1-\phi_2a_j)+
\frac{\widetilde\phi_3c_2}
     {2u_{cm}^2}
\sum_{i=K}^{k-1}
a_i a_{i-t_i,i-1}
\prod_{j=i+1}^{k-1}(1-\phi_2a_j).
\end{align*}
Since
$$
\|e_k\|_c^2
\le
2u_{cm}^2M(e_k),
\qquad
M(e_K)
\le
\frac{1}{2\ell_{cm}^2}\|e_K\|_c^2,
$$
we obtain
\begin{align}
\E\|e_k\|_c^2
\le&
\phi_1
\E\|e_K\|_c^2
\prod_{j=K}^{k-1}(1-\phi_2a_j)+
\widetilde\phi_3c_2
\sum_{i=K}^{k-1}
a_i a_{i-t_i,i-1}
\prod_{j=i+1}^{k-1}(1-\phi_2a_j).
\label{eq:perturbed_unrolled}
\end{align}

Finally, during the initial interval $k<K$, the affine growth bound with
constants $(A_0,B_0)$ and the proof of
\citet[Lemma~A.2]{chen2024lyapunov} give
$$
\|x_K-x^\star\|_c^2
\le
\left(
\|x_0-x^\star\|_c
+
\|x_0\|_c
+
\frac{B_0}{A_0}
\right)^2
=
c_1.
$$
Substituting this into~\eqref{eq:perturbed_unrolled} proves
\eqref{eq:perturbed_final_bound}.
\end{proof}

\begin{remark}[Interpretation]
\label{rem:perturbed_markov_sa_interpretation}
Relative to the analysis of \citet{chen2024lyapunov}, the perturbation
introduces exactly two new first-order terms. The stationary component
$T_1'$ consumes part of the negative drift and is controlled by the
uniform smallness condition on $\bar r$, while the centered Markovian
component $T_3'$ contributes another mixing-window error of the same order
$a_ka_{k-t_k,k-1}$ as the original Markovian noise. Consequently, a
sufficiently small state-dependent Markovian perturbation changes only the
constants, and not the finite-time order of the contractive Markovian-SA
bound.
\end{remark}

\subsection{Finite-time tracking under timescale separation}
\label{app:two_timescale_finite}

We now apply Theorem~\ref{thm:perturbed_markov_sa} to the two-timescale
\algo{} recursion. Recall the tracking error
\[
\Delta_t
:=
A_t-\lambda(V_t)
=
LV_t+A_t-Q^\star
=
Q_t-Q^\star.
\]
Define
\[
H(\Delta)
:=
\mathcal T(Q^\star+\Delta)-(Q^\star+\Delta).
\]
Since $\mathcal TQ^\star=Q^\star$ and $\mathcal T$ is a
$\gamma$-contraction,
\[
\|H(\Delta)-H(\Delta')\|_\infty
\le
(1+\gamma)\|\Delta-\Delta'\|_\infty,
\qquad
\|H(\Delta)\|_\infty
\le
(1+\gamma)\|\Delta\|_\infty.
\]

For $y=(s,a,s')$, let
\[
\delta_\Delta(\Delta,y)
:=
r(s,a)
+
\gamma\max_{a'}
\bigl[Q^\star(s',a')+\Delta(s',a')\bigr]
-
Q^\star(s,a)
-
\Delta(s,a).
\]
Define the fast and slow sample operators by
\[
H_A(\Delta,y)(s_0,a_0)
:=
\mathbf 1\{s_0=s\}
\frac{\mathbf 1\{a_0=a\}}{\pi_b(a|s)}
\delta_\Delta(\Delta,y),
\]
and
\[
H_V(\Delta,y)(s_0,a_0)
:=
\mathbf 1\{s_0=s\}\delta_\Delta(\Delta,y).
\]

Writing
$
r_t:=\alpha_t/\tau_t,
$
the tracking error satisfies the recursion
\begin{equation}
\Delta_{t+1}
=
\Delta_t
+
\tau_t
\left[
H_A(\Delta_t,Y_{t+1})
+
r_tH_V(\Delta_t,Y_{t+1})
\right].
\label{eq:app_two_tracking_exact}
\end{equation}

\paragraph{Reduction to perturbed Markovian SA.}
To match Theorem~\ref{thm:perturbed_markov_sa}, set
\[
F(\Delta,y)
:=
\Delta+H_A(\Delta,y),
\qquad
G(\Delta,y)
:=
H_V(\Delta,y).
\]
Under the invariant distribution of the Markov chain,
\[
\bar F(\Delta)
=
\Delta+DH(\Delta),
\qquad
\bar G(\Delta)
=
DBH(\Delta).
\]
Both operators vanish at the target $\Delta^\star=0$.

Let
\[
\kappa
:=
d_{\min}(1-\gamma).
\]
The dominant stationary operator satisfies
\[
\|\bar F(x)-\bar F(y)\|_\infty
\le
(1-\kappa)\|x-y\|_\infty.
\]
Hence the fixed contraction factor used throughout the Moreau analysis is
\[
\rho:=1-\kappa.
\]

For the perturbation,
\[
\|\bar G(x)-\bar G(y)\|_\infty
\le
L_G\|x-y\|_\infty,
\qquad
L_G:=d_{\max}(1+\gamma).
\]

\paragraph{Sample regularity.}
Let
\[
b_{\min}
:=
\min_{s,a}\pi_b(a|s),
\qquad
G_\star
:=
1+(1+\gamma)\|Q^\star\|_\infty,
\]
where we used $r(s,a)\in[0,1]$.

The fast sample operator satisfies
\[
\|H_A(x,y)-H_A(x',y)\|_\infty
\le
\frac{1+\gamma}{b_{\min}}
\|x-x'\|_\infty,
\]
and
\[
\|H_A(0,y)\|_\infty
\le
\frac{G_\star}{b_{\min}}.
\]
Likewise,
\[
\|H_V(x,y)-H_V(x',y)\|_\infty
\le
(1+\gamma)\|x-x'\|_\infty,
\qquad
\|H_V(0,y)\|_\infty
\le
G_\star.
\]

Therefore, in the notation of
Theorem~\ref{thm:perturbed_markov_sa},
\[
A_F
=
1+\frac{1+\gamma}{b_{\min}},
\qquad
B_F
=
\frac{G_\star}{b_{\min}},
\]
and
\[
A_G
=
1+\gamma,
\qquad
B_G
=
G_\star.
\]

Suppose
$
r_t\le r_{\max}
$
for all $t$. Define
\[
A_0
:=
A_F+1+r_{\max}A_G,
\qquad
B_0
:=
B_F+r_{\max}B_G,
\]
and
\[
\widehat A
:=
A_0+A_G+L_G,
\qquad
\widehat B
:=
B_0+B_G.
\]

\paragraph{Fixed Moreau envelope and timescale separation.}
Let
\[
d:=|\mathcal S||\mathcal A|,
\qquad
p:=2\log d,
\qquad
g(x):=\frac12\|x\|_p^2.
\]
We use the $\ell_\infty$ specialization of the generalized Moreau envelope
given in Corollary~\ref{cor:moreau_infty}. In particular, with
\[
\theta
:=
\left(\frac{1+\rho}{2\rho}\right)^2-1,
\]
we may choose
\[
\ell_{cs}=e^{-1/2},
\qquad
u_{cs}=1,
\]
and obtain
\[
\phi_1\le3,
\qquad
\phi_2\ge\frac{\kappa}{2},
\qquad
\phi_3
\le
\frac{456e\log d}{\kappa}.
\]

Let
\[
\ell_{cm}
:=
\sqrt{1+\theta\ell_{cs}^2},
\qquad
u_{cm}
:=
\sqrt{1+\theta u_{cs}^2}.
\]
By Lemma~\ref{lem:perturbed_stationary_drift}, the stationary perturbation
satisfies
\[
T_1'
\le
C_G\tau_tr_tM(\Delta_t),
\qquad
C_G
:=
2L_G\frac{u_{cm}}{\ell_{cm}}.
\]
Hence we impose the uniform timescale-separation condition
\begin{equation}
\frac{\alpha_t}{\tau_t}
=
r_t
\le
r_{\max}
\le
c_{\rm sep}
:=
\frac{\phi_2}{2C_G}
=
\frac{\phi_2\ell_{cm}}
     {4L_Gu_{cm}},
\qquad
\forall t.
\label{eq:app_two_sep}
\end{equation}
Under this condition, the slow value update consumes at most one quarter
of the dominant negative Moreau drift.

The Markovian-noise and second-order terms are controlled by the constant
\[
\widetilde\phi_3
:=
\frac{(114+112r_{\max})Lu_{cm}^2}
     {\theta\ell_{cs}^2}.
\]
Using Corollary~\ref{cor:moreau_infty},
\[
\widetilde\phi_3
\le
\frac{456e(1+r_{\max})\log d}{\kappa}.
\]

\paragraph{Mixing-window condition.}
Let
\[
t_t
:=
\max\{1,t_{\tau_t}\},
\qquad
\tau_{i,j}
:=
\sum_{\ell=i}^j\tau_\ell.
\]
The mixing-window condition required by
Theorem~\ref{thm:perturbed_markov_sa} is
\begin{equation}
\tau_{t-t_t,t-1}
\le
\chi_{\rm 2T}
:=
\min\left\{
\frac{\kappa^2}
{1824e(1+r_{\max})\widehat A^2\log d},
\frac1{4A_0}
\right\}.
\label{eq:app_two_window}
\end{equation}

Since the target is $\Delta^\star=0$, define
\[
c_1
:=
\left(
2\|\Delta_0\|_\infty+\frac{B_0}{A_0}
\right)^2,
\qquad
c_2
:=
\widehat B^2,
\]
and let
\[
K
:=
\min\{t:t\ge t_t\}.
\]

Applying Theorem~\ref{thm:perturbed_markov_sa} gives, for every $t\ge K$,
\begin{equation}
\begin{aligned}
\E\|Q_t-Q^\star\|_\infty^2
\le\;&
3c_1
\exp\left\{
-\frac{\kappa}{2}
\sum_{j=K}^{t-1}\tau_j
\right\}+
\frac{456e(1+r_{\max})c_2\log d}{\kappa}
\sum_{i=K}^{t-1}
\tau_i\tau_{i-t_i,i-1}
\exp\left\{
-\frac{\kappa}{2}
\sum_{j=i+1}^{t-1}\tau_j
\right\}.
\end{aligned}
\label{eq:app_two_finite_bound}
\end{equation}
Since
$
A_t-\lambda(V_t)=Q_t-Q^\star,
$
the same bound holds for
$
\E\|A_t-\lambda(V_t)\|_\infty^2.
$

\begin{theorem}[Finite-time tracking of two-timescale \algo{}]
\label{thm:app_two_finite}
Suppose the timescale-separation condition
\eqref{eq:app_two_sep} and the Markovian mixing-window condition
\eqref{eq:app_two_window} hold. Then
\eqref{eq:app_two_finite_bound} holds for every $t\ge K$.

In particular, the slow value update enters the finite-time recursion only
through the ratio
$
r_t=\alpha_t/\tau_t.
$
Once this ratio is uniformly small enough to be absorbed into the dominant
fast drift, it changes only the constants of the Markovian finite-time bound
and does not generate a separate moving-target error term.
\end{theorem}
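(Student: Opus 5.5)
The plan is to obtain the statement as a direct specialization of Theorem~\ref{thm:perturbed_markov_sa} to the tracking-error recursion \eqref{eq:app_two_tracking_exact}. I would take $x_t=\Delta_t$, $a_t=\tau_t$, $r_t=\alpha_t/\tau_t$, $F(\Delta,y)=\Delta+H_A(\Delta,y)$, $G=H_V$, the norm $\|\cdot\|_c=\|\cdot\|_\infty$, and the target $x^\star=0$. The first step is to check the three structural assumptions with the constants already computed above.

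\textbf{Assumption~\ref{ass:perturbed_F}.} The constants $A_F=1+(1+\gamma)/b_{\min}$ and $B_F=G_\star/b_{\min}$ follow from the sample regularity bounds. The averaged map $\bar F(\Delta)=\Delta+DH(\Delta)$ is a $(1-\kappa)$-contraction with fixed point $0$, by the coordinatewise convex-combination argument used for \eqref{eq:fast_contraction}. That argument needs $1-d(s)\ge0$, which holds automatically.

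\textbf{Assumption~\ref{ass:perturbed_G}.} This holds with $A_G=1+\gamma$, $B_G=G_\star$ and $L_G=d_{\max}(1+\gamma)$. The target-preservation condition $\bar G(0)=DBH(0)=0$ holds because $\mathcal TQ^\star=Q^\star$.

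\textbf{Assumption~\ref{ass:perturbed_mixing}.} This is geometric mixing of the finite ergodic chain $\{Y_t\}$ from Assumption~\ref{ass:behavior}. There is an indexing mismatch: \eqref{eq:perturbed_markov_sa} is written with $Y_k$ while our recursion uses $Y_{t+1}$. This is only a one-step shift of the chain and does not affect the mixing estimates.

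Next I would check the two quantitative hypotheses. The smallness condition \eqref{eq:perturbation_smallness} is exactly the imposed separation condition \eqref{eq:app_two_sep}, because $c_{\rm sep}=\phi_2/(2C_G)$ and $r_t\le r_{\max}\le c_{\rm sep}$.

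For the window condition \eqref{eq:perturbed_window_condition}, I would insert the $\ell_\infty$ Moreau bounds from Corollary~\ref{cor:moreau_infty}: $\phi_2\ge\kappa/2$ and $\widetilde\phi_3\le 456e(1+r_{\max})\log d/\kappa$. These give
\[
\frac{\phi_2}{2\widetilde\phi_3\widehat A^2}\ge\frac{\kappa^2}{1824e(1+r_{\max})\widehat A^2\log d}.
\]
Hence \eqref{eq:app_two_window} implies \eqref{eq:perturbed_window_condition}. The only care needed is to use the same $\theta$, built from $\rho=1-\kappa$, for both $\phi_2$ and $\widetilde\phi_3$. This is consistent because the envelope in Theorem~\ref{thm:perturbed_markov_sa} depends only on $\rho$ and not on $r_k$.

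Finally, I would read off \eqref{eq:perturbed_final_bound} and simplify it. Use $\phi_1\le3$, and specialize $c_1$ and $c_2$ to $x^\star=0$:
\[
c_1=\Big(2\|\Delta_0\|_\infty+\tfrac{B_0}{A_0}\Big)^2,\qquad c_2=\widehat B^2.
\]
Then apply $1-\phi_2\tau_j\le\exp(-\kappa\tau_j/2)$ to each product. This yields \eqref{eq:app_two_finite_bound}. Because $\Delta_t=Q_t-Q^\star=A_t-\lambda(V_t)$, the same bound controls the tracking error.

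The interpretive claim in the statement needs no extra work. The slow update enters only through $r_t$, and that dependence has already been absorbed into $\widetilde\phi_3$ and into the drift budget $3\phi_2/2\to\phi_2$. No moving-target term appears because $\Delta_t$ has an autonomous recursion; this is the key identity $LV+\lambda(V)=Q^\star$.

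I expect the main obstacle to be bookkeeping rather than any new estimate: verifying that every constant in Theorem~\ref{thm:perturbed_markov_sa} is instantiated consistently. The points to watch are the enlarged constants $\widehat A$ and $\widehat B$ entering both the window condition and $c_2$, and the requirement $t_t\ge1$ used to bound $\tau_t\le\tau_{t-t_t,t-1}$. I would double-check the conversion of $\widetilde\phi_3$ through $u_{cm}^2/(\theta\ell_{cs}^2)$ against Corollary~\ref{cor:moreau_infty}. That step is where a factor-of-$(1+r_{\max})$ or $e$ discrepancy would most likely arise.
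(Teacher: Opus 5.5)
Your proposal is correct and is essentially the paper's argument. The paper likewise instantiates Theorem~\ref{thm:perturbed_markov_sa} with $F(\Delta,y)=\Delta+H_A(\Delta,y)$, $G=H_V$, $a_t=\tau_t$, $r_t=\alpha_t/\tau_t$ and target $0$; it identifies \eqref{eq:app_two_sep} with \eqref{eq:perturbation_smallness}, shows via Corollary~\ref{cor:moreau_infty} and $\widetilde\phi_3\le(1+r_{\max})\phi_3$ that \eqref{eq:app_two_window} implies \eqref{eq:perturbed_window_condition}, and simplifies with $\phi_1\le 3$ and $1-\phi_2\tau_j\le e^{-\kappa\tau_j/2}$ (one hypothesis worth stating explicitly, which the paper also leaves implicit, is that $\{\tau_t\}$ is non-increasing, since Theorem~\ref{thm:perturbed_markov_sa} assumes it and it is what gives $\tau_t\le\tau_{t-t_t,t-1}$).
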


\paragraph{Polynomial gains.}
Consider
\[
\tau_t
=
\frac{\tau_0}{(t+h)^p},
\qquad
\alpha_t
=
\frac{\alpha_0}{(t+h)^q},
\qquad
\frac12<p<q\le1.
\]
Then
\[
r_t
=
\frac{\alpha_0}{\tau_0}
(t+h)^{-(q-p)}
\]
is non-increasing. Hence the separation condition
\eqref{eq:app_two_sep} is satisfied whenever
\[
\frac{\alpha_0}{\tau_0}
h^{-(q-p)}
\le
c_{\rm sep}.
\]

For sufficiently large $h$, the mixing-window condition
\eqref{eq:app_two_window} also holds. Moreover, for $t\ge K$,
\[
\tau_{t-t_t,t-1}
\le
2t_t\tau_t.
\]
Substituting this into
\eqref{eq:app_two_finite_bound} and applying the standard polynomial-gain
unrolling yields
\begin{equation}
\E\|Q_t-Q^\star\|_\infty^2
=
O\left(
\frac{t_t}{(t+h)^p}
\right).
\label{eq:app_two_poly_rate}
\end{equation}
Since the Markov chain is geometrically mixing,
$
t_t=O(\log(t+h)),
$
and therefore
\[
\E\|Q_t-Q^\star\|_\infty^2
=
O\left(
\frac{\log t}{t^p}
\right).
\]
The same rate holds for
$
\E\|A_t-\lambda(V_t)\|_\infty^2.
$

\begin{remark}[Why no $\alpha_t^2/\tau_t$ term appears] Standard two-timescale finite-time analysis involves a moving target term of order $\alpha_t^2/\tau_t$~\citep{chandak2025finite}. However our bound does not involve such a term. This is because the identity
$
A_t-\lambda(V_t)=Q_t-Q^\star
$
makes the tracking error autonomous. The slow value update therefore does
not move the target of the fast recursion. Instead, it appears as the
state-dependent Markovian perturbation
$
r_tG(\Delta_t,Y_{t+1}).
$
Once $r_t$ is small enough to be absorbed into the fixed fast Moreau drift,
it changes only the constants in the one-step recursion.
\end{remark}
\section{Useful Technical Lemmas}
\label{app:technical-lemma}
\begin{lemma}[Generalized Moreau-envelope properties]
\label{lem:moreau_technical}
Let
$$
f(x):=\frac12\|x\|_c^2,
\qquad
g(x):=\frac12\|x\|_s^2,
$$
where $g$ is $L$-smooth with respect to $\|\cdot\|_s$. Suppose the norms
$\|\cdot\|_c$ and $\|\cdot\|_s$ satisfy
$$
\ell_{cs}\|x\|_s
\le
\|x\|_c
\le
u_{cs}\|x\|_s
\qquad
\forall x\in\mathbb R^d,
$$
for some $\ell_{cs},u_{cs}>0$. For $\theta>0$, define the generalized
Moreau envelope
$$
\mathcal M_\theta(x)
:=
\min_{u\in\mathbb R^d}
\left\{
\frac12\|u\|_c^2
+
\frac1{2\theta}\|x-u\|_s^2
\right\}.
$$
Then the following hold.

\begin{enumerate}
    \item $\mathcal M_\theta$ is convex and $L/\theta$-smooth with
    respect to $\|\cdot\|_s$. In particular, for all $x,y$,
    $$
    \mathcal M_\theta(y)
    \le
    \mathcal M_\theta(x)
    +
    \langle\nabla\mathcal M_\theta(x),y-x\rangle
    +
    \frac{L}{2\theta}\|y-x\|_s^2.
    $$

    \item The envelope is equivalent to $f$:
    $$
    \left(1+\theta{l_{cs}^2}\right)
    \mathcal M_\theta(x)
    \le
    f(x)
    \le
    \left(1+\theta u_{cs}^2\right)
    \mathcal M_\theta(x).
    $$

    \item There exists a norm $\|\cdot\|_m$ such that
    $$
    \mathcal M_\theta(x)
    =
    \frac12\|x\|_m^2.
    $$
    Moreover, defining
    $$
    \ell_{cm}
    :=
    \sqrt{1+\theta\ell_{cs}^2},
    \qquad
    u_{cm}
    :=
    \sqrt{1+\theta u_{cs}^2},
    $$
    we have
    $$
    \ell_{cm}\|x\|_m
    \le
    \|x\|_c
    \le
    u_{cm}\|x\|_m.
    $$
    \item Let $\|\cdot\|_m^\ast$ denote the dual norm of
    $\|\cdot\|_m$. For every $x\neq0$ and every
    $v_x\in\partial\|x\|_m$,
    $$
    \|v_x\|_m^\ast\le1.
    $$
\end{enumerate}
\end{lemma}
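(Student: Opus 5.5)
The plan is to prove the four items in order: first the convex-analytic facts about the infimal convolution, then the two-sided comparison with $f$, and finally deduce the norm structure from homogeneity. Throughout, $\mathcal M_\theta$ is the infimal convolution $f\,\square\,(g/\theta)$, where $(g/\theta)(x):=\frac1{2\theta}\|x\|_s^2$.

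\emph{Item 1.} Convexity follows because $(x,u)\mapsto \frac12\|u\|_c^2+\frac1{2\theta}\|x-u\|_s^2$ is jointly convex, and partial minimization over $u$ preserves convexity. The minimum is attained by coercivity and continuity, so $\mathcal M_\theta$ is finite. For smoothness I would pass to conjugates: $\mathcal M_\theta^\ast=f^\ast+(g/\theta)^\ast=f^\ast+\theta g^\ast$. Since $g$ is $L$-smooth with respect to $\|\cdot\|_s$, $g^\ast$ is $1/L$-strongly convex with respect to the dual norm $\|\cdot\|_{s,\ast}$. Hence $\mathcal M_\theta^\ast$ is $\theta/L$-strongly convex, and by the smoothness/strong-convexity duality $\mathcal M_\theta$ is $L/\theta$-smooth with respect to $\|\cdot\|_s$. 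The displayed quadratic upper bound is the standard consequence of this smoothness. This duality step is the only nonelementary ingredient. The main obstacle I expect is justifying the duality for a general norm rather than a Euclidean one; I would cite the standard result, e.g.\ Beck's text or \citet{NEURIPS2020_5d44ee6f}.

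\emph{Item 2.} For the upper bound on $\mathcal M_\theta$ (equivalently $(1+\theta\ell_{cs}^2)\mathcal M_\theta\le f$), I would take the test point $u=\lambda x$ with $\lambda\in[0,1]$. Using $\|x\|_s\le\|x\|_c/\ell_{cs}$,
$$
\mathcal M_\theta(x)\le\Big[\lambda^2+\tfrac{(1-\lambda)^2}{\theta\ell_{cs}^2}\Big]f(x).
$$
Optimizing at $\lambda=1/(1+\theta\ell_{cs}^2)$ gives the factor $1/(1+\theta\ell_{cs}^2)$. For the other direction, fix any $u$. Then
$$
\|x\|_c\le\|u\|_c+u_{cs}\|x-u\|_s .
$$
Cauchy--Schwarz in the form $(a+b)^2\le(1+\theta u_{cs}^2)\big(a^2+b^2/(\theta u_{cs}^2)\big)$ then gives
$$
\tfrac12\|x\|_c^2\le(1+\theta u_{cs}^2)\Big[\tfrac12\|u\|_c^2+\tfrac1{2\theta}\|x-u\|_s^2\Big].
$$
Minimizing over $u$ yields $f\le(1+\theta u_{cs}^2)\mathcal M_\theta$.

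\emph{Item 3.} Substituting $u=tu'$ shows $\mathcal M_\theta(tx)=t^2\mathcal M_\theta(x)$ for all real $t$; the case $t<0$ uses evenness of both norms. Define $\|x\|_m:=\sqrt{2\mathcal M_\theta(x)}$. It is absolutely homogeneous, and it is positive for $x\neq0$ by item 2. For the triangle inequality, note that $\{x:\|x\|_m\le1\}$ is a sublevel set of the convex function $\mathcal M_\theta$, hence convex and symmetric, and $\|\cdot\|_m$ is its gauge. Concretely, for $x,y\neq0$ set $a=\|x\|_m$, $b=\|y\|_m$; convexity applied to $\frac{x+y}{a+b}=\frac{a}{a+b}\frac xa+\frac b{a+b}\frac yb$ gives $\|x+y\|_m\le a+b$. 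The constants $\ell_{cm},u_{cm}$ then follow by taking square roots in item 2.

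\emph{Item 4.} This is the standard fact for any norm. If $v_x\in\partial\|x\|_m$, then for all $y$ the subgradient inequality and the triangle inequality give
$$
\langle v_x,y\rangle\le\|x+y\|_m-\|x\|_m\le\|y\|_m .
$$
Taking the supremum over $\|y\|_m\le1$ gives $\|v_x\|_m^\ast\le1$.
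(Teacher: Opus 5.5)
Your proof is correct. It differs from the paper mainly in being self-contained.

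The paper proves almost nothing directly. It imports items 1--3 from \citet[Lemma~2.1 and Appendix~A.1]{NEURIPS2020_5d44ee6f}, with their smoothing parameter renamed $\theta$; their penalty is the same $\frac{1}{2\theta}\|x-u\|_s^2$, so the translation is literal. It obtains item 4 from their Lemma~A.1: subgradients of a convex function that is $1$-Lipschitz in $\|\cdot\|_m$ have $\|\cdot\|_m^\ast$-norm at most one. Your item 4 is that same fact, with the Lipschitz property written out as the triangle inequality, so on item 4 the two routes coincide.

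For items 1--3 you actually derive the claims:
\begin{itemize}
\item the factor $1+\theta\ell_{cs}^2$ comes from the radial test point $u=\lambda x$ with $\lambda=1/(1+\theta\ell_{cs}^2)$;
\item the factor $1+\theta u_{cs}^2$ comes from the weighted Cauchy--Schwarz splitting;
\item the norm $\|\cdot\|_m:=\sqrt{2\mathcal M_\theta}$ comes from $2$-homogeneity plus convexity of its unit sublevel set.
\end{itemize}
Your only imported ingredient is the conjugate correspondence between $L$-smoothness and $1/L$-strong convexity. That holds for arbitrary norms in finite dimensions, so the obstacle you anticipated is not a real one.

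What your route buys is transparency. It shows where each constant in $\ell_{cm},u_{cm}$ comes from, and it makes the dependency order explicit: item 3 uses convexity from item 1 and positivity from item 2. The paper's citation is shorter, but it leaves the reader to check the source's hypotheses and parameter correspondence.

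One small tightening. Your step $(g/\theta)^\ast=\theta g^\ast$ silently uses that $g^\ast=\frac12\|\cdot\|_{s,\ast}^2$ is $2$-homogeneous. You can avoid this by noting directly that $g/\theta$ is $L/\theta$-smooth. Its conjugate is then $\theta/L$-strongly convex, and so is $\mathcal M_\theta^\ast=f^\ast+(g/\theta)^\ast$. Since $\mathcal M_\theta$ is finite and convex, it is closed, so $\mathcal M_\theta=\mathcal M_\theta^{\ast\ast}$ is $L/\theta$-smooth.
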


\begin{proof}
Items 1-3 are the generalized Moreau-envelope properties of
\citet[Lemma~2.1 and Appendix~A.1]{NEURIPS2020_5d44ee6f}, rewritten using
$\theta$ as the smoothing parameter.

For the final claim, the map $x\mapsto\|x\|_m$ is convex and
$1$-Lipschitz with respect to $\|\cdot\|_m$. Hence, by
\citet[Lemma~A.1]{NEURIPS2020_5d44ee6f}, every
$v_x\in\partial\|x\|_m$ satisfies
$$
\|v_x\|_m^\ast\le1.
$$
\end{proof}

We restate \citep[Lemma 2.1(2)]{chen2024lyapunov} for convenience.

\begin{corollary}[$\ell_\infty$ specialization]
\label{cor:moreau_infty}
Let $\|\cdot\|_c=\|\cdot\|_\infty$,
$\|\cdot\|_s=\|\cdot\|_p$ with
$p=2\log d$. Then one may take
$
L=p-1\le2\log d,
$
$
\ell_{cs}=e^{-1/2},
$
and
$
u_{cs}=1.
$
For
$$
\theta
:=
\left(\frac{1+\rho}{2\rho}\right)^2-1,
$$
the constants above satisfy
$$
\phi_1\le3,
\qquad
\phi_2\ge\frac{1-\rho}{2},
\qquad
\phi_3
\le
\frac{456e\log d}{1-\rho}.
$$
\end{corollary}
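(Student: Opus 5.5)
The statement is Corollary~\ref{cor:moreau_infty}: with $\|\cdot\|_c=\|\cdot\|_\infty$, $\|\cdot\|_s=\|\cdot\|_p$ and $p=2\log d$, we need the constants $L,\ell_{cs},u_{cs}$ and the three bounds on $\phi_1,\phi_2,\phi_3$. The plan is to obtain the norm-level facts first, then substitute the prescribed $\theta$ into the definitions \eqref{eq:app_phi_constants} and bound each $\phi_i$ with elementary inequalities in $\rho\in(0,1)$.

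\textbf{Norm-level facts.} For $u_{cs}$ and $\ell_{cs}$:
\begin{itemize}
\item Since $\|x\|_\infty\le\|x\|_p$ for every $p\ge1$, we may take $u_{cs}=1$.
\item Since $\|x\|_p\le d^{1/p}\|x\|_\infty$ and $d^{1/(2\log d)}=e^{1/2}$, we may take $\ell_{cs}=e^{-1/2}$.
\end{itemize}
For smoothness, $\frac12\|x\|_p^2$ is $(p-1)$-smooth with respect to $\|\cdot\|_p$ for $p\ge2$. This is the standard fact from Beck--Teboulle and Nesterov that is already used in \citet{NEURIPS2020_5d44ee6f}, and it holds once $d\ge3$ so that $p\ge2$. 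Hence $L=p-1\le2\log d$.

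\textbf{The constants $\phi_1,\phi_2,\phi_3$.} Set $\theta=((1+\rho)/(2\rho))^2-1$, so that $1+\theta=((1+\rho)/(2\rho))^2$.
\begin{itemize}
\item \emph{Bound on $\phi_1$.} We have $\phi_1=(1+\theta)/(1+\theta/e)\le e<3$.
\item \emph{Bound on $\phi_2$.} We use the sharper estimate $\phi_1\le1+\theta$. Then $\rho\sqrt{\phi_1}\le\rho(1+\rho)/(2\rho)=(1+\rho)/2$, which gives $\phi_2\ge(1-\rho)/2$.
\item \emph{Bound on $\phi_3$.} We have $\phi_3=114L(1+\theta)e/\theta$. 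Write $\theta=(1-\rho)(1+3\rho)/(4\rho^2)$, so that $(1+\theta)/\theta=(1+\rho)^2/((1-\rho)(1+3\rho))\le 2/(1-\rho)$. Combining with $L\le 2\log d$ gives $\phi_3\le 114\cdot2\log d\cdot e\cdot 2/(1-\rho)=456e\log d/(1-\rho)$.
\end{itemize}

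\textbf{Main obstacle.} The only delicate step is the $\phi_2$ bound. The crude estimate $\phi_1\le e$ is too weak there, because it would make $\rho\sqrt{\phi_1}$ exceed $1$ when $\rho$ is near $1$. The argument must instead use $\phi_1\le 1+\theta$, which follows from $\ell_{cs}^2\theta\ge0$. Everything else is the direct citation to \citet[Lemma 2.1]{chen2024lyapunov}, checked arithmetically as above.
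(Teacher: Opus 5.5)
Your argument is correct and takes the same route as the paper, which gives no proof of its own and simply restates \citet[Lemma~2.1(2)]{chen2024lyapunov}; your computation (norm equivalences via $d^{1/p}=e^{1/2}$, $(p-1)$-smoothness of $\frac12\|x\|_p^2$ once $d\ge3$, $\phi_1\le e$, the sharper $\phi_1\le 1+\theta$ for $\phi_2$, and $(1+\theta)/\theta\le 2/(1-\rho)$ for $\phi_3$) is exactly the elementary verification behind that cited lemma. The $\phi_3$ step even has slack: $(1+\rho)^2\le 1+3\rho$ for $\rho\in[0,1]$ gives $(1+\theta)/\theta\le 1/(1-\rho)$.
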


\section{Experimental Details}
\label{app:experimental-details}
\subsection{Discrete MDP}
\label{app:experiments-mdp}

We provide further experiments of \algo and it's variants in various settings in Figure~\ref{fig:tabular-app}. Figure~\ref{fig:tabular-app}(b)
reveals an interesting separation between the roles of $\beta$ and the
timescale ratio $r:=\alpha_V/\alpha_A$. Empirically, $\beta$ largely determines
the terminal noise floor, whereas $r$ primarily controls how rapidly the
algorithm approaches that floor. This suggests a potentially useful
two-timescale design principle: the advantage stepsize may determine the
accuracy ultimately reached, while the relative value stepsize controls the
transient speed. A natural direction for future work is to determine whether
this separation can be formalized into sharper sample-complexity guarantees,
with $\alpha_A$ (or $\beta$) controlling the attainable $\epsilon$-accuracy and $\alpha_V$
controlling how quickly that accuracy is reached.
\begin{figure*}[t]
    \centering
    \begin{subfigure}[t]{0.48\textwidth}
        \centering
        \includegraphics[width=\linewidth]{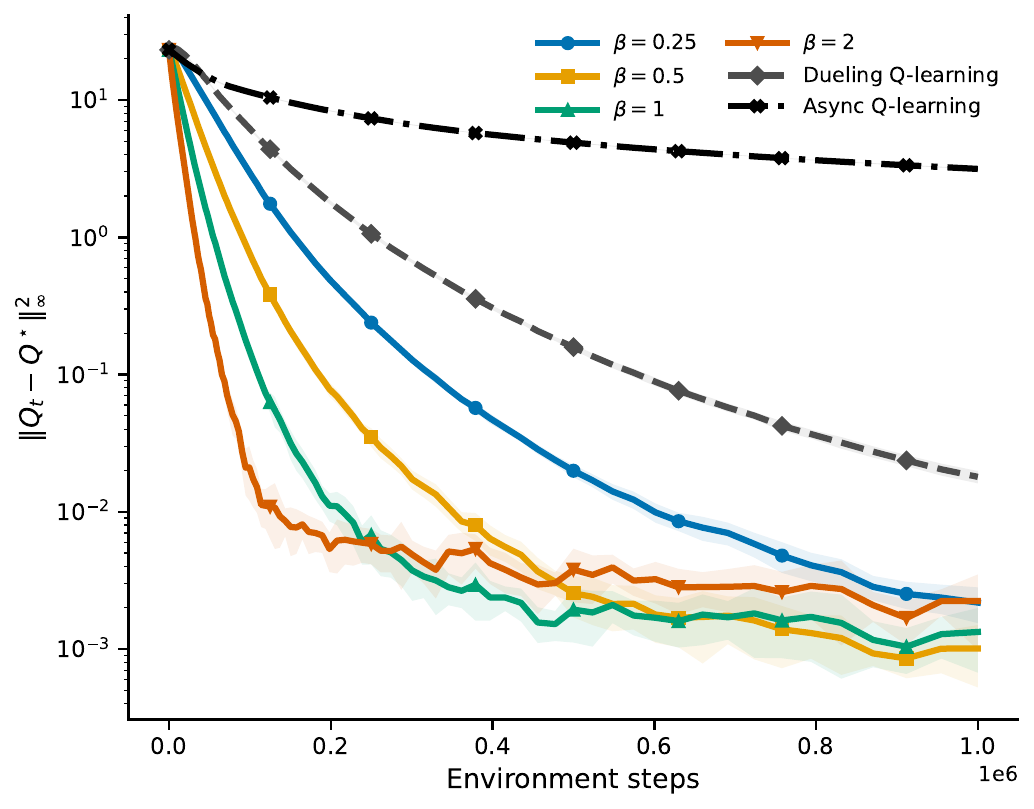}
        \caption{Inverse linear  stepsize.}
        \label{fig:tabular-inverse}
    \end{subfigure}
    \hfill
    \begin{subfigure}[t]{0.48\textwidth}
        \centering
        \includegraphics[width=\linewidth]{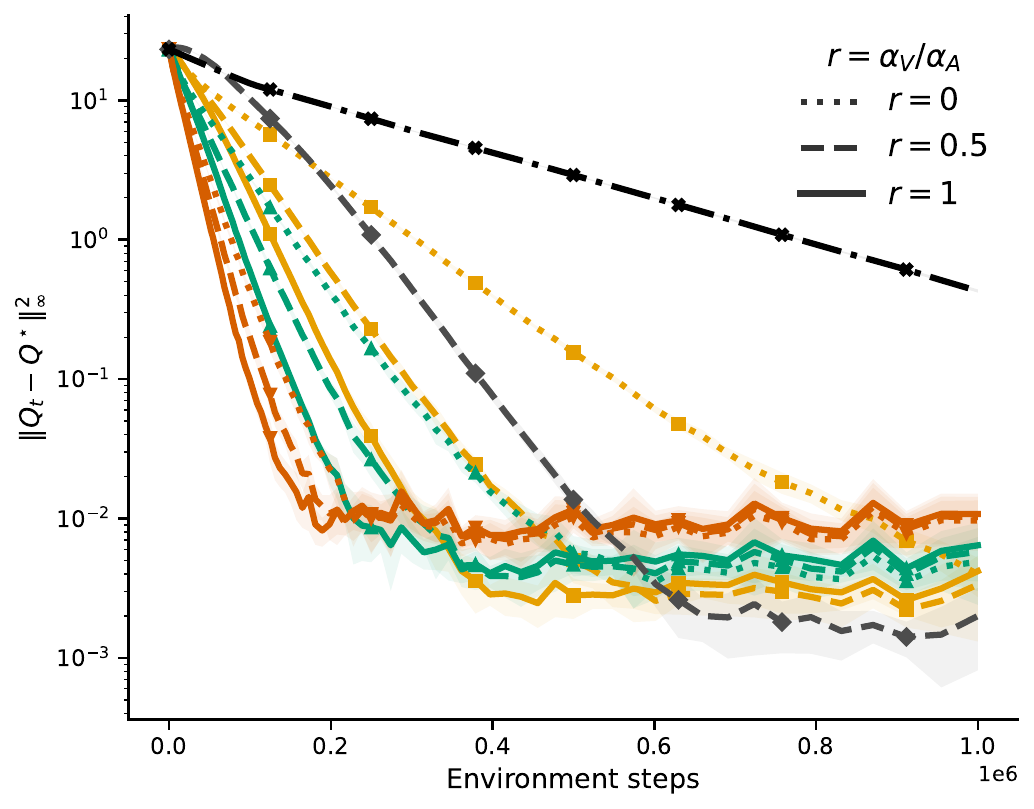}
        \caption{Two-timescale \algo}
        \label{fig:tabular-two-timescale}
    \end{subfigure}
    \caption{
        \textbf{Tabular experiments on randomly generated MDPs.}
        (a): performance of single-timescale \algo (Algorithm \ref{alg:rpq_single}) for different residual
        weights $\beta$ under inverse linear stepsizes, compared
        with asynchronous Q-learning and Dueling Q-learning.
        (b): comparison of two-timescale \algo on with varying stepsize ratios and $\beta$ values using constant stepsizes.
        Curves show the mean squared sup-norm error
        $\mathbb{E}\|Q_t-Q^\star\|_\infty^2$ over independent trajectories;
        shaded regions denote pointwise 95\% confidence intervals. The y-axis is in the logarithmic scale in all plots.
    }
    \label{fig:tabular-app}
\end{figure*}

\subsection{Deep RL Experiments}
Unlike a conventional dueling architecture with a shared feature extractor, our implementation uses separate neural networks for the value and advantage components. This design allows their learning rates to be varied independently, as motivated by the two-timescale analysis in Section~\ref{sec:two_timescale}.

\paragraph{Classic Control.}
For Classic Control, each component is represented by a multilayer perceptron (MLP) based on the CleanRL DQN implementation \citep{huang2022cleanrl}, with hidden dimensions 120 and 84. Each hidden linear layer is followed by LayerNorm and a ReLU activation. The value network terminates in a scalar output \(V(s)\), whereas the advantage network outputs an \(|\mathcal{A}|\)-dimensional vector \(A(s,\cdot)\).

The training horizon is environment dependent: $200,000$ environment steps for Acrobot-v1 and CartPole-v1,$ 500,000$ steps for MountainCar-v0, and \(10^6\) steps for LunarLander-v2. Learning starts after $1,000$ initial random-action steps. We use Adam with a base learning rate of \(2.5\times10^{-4}\), discount factor \(\gamma=0.99\), a replay-buffer capacity of \(10^4\), and minibatches of $128$ transitions, with one optimization step every $10$ environment steps.
The exploration rate is linearly annealed from \(1\) to \(0.05\) over the first \(50\%\) of training.
Unless otherwise specified, we use \(\beta=1\), separate value and advantage networks, and a target network. For the learning-rate-ratio experiments, the value network uses the base learning rate, while the advantage-network learning rate is multiplied by \(0.25\), \(0.5\), \(1\), \(2\), or \(4\).
For the \(\beta\)-scheduling variant, \(\beta\) is linearly decreased from \(100\) to \(1\) over the first \(10\%\) of training and remains fixed at \(1\) thereafter.

We evaluate greedily every \(10\%\) of the corresponding training horizon using five fixed evaluation seeds, \(0,1,2,3,4\), with one episode per evaluation seed. Results are aggregated over five independent training seeds, \(1,\ldots,5\).

\paragraph{MinAtar.}
For MinAtar, each network consists of a convolutional layer with $16$ $(3\times3)$ filters and stride $1$, followed by a ReLU activation, a fully connected layer with 128 hidden units, and another ReLU activation. The advantage network terminates in a linear layer with $(|\mathcal{A}|)$ outputs, whereas the value network terminates in a linear layer with a single scalar output.

Experience is stored in a replay buffer with capacity \(10^6\) transitions. Learning begins after the first 80,000 environment steps, after which the agent samples a minibatch of 32 transitions from the replay buffer every four environment steps. Both the value and advantage networks are optimized independently using Adam with a base learning rate of \(10^{-4}\). During training, actions are selected using an \(\epsilon\)-greedy policy. Exploration begins with \(\epsilon=1\) and is linearly annealed to \(0.01\) over the first $10\%$ of training, corresponding to the first one million environment steps for a 10-million-step run. Thereafter, \(\epsilon\) remains fixed at \(0.01\). 

We periodically evaluate the learned policy every 200,000 environment steps. We evaluate greedily using five fixed evaluation seeds, \(0,1,2,3,4\), running one episode per seed at each evaluation point. We use no exploratory actions during evaluation.

For both Classic Control and MinAtar, the two components are combined as
$$
Q(s,a;\theta_V,\theta_A)
=
V(s;\theta_V)+A(s,a;\theta_A).
$$
For a sampled transition
\((S,A,R,S',\pi_b(A\mid S))\), the bootstrap target is:
\[
Y
=
R+\gamma
\left[
V(S')+\max_{a'}A(S',a')
\right].
\]
The corresponding temporal-difference (TD) residual is
\[
\delta
=
Y-\left[V(S)+A(S,A)\right],
\]
and the TD objective is given by
\[
L_{\mathrm{TD}}
=
\mathbb{E}
\left[
\left(
Y-V(S)-A(S,A)
\right)^2
\right].
\]
Gradients from \(L_{\mathrm{TD}}\) are not propagated
to \(\theta_A\), since the advantage network is updated separately using the preconditioned update rule.

In the deep RL setting, individual updates cannot be directly controlled, and the inverse behavior probability \(1/\pi_b(A\mid S)\) can become large for infrequently selected actions. Such large updates may adversely affect training stability. 
We set \(\beta=1\) by default. For Classic Control, we use the residual-preconditioning weight $ w = \frac{1}{\pi_b(A\mid S)}$ as large inverse behavior probabilities were not a practical issue in these environments.
For MinAtar, we clip the inverse behavior probability by
defining
\[
\rho(S,A)
=
\min\left\{
\frac{1}{\pi_b(A\mid S)},
\rho_{\max}
\right\},
\]
where \(\rho_{\max}=5\). The residual-preconditioning weight is then defined as
\[
w=\beta\rho(S,A) = \rho(S,A).
\]
Using this weight, we construct the auxiliary
advantage target
\[
\hat{A}(S,A)
=
A(S,A)+w\delta.
\]
The corresponding advantage loss is
\[
L_{\mathrm{adv}}
=
\frac{1}{2}
\mathbb{E}
\left[
\left(
A(S,A)-\hat{A}(S,A)
\right)^2
\right].
\]

In addition to the TD and advantage objectives, we apply an explicit
regularization penalty to both components of the action-value decomposition.
This regularization is introduced to address the identifiability issue that
arises in advantage--value decomposition methods
\citep{daley2025analysis,wang2016dueling}. Specifically, we use
\[
L_{\mathrm{reg}}
=
\frac{\lambda}{2}
\mathbb{E}
\left[
V(S)^2+
\sum_{a\in\mathcal{A}}A(S,a)^2
\right],
\]
where \(\lambda=5\times10^{-3}\), we did not optimize for the best \(\lambda\).

The complete training objective is therefore
\[
L
=
L_{\mathrm{TD}}
+
L_{\mathrm{adv}}
+
L_{\mathrm{reg}}.
\]
Importantly, although these terms are written as a single objective, gradients
from \(L_{\mathrm{TD}}\) are stopped with respect to the advantage-network
parameters \(\theta_A\); consequently, \(\theta_A\) is updated through
\(L_{\mathrm{adv}}\) and \(L_{\mathrm{reg}}\), while the TD loss directly
updates the value-network parameters.

\begin{figure}[H]
    \centering
    \includegraphics[width=\linewidth]{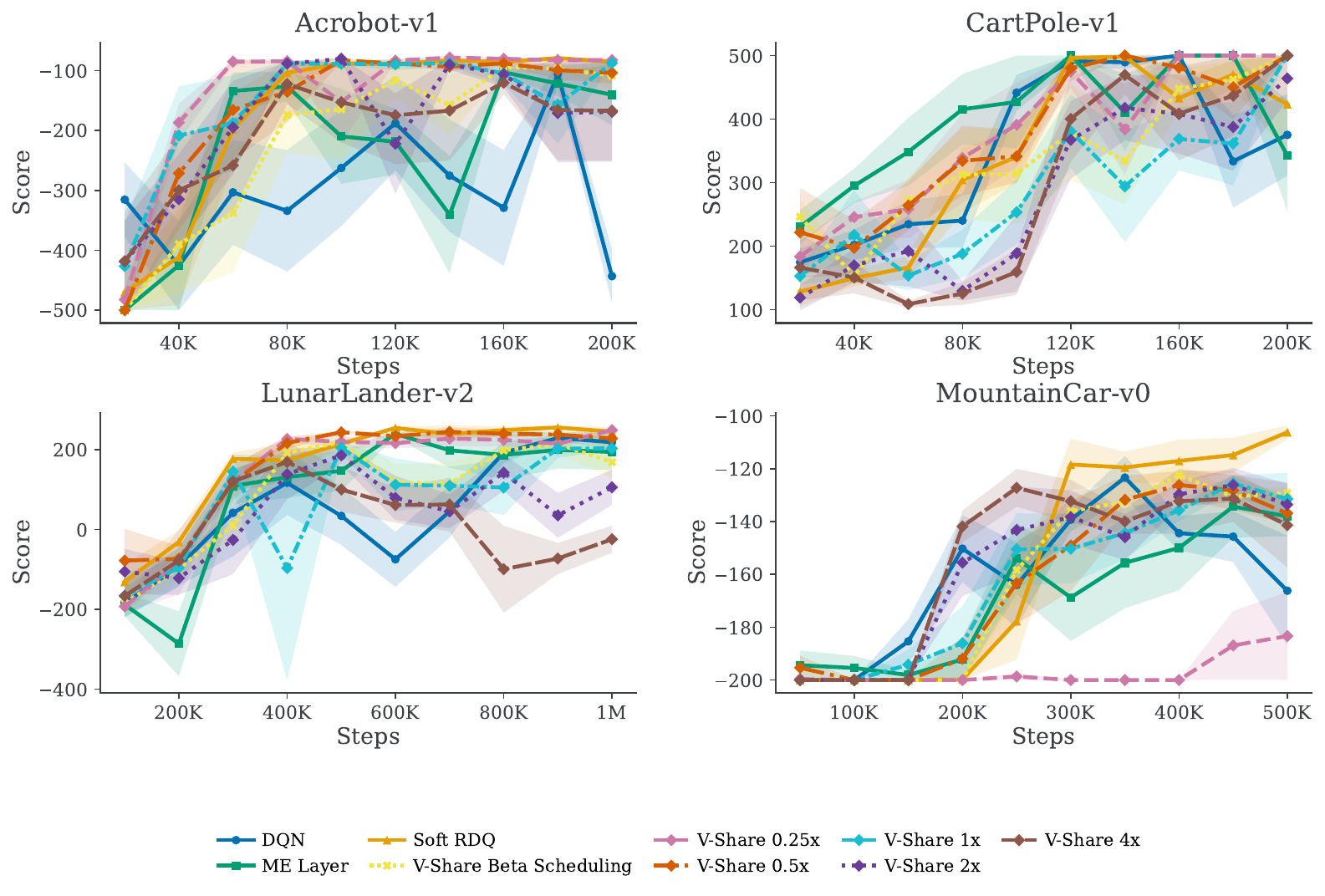}
    \caption{Learning curves for DQN, ME Layer, Soft RDQ, and \algo across four classic-control environments. We evaluate \algo with different advantage learning-rate multipliers and with decreasing-$\beta$ scheduling to study the effects of timescale separation and residual-preconditioning schedules in the deep RL setting. Curves show the mean evaluation return over five independent training seeds, with shaded regions indicating $\pm 1$ SEM.}
    \label{fig:classic RL learning curve}
\end{figure}

\begin{figure}[H]
    \centering
    \includegraphics[width=\linewidth]{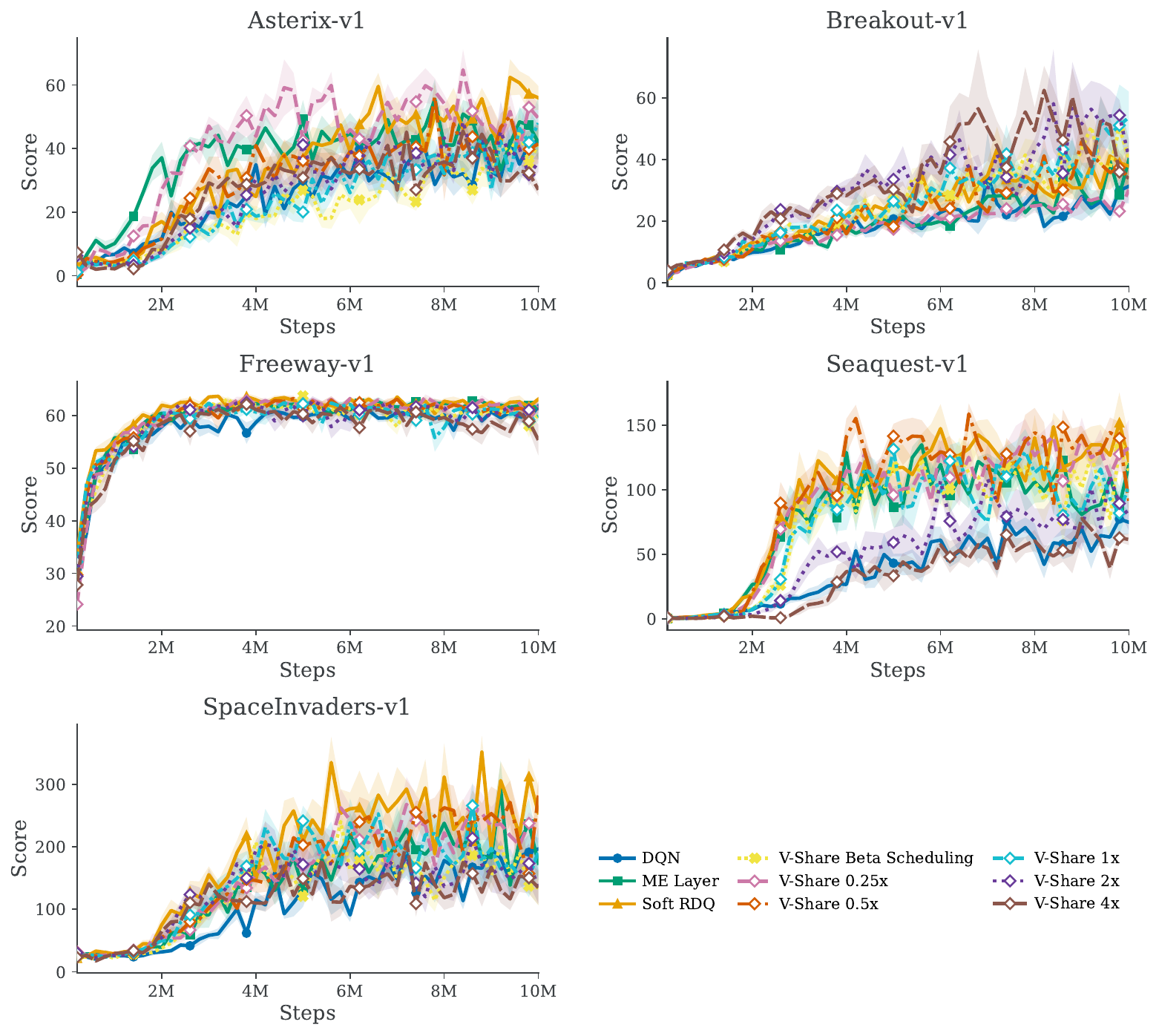}
    \caption{Learning curves for DQN, ME Layer, Soft RDQ, and \algo across five MinAtar environments. We evaluate \algo with different advantage learning-rate multipliers and with decreasing-$\beta$ scheduling to study the effects of timescale separation and residual-preconditioning schedules in the deep RL setting. Curves show the mean evaluation return over six independent training seeds, with shaded regions indicating $\pm 1$ SEM.}
    \label{fig:deep rl learning curve}
\end{figure}

\end{document}